\newtheorem{assumption}{Assumption}
\newtheorem{definition}{Definition}
\newtheorem{proposition}{Proposition}
\newtheorem{theorem}{Theorem}
\newtheorem{corollary}{Corollary}
\newenvironment{remark}[1][Remark]{\begin{trivlist}
\item[\hskip \labelsep {\bfseries #1}]}{\end{trivlist}}
\title{An Efficient Decomposition Framework for Discriminative Segmentation with Supermodular Losses
}
\author[1]{Jiaqian~Yu\thanks{jiaqian.yu@centralesupelec.fr}}
\author[2]{Matthew~B.~Blaschko\thanks{matthew.blaschko@esat.kuleuven.be}}
\affil[1]{{\small Center for Visual Computing, CentraleSup\'{e}lec \& Inria, Universit\'{e} Paris-Saclay}}
\affil[2]{{\small Center for Processing Speech and Images, Dept.\ Elektrotechniek, KU Leuven}}
\date{}
\begin{document}
\maketitle

\begin{abstract}
Several supermodular losses have been shown to improve the perceptual quality of image segmentation in a discriminative framework such as a structured output support vector machine (SVM).  These loss functions do not necessarily have the same structure as the one used by the segmentation inference algorithm, and in general, we may have to resort to generic submodular minimization algorithms for loss augmented inference.  Although these come with polynomial time guarantees~\cite{doi:10.1137/S0097539701397813,fujishige1980,fujishige2005submodular}, 
they are not practical to apply to image scale data.  Many supermodular losses come with strong optimization guarantees, but are not readily incorporated in a loss augmented graph cuts procedure.  This motivates our strategy of employing the alternating direction method of multipliers (ADMM) decomposition for loss augmented inference.  In doing so, we create a new API for the structured SVM that separates the maximum a posteriori (MAP) inference of the model from the loss augmentation during training.  In this way, we gain computational efficiency, making new choices of loss functions practical for the first time, while simultaneously making the inference algorithm employed during training closer to the test time procedure. 
   We show improvement both in accuracy and computational performance on the Microsoft Research Grabcut database and a brain structure segmentation task, empirically validating the use of several supermodular loss functions during training, and the improved computational properties of the proposed ADMM approach over the Fujishige-Wolfe minimum norm point algorithm.

\end{abstract}

\section{Introduction}

Discriminative structured prediction is a valuable tool in computer vision that has been applied to a wide range of application areas, and in particular object detection and segmentation \cite{anguelov2005discriminative,blaschko2008learning,nowozin2011structured,osokin2014perceptually,pletscher2012learning,szummer2008learning}.  It is frequently applied using variants of the structured output support vector machine (SVM) \cite{taskar03max,tsochantaridis2005large} in which a domain specific discrete loss function is upper bounded by a piecewise linear surrogate.  In the case of image segmentation, this discrete loss function has frequently been taken to be the Hamming loss, which simply counts the number of incorrect pixels \cite{anguelov2005discriminative,szummer2008learning}.  Following the principle of empirical risk minimization, one might expect that minimization of the desired loss at training time would lead to the best performing loss at test time.  However, it has recently been shown that in the finite sample regime, minimizing a different loss can lead to better performance even when measured using Hamming loss \cite{osokin2014perceptually}.  In that work, a supermodular loss function was employed, and a custom graph cuts solution was found to the loss augmented inference problem necessary for computation of a subgradient or cutting plane of the learning objective \cite{joachims2009cutting}.

Several non-modular loss functions have been considered in the context of image segmentation, e.g.\ the intersection over union loss in the context of a Bayesian framework \cite{nowozin2014optimal}, an area/volume based label-count loss that enforces high-order statistics \cite{pletscher2012learning}, or a layout-aware loss function that takes into account the topology/structure of the object \cite{osokin2014perceptually}.
A message passing based optimization scheme is proposed for optimizing several families of structured loss functions \cite{tarlow2010hop,tarlow2012structured}, which assumes the loss function is constructed by a grammar for which the productions specify function composition \cite{tarlow2010hop}. 
By contrast, we provide a generic framework for decomposing the loss function from model inference that assumes a custom solver for the loss, but that does not assume the loss belongs to a specific compositional grammar.
We concern ourselves primarily with supermodular loss functions in this work as they lead to provable polynomial time loss augmented inference problems (an essential step in training structured output SVMs), while non-supermodular loss functions lead to NP-hard optimization in general.

In general, it is a time consuming process to develop custom loss-augmented solvers for different combinations of loss functions and inference procedures.  We show in this work a direct combination of two submodular graph cuts procedures may in fact lead to a non-submodular minimization problem, and reparametrizations or novel graph constructions may be necessary.  Furthermore, if we attempt to solve a non-submodular minimization problem approximately, this may lead to poor convergence of the learning procedure and catastrophic failure of the learning algorithm \cite{finley2008training}.

An alternative approach is to resort to generic submodular optimization algorithms, such as that of Iwata \cite{doi:10.1137/S0097539701397813} which has complexity $\mathcal{O}(n^4 T + n^5 \log M)$, or
Orlin \cite{orlin2009faster} with complexity $\mathcal{O}(n^6 + n^5 T)$, where $T$ is the time for a single function evaluation and $M$ is an upper bound on the absolute value of the function.  Although these optimization algorithms are polynomial, the exponent is sufficiently large as to render them infeasible for images of even less than one megapixel.  In practice, the Fujishige-Wolfe
minimum norm algorithm \cite{fujishige1980,fujishige2005submodular} is empirically faster 
\cite{chakrabarty2014provable}.  However, we will show that even this state of the art  optimization strategy is infeasible for relatively small consumer images.

Specific subclasses of submodular functions come with lower complexity optimization algorithms, and we should be able to exploit these known classes in a general learning framework.  Examples include decomposable submodular functions \cite{stobbe2010efficient,nishihara2014convergence}, several notions of symmetry \cite{kolmogorov2012minimizing,queyranne1998minimizing}, and graph partition problems \cite{kolmogorov2004energy,charpiat:graphcut}.  A problem with the current API for loss augmented inference is that it is assumed that the loss function will decompose with a structure compatible to that of the inference problem.  We address the case that this assumption does not hold and that separate efficient optimization procedures are available for the loss and for inference.

We propose to use Lagrangian splitting techniques to separate loss maximization from the inference problem.  Strategies such as dual decomposition have become popular in Markov Random Field (MRF)  inference \cite{komodakis2007mrf}, while later developments such as the alternating direction method of multipliers (ADMM) \cite{bertsekas1999nonlinear,boyd2011distributed} have improved convergence guarantees. Other strategies involving a quadratic penalty term have also been proposed in the literature, although still with the assumption that the loss decomposes as the inference \cite{meshi2015efficient}. We make use of ADMM to separate these inference problems and apply them to several supermodular loss functions that cannot be straightforwardly incorporated in a submodular graph partition problem for loss augmented inference.  Instead we allow separate optimization strategies for the loss maximization and inference procedures yielding substantially improved computational performance, while making feasible the application of a wide range of supermodular loss functions by changing a single line of code.

This article is an extended version of \cite{Yu2016b} with additional theoretical contributions and experimental results. 
 Specifically, we have added:
 \begin{enumerate}
\item Section~\ref{sec:square}: a supermodular loss function, the square loss function;
\item Section~\ref{sec:Biconvex}: a new section on supermodular loss functions through biconvexity, with the definition of biconvexity, a proposition that biconvexity characterizes supermodularity for an important class of loss functions, and its proof; 
\item Section~\ref{sec:biconvexloss}: a novel supermodular loss function from biconvexity;
\item Section~\ref{sec:admmconverge}: a new section on ADMM convergence theorems;
\item Section~\ref{sec:optimization}: a new section on the optimization algorithm related to the novel supermodular loss functions in Sections~\ref{sec:square} and~\ref{sec:biconvexloss};
\item Tables~\ref{tab:loss} and~\ref{tab:pvalueErr}: additional results with more parameter values;
\item Section~\ref{sec:experimentNew}: new experimental results with the supermodular loss functions introduced in Sections~\ref{sec:square} and~\ref{sec:biconvexloss};
\item Figures~\ref{fig:segmentation2} to \ref{fig:segDiffBiconvex}: additional qualitative segmentation results.
 \end{enumerate}

\section{Methods}\label{sec:method}
We discriminatively train a graph cuts based segmentation system using a structured SVM \cite{tsochantaridis2005large}.  We first construct a supermodular loss function that is solvable with graph cuts, but that when incorporated in a joint loss-augmented inference leads to non-submodular potentials, which causes graph cuts based optimization to fail.  We therefore use an ADMM based decomposition strategy to perform loss augmented inference.  This strategy consists of alternatingly optimizing the loss function and performing maximum
\emph{a posteriori} (MAP) inference, with each process augmented by a quadratic term enforcing the labeling determined by each to converge to the optimum of the sum.

The structured output SVM is a discriminative learning framework that has been applied in diverse computer vision applications \cite{anguelov2005discriminative,blaschko2008learning,nowozin2011structured,osokin2014perceptually,pletscher2012learning,szummer2008learning}.
Given a training set of labeled images $\{(x_1,y^*_1), \dots ,$ $(x_n,y^*_n) \}$ $\in \left(\mathcal{X} \times \mathcal{Y}\right)^{n}$, where $\mathcal{Y}=\{-1,1\}^{p}$ for a binary segmentation problem with $p$ pixels, it optimizes a regularized convex upper bound to a structured loss function, $\Delta : \mathcal{Y} \times \mathcal{Y} \to \mathbb{R}_{+}$. $\Delta$ measures the mismatch between a ground truth labeling and a hypothesized labeling.  
With $\Delta$ provided as an input, the structured SVM with margin rescaling minimizes \cite{tsochantaridis2005large}:
\begin{align}
\min_{w,\xi} \ \  \frac{1}{2} \|w\|^{2} +  C \sum_{i=1}^{n} \xi_i \qquad   &\text{s.t. }\forall i, \tilde{y}_i \in \mathcal{Y},\label{eq:svmmargin} \\ 
 \langle w, \phi(x_i,y_i^{*}) - \phi(x_i,\tilde{y}_i) \rangle& \geq \Delta(y_i^{*},\tilde{y}_i) - \xi_i
\end{align}
In the case of image segmentation, we may interpret $\langle w, \phi(x,y) \rangle$ as a function that is monotonic in the log probability of the joint configuration of observed and unobserved variables $(x,y)$ as determined by a CRF \cite{Lafferty:2001:CRF:645530.655813}.  Under this interpretation, a standard definition of $\phi$ is
\begin{equation}\label{eq:feature}
\phi(x,y) := \begin{pmatrix} \sum_{j=1}^{p} \phi_u(x,y^{j}) \\ \sum_{(k,l) \in \mathcal{E}} \phi_p(x,y^{k},y^{l}) \end{pmatrix}
\end{equation}
where $\phi_u$ determines a vector of features, a linear combination of which form the unary potentials of the CRF, and $\phi_p$ determines the pairwise potentials over a model specific edge set $\mathcal{E}$.  In this work, we have set $\phi_p(x,\cdot,\cdot) : \{-1,1\}^2 \to \{0,1\}^{3}$ to map to an indicator vector of three cases: (i) $y^k = y^l = -1$, (ii) $y^k \neq y^{l}$, or (iii) $y^k = y^l = +1$, and have placed hard constraints on the corresponding entries of $w$ in the optimization of the structured SVM to ensure that the pairwise potentials in the 
energy minimization problem remain submodular \cite{Zaremba2016a}.

During training of the structured SVM, we must perform \emph{loss augmented inference} in order to compute a subgradient of the loss.
In the case of margin rescaling, this consists of computing
\begin{equation}\label{eq:maxenergy}
\arg\max_{\tilde{y} \in \mathcal{Y}} \langle w, \phi(x,\tilde{y}) \rangle + \Delta(y^{*},\tilde{y}) .
\end{equation}
If $\mathcal{Y}$ is isomorphic to $\{-1,1\}^{p}$ for some $p$, $\Delta(y^*,\cdot)$ will be isomorphic to a set function $\ell : \mathcal{P}(V) \to \mathbb{R}_{+}$ where $\mathcal{P}(V)$ is the power set of a base set with $|V| = p$.  
This allows us to discuss the properties of such loss functions $\Delta$ in terms of the language of set functions as occurs in real analysis \cite{bKolmogorovFomin1975} and discrete optimization \cite{Schrijver2004}.  
In particular, we are interested in $\Delta$ corresponding to a supermodular set function $\ell$ \cite{Schrijver2004,yu:hal-01151823}:
\begin{definition}[Supermodular set function \cite{fujishige2005submodular}]\label{def:supermodular}
A supermodular set function is a set function $\ell : \mathcal{P}(V) \to \mathbb{R}$ which satisfies:
for every $A,B \subseteq V$ with $A \subseteq B$ and every $v \in V \setminus B$ we have that
\begin{equation}
\ell(A \cup \{v\}) - \ell(A) \leq \ell(B \cup \{v\}) - \ell(B).\label{eq:supermodularDefinition}
\end{equation}
\end{definition}
A function is submodular if its negative is supermodular.
Given the definition of supermodularity, we may now define when a loss function $\Delta : \mathcal{Y} \times \mathcal{Y} \rightarrow \mathbb{R}_+$ is supermodular.
\begin{definition}[Supermodular loss function \cite{yu:hal-01151823}]\label{def:SupermodularLossFunction}
A loss function $\Delta : \mathcal{Y} \times \mathcal{Y} \rightarrow \mathbb{R}_+$ is called supermodular if, for every $y \in \mathcal{Y}$, the unique set function $\ell$ such that $\Delta(y, \tilde{y}) \mapsto \ell(\{j | y^j \neq \tilde{y}^j\})$ is supermodular.
\end{definition}
We note that in Definition~\ref{def:SupermodularLossFunction} the mapping to the set function $\ell$ has an explicit dependency on the ground truth labeling $y$ and varies per training image.

Necessary to the sequel of the article, we introduce also the definition of a symmetric set function:
\begin{definition}[Symmetry]
A set function $\ell:\mathcal{P}(V)\mapsto\mathbb{R}$ is symmetric if $\ell(A) = c(|A|)$ for some function $c : \mathbb{Z}_+ \mapsto \mathbb{R} $. ($\mathbb{Z}$ is the set of integers and $\mathbb{Z}_+$ is the set of non-negative integers.)
\end{definition}
\begin{theorem}[Cardinality-based set function \cite{Bach2013submodular}]\label{th:cardinalSupermodular}
If $\ell:\mathcal{P}(V)\mapsto\mathbb{R}$ and there exist a function $c:\mathbb{Z}_+\mapsto\mathbb{R}$ such that $\ell(A) = c(|A|)$, where $|\cdot|$ is the cardinality of $A$. Then $\ell$ is supermodular if and only if $c$ is convex.
\end{theorem}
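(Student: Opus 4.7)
The plan is to unpack both sides of the biconditional by rewriting the supermodularity inequality using the assumption $\ell(A)=c(|A|)$, and then recognize what remains as exactly the discrete convexity condition on $c$. Specifically, for any $A\subseteq B\subseteq V$ and any $v\in V\setminus B$, note that $v\notin A$ as well, so $|A\cup\{v\}|=|A|+1$ and $|B\cup\{v\}|=|B|+1$. Setting $a=|A|$ and $b=|B|$, the defining inequality \eqref{eq:supermodularDefinition} becomes
\begin{equation}
c(a+1)-c(a)\;\le\;c(b+1)-c(b),\qquad 0\le a\le b\le |V|-1.
\label{eq:diffCond}
\end{equation}
So the whole theorem reduces to showing that \eqref{eq:diffCond} is equivalent to convexity of $c$ (on the integer grid $\{0,1,\dots,|V|\}$, which is the only place $c$ is evaluated).

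For the forward direction ($\ell$ supermodular $\Rightarrow$ $c$ convex), I would fix any integers $0\le a\le b\le |V|-1$ and exhibit witnesses: choose any $A\subseteq V$ with $|A|=a$, extend it to $B\supseteq A$ with $|B|=b$ (possible since $b\le|V|-1<|V|$), and pick $v\in V\setminus B$ (possible since $|B|<|V|$). Then the supermodularity inequality applied to this triple is exactly \eqref{eq:diffCond}. Taking $b=a+1$ yields $c(a+2)-c(a+1)\ge c(a+1)-c(a)$, i.e.\ the first differences of $c$ are non-decreasing, which is the standard definition of convexity of a function on consecutive integers.

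For the reverse direction ($c$ convex $\Rightarrow$ $\ell$ supermodular), I would assume the first differences $c(k+1)-c(k)$ are non-decreasing in $k$. Then for arbitrary $A\subseteq B\subseteq V$ and $v\in V\setminus B$, the cardinalities satisfy $|A|\le|B|$, so applying the non-decreasing difference property at $k=|A|$ and $k=|B|$ gives precisely \eqref{eq:supermodularDefinition}. A brief remark can justify why "convex on the integers" is the right reading of convexity here: either directly as the statement that second forward differences are non-negative, or by noting that any convex real function restricted to $\mathbb{Z}_+$ satisfies this, and conversely any such integer-valued sequence extends to a piecewise linear convex function on $\mathbb{R}_+$.

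There is essentially no hard step; the only mild subtlety is being explicit that $v\in V\setminus B$ guarantees $v\notin A$, so both cardinalities jump by exactly one and the inequality cleanly reduces to a statement purely about $c$. I would therefore keep the proof to a few lines, emphasizing the reduction \eqref{eq:diffCond} and invoking the discrete characterization of convexity.
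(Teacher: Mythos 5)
Your argument is correct and complete: the reduction of the supermodularity inequality to the first-difference condition $c(a+1)-c(a)\le c(b+1)-c(b)$ for $a\le b$, together with the witness construction for the forward direction, is exactly the standard proof of this fact. Note that the paper itself gives no proof of Theorem~\ref{th:cardinalSupermodular} --- it is stated with a citation to \cite{Bach2013submodular} --- so there is nothing to compare against; your write-up simply supplies the omitted argument, and it matches the one in the cited reference in substance. The one subtlety you correctly flag (that $v\in V\setminus B$ forces $v\notin A$, so both cardinalities increase by exactly one) is the only place a careless proof could go wrong, and your treatment of discrete convexity as non-decreasing first differences is the appropriate reading of the hypothesis $c:\mathbb{Z}_+\to\mathbb{R}$.
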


As we have guaranteed that maximization of $\langle w, \phi(x_i,\tilde{y}_i) \rangle$ with respect to $\tilde{y}$ corresponds to a submodular minimization problem, the loss augmented inference as in Equation~\eqref{eq:maxenergy} remains a submodular minimization when $\Delta$ is supermodular and can be aligned with the inference, and therefore polynomial time solvable.  By contrast, non-supermodular $\Delta$ result in NP-hard optimization problems in general.

Modular loss functions, such as Hamming loss, can be incorporated into the unary potentials in a graph cuts optimization framework for loss augmented inference.  However, the formulation of loss augmented inference with supermodular losses as a graph cuts problem is not straightforward, despite previous work (in which a custom graph cuts formulation was derived for a specific family of supermodular losses) that indicated a supermodular loss can lead to improved segmentation quality \cite{osokin2014perceptually}.
Moreover, while supermodular loss functions guarantee polynomial time solvability, they do not do so with low order polynomial guarantees in general.
We have observed that the Fujishige-Wolfe algorithm is infeasible to apply even in the case of sub-megapixel images, and scales poorly for useful supermodular loss functions.
Consequently, we develop a general framework for decomposing loss augmented inference based on ADMM. This framework solely relies on a loss function being able to be efficiently optimized in isolation using a specialized solver specific to the loss function. 

\subsection{A supermodular loss function for binary image segmentation}

\begin{figure}
\subfigure[An 8-connected neighborhood is used in the construction of the loss function.]{\label{fig:8connectedLoss}

\begin{minipage}{0.3\columnwidth}
\resizebox{.8\textwidth}{!}{
\tikzset{
  p/.style = {fill = none, draw=black, text = black}              
}{\centering
\begin{tikzpicture}[->,>=stealth',shorten >=.5pt,auto,node distance=1cm,inner sep=1pt,minimum size=5pt,semithick,every node/.style = {circle}]
  \tikzstyle{every state}=[fill=none,draw=black,text=black]

  \node[p] (X1)               { };
  \node[p] (X2) [below of=X1] { };
  \node[p] (X3) [below of=X2] { };
  
  \node[p] (Y1) [right of=X1] { };
  \node[p] (Y2) [below of=Y1] { };
  \node[p] (Y3) [below of=Y2] { };
  
  \node[p] (Z1) [right of=Y1] { };
  \node[p] (Z2) [below of=Z1] { };
  \node[p] (Z3) [below of=Z2] { };
  
  \node[p] (A1) [right of=Z1] { };
  \node[p] (A2) [below of=A1] { };
  \node[p] (A3) [below of=A2] { };
  
  \path[-] (X1) edge node {} (Y1)
  		(X2) edge node {} (Y2)
  		(X3) edge node {} (Y3)
  		
  		(Y1) edge node {} (Z1)
  		(Y2) edge node {} (Z2)
  		(Y3) edge node {} (Z3)
		
		(Z1) edge node {} (A1)
  		(Z2) edge node {} (A2)
  		(Z3) edge node {} (A3)
		
  		(X1) edge node {} (X2)
  		(X2) edge node {} (X3)
  		
  		(Y1) edge node {} (Y2)
  		(Y2) edge node {} (Y3)
  		
  		(Z1) edge node {} (Z2)
  		(Z2) edge node {} (Z3)
  		
  		(A1) edge node {} (A2)
  		(A2) edge node {} (A3) 
  		
  		(X1) edge node {} (Y2)
  		(X2) edge node {} (Y3)
  		(X3) edge node {} (Y2)
  		(X2) edge node {} (Y1)
  		
  		(Y1) edge node {} (Z2)
  		(Y2) edge node {} (Z3)
		(Y3) edge node {} (Z2)
		(Y2) edge node {} (Z1)
		
  		(Z1) edge node {} (A2)
  		(Z2) edge node {} (A3)
		
  		(Z3) edge node {} (A2)
  		(Z2) edge node {} (A1)
  		;
\end{tikzpicture}
}}
\end{minipage}
}\hfill
\subfigure[Pairwise potential construction for an edge with $y^{*k}=+1$ and $y^{*l}=-1$ following the loss function in Equation~\eqref{eq:supermodularLoss}.]{
\begin{minipage}{0.6\columnwidth}
\resizebox{0.98\columnwidth}{!}{\centering
$E = - \!\!\!\! \overbrace{\begin{pmatrix} w_{00} & w_{01}\\w_{10} & w_{11} \end{pmatrix}}^{\text{inference pairwise potential}} \!\!\!\! - \!\!\!\! \underbrace{\begin{pmatrix} 0 & \gamma \\ 0 & 0 \end{pmatrix}}_{\text{loss pairwise potential}}$
}
\end{minipage}
}
\caption{Non-submodularity of the joint loss augmented inference procedure using the same mapping to a set function for inference and loss functions.The inference procedure can be solved by graph cuts when the sum of the diagonal elements of $E$ is less than the sum of the off diagonal elements.  While it is enforced during optimization that $w_{00}+w_{11}-w_{01}-w_{10} \geq 0$, the presence of $\gamma$ in the off diagonal, the exact position depending on the value of $y^*$,   removes the guarantee of a resulting submodular minimization problem.}\label{fig:notSubmodularJointGraphcuts}
\end{figure}
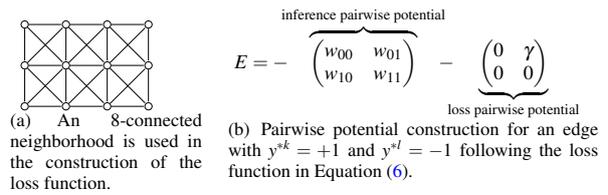

As a first example, we propose a loss function that is itself optimizable with graph cuts.  The loss simply counts the number of incorrect pixels plus the number of pairs of neighboring pixels that both have incorrect labels
\begin{equation}\label{eq:supermodularLoss}
\Delta_8(y^*,\tilde{y}) = \sum_{j=1}^{p} [y^{*j} \neq \tilde{y}^j] + \sum_{(k,l) \in \mathcal{E}_\ell} \gamma [y^{*k} \neq \tilde{y}^k \wedge y^{*l} \neq \tilde{y}^l]
\end{equation}
where $[\cdot]$ is Iverson bracket notation, $\mathcal{E}_\ell$ is a loss specific edge set and   $\gamma$ is a positive weight.  We have used 8-connectivity for the loss function in the experiments (Figure~\ref{fig:8connectedLoss}), referred to as ``8-connected loss'' in the sequel. We may identify this function with a set function to which the argument is the set of mispredicted pixels.

\begin{proposition}
Maximization of the loss function in Equation~\eqref{eq:supermodularLoss} is equivalent
to a supermodular function maximization problem.
\end{proposition}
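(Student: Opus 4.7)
The plan is to identify $\Delta_8(y^*, \cdot)$ with an explicit set function $\ell$ on the power set of pixels $V$, verify that $\ell$ is supermodular via Definition~\ref{def:supermodular}, and then observe that maximizing $\Delta_8$ over $\tilde{y} \in \mathcal{Y}$ is in bijection with maximizing $\ell$ over $\mathcal{P}(V)$.

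First I would fix $y^*$ and, for a candidate labeling $\tilde y$, let $A(\tilde y) = \{j \mid y^{*j} \neq \tilde y^j\}$ denote the set of mispredicted pixels. Substituting into Equation~\eqref{eq:supermodularLoss} gives the closed form
\begin{equation*}
\ell(A) = |A| + \gamma \cdot \bigl|\{(k,l) \in \mathcal{E}_\ell \mid k \in A \text{ and } l \in A\}\bigr|,
\end{equation*}
and the map $\tilde y \mapsto A(\tilde y)$ is a bijection between $\mathcal{Y}$ and $\mathcal{P}(V)$, so $\max_{\tilde y} \Delta_8(y^*, \tilde y) = \max_{A \subseteq V} \ell(A)$. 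This reduces the claim to showing $\ell$ is supermodular.

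Next I would check the supermodular inequality term by term. The first (cardinality) term is modular, contributing $1$ to both $\ell(A\cup\{v\}) - \ell(A)$ and $\ell(B\cup\{v\}) - \ell(B)$, so it is trivially supermodular. For the pairwise term, a direct computation gives
\begin{equation*}
\ell_{\mathrm{pair}}(A \cup \{v\}) - \ell_{\mathrm{pair}}(A) = \gamma \cdot \bigl|\{u \in A \mid (u,v) \in \mathcal{E}_\ell\}\bigr|,
\end{equation*}
i.e.\ $\gamma$ times the number of $\mathcal{E}_\ell$-neighbors of $v$ already present in $A$. Since $A \subseteq B$ implies that every neighbor of $v$ in $A$ is also a neighbor of $v$ in $B$, this marginal is monotonically nondecreasing in the set, yielding Equation~\eqref{eq:supermodularDefinition}. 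Summing the two supermodular contributions preserves supermodularity.

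I do not foresee a significant obstacle here: the proof is essentially a verification, and the only subtlety is keeping the bijection between labelings and error sets (and the dependence on the ground truth $y^*$) straight, which is the standard convention already established in Definition~\ref{def:SupermodularLossFunction}. Alternatively, one could note that the pairwise term is exactly a sum of indicator functions of the form $A \mapsto [k \in A \wedge l \in A]$, each of which is the supermodular set function associated with the logical AND, and that supermodularity is preserved under nonnegative linear combinations; this gives a one-line argument if a more compact presentation is preferred.
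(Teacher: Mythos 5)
Your proof is correct and follows essentially the same route as the paper's own (much terser) proof sketch: identifying the mispredicted pixels with the ``label $1$'' configuration of a binary random field and observing that each pairwise term contributes a positive amount only when both endpoints are mispredicted, which is exactly the supermodular AND-indicator observation in your closing remark. Your version simply makes the marginal-gain verification of Definition~\ref{def:supermodular} explicit, which the paper leaves implicit.
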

\begin{proof}[Proof sketch]
  Equation~\eqref{eq:supermodularLoss} is isomorphic to a binary random field model for which label is $1$ iff a pixel has a different label from the ground truth.  
  Neighboring pixels that both have label $1$ contribute a positive amount to the energy, while all other configurations contribute zero.  This corresponds to a supermodular function following Definition~\ref{def:supermodular}.
  \qed
\end{proof}

This loss function emphasizes the importance of correctly predicting adjacent groups of pixels, e.g.\ those present in thin structures more than one pixel wide.  While the pairwise potential in $\langle w , \phi(x,y) \rangle$ has a tendency to reduce the perimeter of the segment, the loss strongly encourages the correct identification of adjacent pixels. 
We will observe in the experimental results that the use of this loss function during training improves the test time prediction accuracy, even when measuring in terms of Hamming loss.

It may appear at first glance that the structure of this loss function is aligned with that of the inference, and that we can therefore jointly optimize the loss augmented inference with a single graph cuts procedure.  Indeed, the loss function is isomorphic to a supermodular set function, and the inference is isomorphic to a supermodular set function, both of which can be solved by graph cuts.
However, the isomorphisms are not the same.  
The loss function maps to a set function by considering the set of pixels that are incorrectly labeled, while the inference maps to a set function by considering the set of pixels that are labeled as foreground.
Shown in Figure~\ref{fig:notSubmodularJointGraphcuts} is the pairwise potential for an edge with $y^{*k}=+1$ and $y^{*l}=-1$.

If we apply a single mapping, the inference procedure can be solved by graph cuts when the sum of the diagonal elements of $E$ is less than the sum of the off diagonal elements.  While it is enforced during optimization that $w_{00}+w_{11}-w_{01}-w_{10} \geq 0$, the presence of $\gamma$ in the off diagonal, for which the exact position depends on the value of $y^*$, removes the guarantee of a resulting submodular minimization problem.
We therefore consider a Lagrangian based splitting method to solve the loss augmented inference problem in Section~\ref{sec:ADMMlossaugmented}

\subsection{Symmetric supermodular loss function: square loss}\label{sec:square}

As a second example, we consider the following loss which simply takes the square of the number of mis-predictions.  This function is not readily incorporated in graph-cuts, as the square induces a pairwise dependency between all pixels.
\begin{equation}
\Delta_{\text{S}}(y^*,\tilde{y}) = \left(  \frac{\sum_{j=1}^{p} [y^{*j} \neq \tilde{y}^j]}{\alpha}   \right)^2 \label{eq:squareloss}
\end{equation}
where $\alpha >0$ is a scale factor to prevent the value to be too large in an image scale problem. We used $\alpha=\sqrt{|y^*|}$ in our setting, where $|\cdot|$ is the number of positive labels of $y^*$. This is a function on the misprediction set which only depends on the size of the input set i.e.\ $\ell(A) = \left(\frac{|A|}{\alpha}\right)^2$. As the square function is a convex function, $\Delta_{\text{S}}$ is a supermodular loss w.r.t.\ the misprediction set. Then maximizing the loss itself is a supermodular maximization i.e.\ a submodular minimization problem, following Theorem~\ref{th:cardinalSupermodular}. 

\subsection{Supermodular Loss Functions Through Biconvexity}\label{sec:Biconvex}

In this section, we develop a family of supermodular loss functions based on biconvex functions of the number of false positives and false negatives.  
\begin{definition}[Biconvexity \cite{Gorski2007}]\label{def:Biconvex}
  A function $f : \mathcal{A}\times \mathcal{B} \rightarrow \mathbb{R}$ is called a \emph{biconvex function} if
  \begin{equation}
    f_a(\cdot) := f(a,\cdot) : \mathcal{B} \rightarrow \mathbb{R}
  \end{equation}
  is a convex function on $\mathcal{B}$ for every fixed $a\in \mathcal{A}$ and
  \begin{equation}
    f_b(\cdot) := f(\cdot, b) : \mathcal{A} \rightarrow \mathbb{R}
  \end{equation}
  is a convex function on $\mathcal{A}$ for every fixed $b\in \mathcal{B}$.
\end{definition}
\begin{remark}
  The usual definition of biconvexity specifies that the function be defined over a biconvex set \cite[Definition~1.1]{Gorski2007}, but for the purpose of this section we will restrict ourselves to $\mathcal{A}$ and $\mathcal{B}$ being convex sets so that biconvexity of the domain of $f$ follows trivially.
\end{remark}

Denote by $m$ the number of positive labels in the ground truth labeling $y^*$,
\begin{equation}
m:= |y^*| = \sum_{i=1}^{p} \left[ y^*_i = +1\right].\label{eq:groundtruthPos}
\end{equation}

\begin{proposition}[Biconvexity characterizes supermodularity]\label{biconvexitySupermodular}
For a given ground truth labeling $y^*$ and a given prediction $\tilde{y}$,
 let $e_{-}$ denote the number of false negatives, and $e_{+}$ denote the number of false positives:
\begin{align}
  e_{-}:=& \sum_{i=1}^{p} \left[ y^*_i = +1 \wedge \tilde{y}_i = -1  \right] \label{eq:falsenegative} \\
  e_{+}:=& \sum_{i=1}^{p} \left[ y^*_i = -1 \wedge \tilde{y}_i = +1  \right] \label{eq:falsepositive}
\end{align}
where $[\cdot]$ is Iverson bracket notation.
The following holds
  \begin{equation}
    \Delta(y^*,\tilde{y}) := \ell(e_{-},e_{+})
  \end{equation}
  is a supermodular loss function iff $\exists \hat{\ell} : [0,m] \times [0,p-m] \mapsto \mathbb{R}_+$ that is a biconvex function and
  \begin{equation}
    \hat{\ell}(e_{-},e_{+}) = \ell(e_{-},e_{+}) \quad \forall e_{-},e_{+}  .\label{eq:hatElltoEll}
  \end{equation}
  In particular, we may select $\hat{\ell}$ to be the convex closure of $\ell$ \cite[Section~5.1]{Bach2013submodular}.
\end{proposition}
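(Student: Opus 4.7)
The plan is to translate supermodularity of the set function induced by $\Delta(y^*,\cdot)$ into analytic conditions on $\hat\ell$ by exploiting the bipartition of the pixel index set that the ground truth provides.  Write $V_+ := \{i : y^*_i = +1\}$ and $V_- := V \setminus V_+$, with $|V_+| = m$ and $|V_-| = p - m$.  For any misprediction set $A \subseteq V$ one has $e_-(A) = |A \cap V_+|$ and $e_+(A) = |A \cap V_-|$, so the induced set function takes the doubly cardinality-based form $\ell_{\mathrm{set}}(A) = \ell(|A\cap V_+|,\,|A\cap V_-|)$.  Supermodularity of $\ell_{\mathrm{set}}$ is equivalent, by the usual reduction to elementary increments, to the following inequality for every $A \subseteq V$ and distinct $u,v \in V\setminus A$:
\begin{equation*}
\ell_{\mathrm{set}}(A \cup \{v\}) - \ell_{\mathrm{set}}(A) \le \ell_{\mathrm{set}}(A\cup\{u,v\}) - \ell_{\mathrm{set}}(A\cup\{u\}).
\end{equation*}

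For the ``if'' direction I would case-split on the block memberships of $u$ and $v$.  When $u,v$ lie in the same block, the inequality collapses to three-point convexity of $\hat\ell$ along a horizontal or vertical line in the integer grid, which is exactly biconvexity applied coordinate-wise.  When $u,v$ straddle the two blocks, the requirement becomes the mixed second-difference condition $\hat\ell(k+1,j+1) + \hat\ell(k,j) \ge \hat\ell(k,j+1) + \hat\ell(k+1,j)$.  For the ``only if'' direction I would explicitly construct $\hat\ell$ as the convex closure of $\ell$, namely
\begin{equation*}
\hat\ell(x,y) := \min\Bigl\{\sum_k \alpha_k \ell(x_k,y_k) : \sum_k \alpha_k (x_k,y_k) = (x,y),\; \alpha_k \ge 0,\; \sum_k \alpha_k = 1\Bigr\},
\end{equation*}
and invoke the classical correspondence between supermodular functions and their convex closures to conclude that $\hat\ell$ agrees with $\ell$ on integer lattice points by exhibiting a supporting affine functional at each vertex; joint convexity of any convex closure then yields biconvexity for free.

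The main obstacle is the mixed second-difference condition in the ``if'' direction, since biconvexity alone is insufficient for it (a generic bilinear function is biconvex but can easily violate the cross-difference).  I would bridge the gap by appealing to the specific choice $\hat\ell = $ convex closure of $\ell$: such a closure is not merely biconvex but jointly convex, and by \cite[Section~5.1]{Bach2013submodular} its integer-point restriction then automatically satisfies the mixed-difference inequality whenever $\ell$ is supermodular.  Putting both directions together, the existence of a biconvex extension of $\ell$ that agrees with it on the integer lattice is characterized by supermodularity of $\ell_{\mathrm{set}}$, with the convex closure playing the role of the canonical witness $\hat\ell$ advertised in the statement.
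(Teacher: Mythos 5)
Your reduction of supermodularity of the induced set function to three families of second-difference inequalities on the integer grid (same-block increments giving coordinatewise discrete convexity, cross-block increments giving the mixed inequality $\hat\ell(k+1,j+1)+\hat\ell(k,j) \ge \hat\ell(k,j+1)+\hat\ell(k+1,j)$) is exactly right, and you have put your finger on the real difficulty: biconvexity controls only the first two families. The problem is that your proposed bridge for the ``if'' direction is circular. You argue that the convex closure ``automatically satisfies the mixed-difference inequality whenever $\ell$ is supermodular'' --- but supermodularity of $\ell$ is precisely what the ``if'' direction is supposed to establish, so you cannot assume it. Moreover, the statement asserts the implication for \emph{any} biconvex extension $\hat\ell$, not only the convex closure, so specializing the witness does not help. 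The gap is not repairable by a cleverer argument: the implication fails without an additional hypothesis. For instance, $\ell(e_{-},e_{+})=(e_{-}-e_{+})^2$ is jointly convex (hence biconvex) and nonnegative, yet its mixed second difference equals $-2$, so the induced set function is not supermodular. What must be added to biconvexity is exactly the increasing-differences (lattice-supermodularity) condition you isolated.

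For what it is worth, the paper's own proof of this direction has the same hole: it verifies only that the restrictions to the foreground block and to the background block are supermodular (via the cardinality/convexity correspondence of Theorem~\ref{th:cardinalSupermodular}) and never examines increments in which the two added elements lie in different blocks, which is precisely where supermodularity can fail. So your analysis, while it does not yield a complete proof, exposes a genuine weakness in the stated proposition. Your ``only if'' direction is essentially the paper's (contractions to each block are cardinality-based and hence convex, then extend off the integer lattice); note that for mere existence of a biconvex extension the bilinear interpolation on each unit cell already suffices, so you need not establish that the convex closure agrees with $\ell$ at interior lattice points, which is a more delicate claim than your supporting-functional sketch suggests.
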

\begin{proof}
%
We first show that supermodularity implies the existence of a corresponding biconvex function.
  \cite[Proposition~B.2]{Bach2013submodular} indicates that for a function to be supermodular, all contractions of that function must be supermodular, in particular the contractions achieved by fixing a set of false positives and fixing a set of false negatives.

  For the contraction obtained by fixing false positives to be supermodular, we have from Theorem~\ref{th:cardinalSupermodular} that there exists a convex function that specifies the contraction.  Similarly for the contraction obtained by fixing false negatives to be supermodular, there exists a (different) convex function that specifies the contraction.  Combining all such contractions and convex functions yields the conditions in Definition~\ref{def:Biconvex} for integral points.  The existence of a function satisfying these conditions for non-integral points is obtained by noting that the convex closure satisfies the required properties.

  It now remains to show that a biconvex function yields a supermodular set function. 
  Given a biconvex function of the number of false negatives and the number of false positives, by Definition~\ref{def:Biconvex}, the function obtained by fixing the number of false positives is convex in the number of false negatives, then we have from Theorem~\ref{th:cardinalSupermodular} that the set function restricted to the set of foreground pixels is supermodular. A symmetric argument gives that the restriction to the set of background pixels is also supermodular.  
  \qed
\end{proof}

Several popular loss functions such as Intersection over Union loss \cite[Equation~(43)]{Yu2015b} or S{\o}rensen-Dice loss \cite[Definition~11]{Yu2016a} can be specified as functions of the number of false positives and false negatives, but both have been shown to be non-supermodular.  In the next section we will develop a novel supermodular loss function by specifying an increasing biconvex function of $e_{-}$ and $e_{+}$.

\subsection{A novel loss function from biconvexity}\label{sec:biconvexloss}

The Intersection over Union loss \cite[Equation~(7)]{blaschko2008learning} has been shown to be non-supermodular \cite[Proposition~10]{Yu2015b}:
\begin{equation}\label{eq:IoUloss}
  \Delta_{IoU}(y^*,\tilde{y}) = 1 - \frac{|y^* \cap \tilde{y}|}{|y^* \cup \tilde{y}|}.
\end{equation}
We develop here a novel loss function that is similar in flavor to Equation~\eqref{eq:IoUloss} but we will see that it is supermodular: 
\begin{equation}
\Delta_C(y^*,\tilde{y}) = \frac{|y^*|+|\tilde{y}|-2|y^*\cap \tilde{y}|}{|y^*\cap \tilde{y}|+1} .\label{eq:biconvexloss}
\end{equation}
We can verify that $0\leq \Delta(y^*,\tilde{y})$, $\forall \tilde{y}\in\mathcal{Y}$, and $\Delta(y^*,y^*) = 0$.

Given a ground truth labeling $y^*$, we can consider $m$ to be a constant. With the notation in Equation~\eqref{eq:falsenegative} and Equation~\eqref{eq:falsepositive}, we can write $\Delta_C$ as a function of $e_{-}$ and $e_{+}$, denoted $\ell_C$:
\begin{equation}
\ell_C(e_{-},e_{+}) = \frac{e_{-}+e_{+}}{m-e_{-}+1}. \label{eq:l_c_biconvex}
\end{equation}

\begin{proposition}
There exits a function $\hat{\ell}_C : \mathbb{R}_+ \times \mathbb{R}_+ \mapsto \mathbb{R}_+$ that is biconvex and
  \begin{equation}
    \hat{\ell}_C(e_{-},e_{+}) = \ell_C(e_{-},e_{+}) \quad \forall e_{-},e_{+} \in \mathbb{Z}_+ .
  \end{equation}
\end{proposition}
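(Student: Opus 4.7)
The plan is to take the natural candidate $\hat{\ell}_C(e_{-},e_{+}) := \frac{e_{-}+e_{+}}{m-e_{-}+1}$, interpreted as a function of continuous variables on $[0,m]\times[0,p-m]$, and verify biconvexity directly from Definition~\ref{def:Biconvex}, then extend to $\mathbb{R}_+\times\mathbb{R}_+$. I would proceed in two checks plus an extension step.

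First, fix $e_{-}\in[0,m]$ and consider the partial function $e_{+}\mapsto \hat{\ell}_C(e_{-},e_{+})$. Since $m-e_{-}+1 \geq 1 > 0$, this function equals $\frac{1}{m-e_{-}+1}\, e_{+} + \frac{e_{-}}{m-e_{-}+1}$, which is affine in $e_{+}$ and therefore convex. This handles one of the two conditions of biconvexity trivially.

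Second, fix $e_{+}\in[0,p-m]$ and consider $e_{-}\mapsto \hat{\ell}_C(e_{-},e_{+})$. Writing $C := m+1+e_{+} \geq 1$, a short algebraic rearrangement gives $\hat{\ell}_C(e_{-},e_{+}) = \frac{C}{m-e_{-}+1} - 1$. Differentiating twice in $e_{-}$ yields
\begin{equation}
\frac{\partial^2 \hat{\ell}_C}{\partial e_{-}^2} = \frac{2C}{(m-e_{-}+1)^3} > 0
\end{equation}
on $[0,m]$, so the partial function is (strictly) convex in $e_{-}$. The two checks together establish biconvexity on $[0,m]\times[0,p-m]$, which already contains all integer points at which $\ell_C$ is defined, so agreement with $\ell_C$ on $\mathbb{Z}_+\times\mathbb{Z}_+$ within the feasible range is immediate.

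The one subtlety, and the main obstacle, is that the statement requires $\hat{\ell}_C$ to be defined on all of $\mathbb{R}_+\times\mathbb{R}_+$, whereas the closed-form expression becomes singular at $e_{-}=m+1$ and changes sign beyond. I would address this by extending $\hat{\ell}_C$ outside $[0,m]\times[0,p-m]$ in a way that preserves biconvexity, for instance by taking the convex closure as suggested after Equation~\eqref{eq:hatElltoEll}, or more explicitly by setting $\hat{\ell}_C(e_{-},e_{+}) = +\infty$ for $e_{-} > m$ (which is still biconvex in the extended-real sense) or by pasting in an affine majorant along $e_{-}\in[m,\infty)$ chosen to match the value and slope at $e_{-}=m$. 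Since the physically meaningful counts always satisfy $e_{-}\leq m$, any such extension gives a valid $\hat{\ell}_C$ satisfying the conclusion, and invoking Proposition~\ref{biconvexitySupermodular} immediately upgrades this to a proof that $\Delta_C$ is a supermodular loss.
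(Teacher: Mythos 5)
Your proof is correct and follows essentially the same route as the paper's: the same candidate $\hat{\ell}_C(e_1,e_2)=\frac{e_1+e_2}{m-e_1+1}$, affineness in $e_{+}$ for fixed $e_{-}$, and a positive second derivative in $e_{-}$ for fixed $e_{+}$. The only substantive difference is that you explicitly handle the singularity at $e_{-}=m+1$, which the paper's proof glosses over by asserting twice differentiability ``everywhere'' while implicitly restricting to $e_{-}\leq m$; your extension beyond $e_{-}=m$ by an affine majorant (or the convex closure) is a legitimate and slightly more careful way to meet the stated domain $\mathbb{R}_+\times\mathbb{R}_+$.
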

\begin{proof}
We set
\begin{equation}
\hat{\ell}_C( e_1, e_2) = \frac{e_1+e_2}{m-e_1+1}, \forall e_1, e_2 \in \mathbb{R}_+ .
\end{equation}
It is straightforward that it satisfies Equation~\eqref{eq:hatElltoEll} in Proposition~\ref{biconvexitySupermodular}.
We now prove that this is a biconvex function. We note that with $e_1$ fixed, $\hat{\ell}_C$ is linear in $e_2$ and therefore $\hat{\ell}_C(e_1,\cdot)$ is convex.
  
Now to show that $\hat{\ell}_{C}(\cdot, e_2)$ is convex,  we calculate its first and second derivatives with respect to $e_1$:
\begin{align}
& \frac{\partial \hat{\ell}}{\partial e_1} 
= \frac{m+e_2+1}{(m-e_1+1)^2}, \\
& \frac{\partial^2 \hat{\ell}}{\partial e_1^2} 
= \frac{2(m+e_2+1)(m-e_1+1)}{(m-e_1+1)^4} \geq 0 \label{eq:seconddiff}
\end{align}
Given the fact that $e_1$ is the number of false negatives and $m$ is the number of ground truth positive labels, we have $e_1\leq m$.  All parenthesized terms of Equation~\eqref{eq:seconddiff} must therefore be strictly positive. 
As $\hat{\ell}_{C}(\cdot, e_2)$ is twice differentiable everywhere and its second derivative is non-negative, $\hat{\ell}_c$ is convex wrt $e_1$.
\qed
\end{proof}
Following Proposition~\ref{biconvexitySupermodular}, we then have the following corollary:
\begin{corollary}
$\Delta_C$ in Equation~\eqref{eq:biconvexloss} is supermodular.
\end{corollary}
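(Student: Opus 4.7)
The plan is to invoke Proposition~\ref{biconvexitySupermodular} directly, using the biconvex extension just constructed as a witness. The immediately preceding proposition supplies a function $\hat{\ell}_C : \mathbb{R}_+ \times \mathbb{R}_+ \to \mathbb{R}_+$ that is biconvex and agrees with $\ell_C$ on all integer pairs $(e_{-},e_{+})$. Since $\Delta_C(y^*,\tilde{y}) = \ell_C(e_{-},e_{+})$ by construction (Equation~\eqref{eq:l_c_biconvex}), the hypothesis of Proposition~\ref{biconvexitySupermodular} is met with the explicit witness $\hat{\ell}_C$.

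Concretely, I would write a one-line proof: by the preceding proposition, $\hat{\ell}_C$ is biconvex on $\mathbb{R}_+ \times \mathbb{R}_+$ and satisfies $\hat{\ell}_C(e_{-},e_{+}) = \ell_C(e_{-},e_{+})$ for all $e_{-},e_{+} \in \mathbb{Z}_+$, which is exactly condition~\eqref{eq:hatElltoEll}. Applying the ``biconvex $\Rightarrow$ supermodular'' direction of Proposition~\ref{biconvexitySupermodular} to $\ell = \ell_C$ and $\hat{\ell} = \hat{\ell}_C$ yields that $\Delta_C$ is a supermodular loss function, which is the content of the corollary.

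There is essentially no obstacle here, since all real work was done in the preceding two results. The only thing to be mildly careful about is the domain: Proposition~\ref{biconvexitySupermodular} is stated with $\hat{\ell}$ defined on $[0,m] \times [0,p-m]$, whereas the preceding proposition exhibits $\hat{\ell}_C$ on $\mathbb{R}_+ \times \mathbb{R}_+$. This is harmless because $[0,m] \times [0,p-m] \subset \mathbb{R}_+ \times \mathbb{R}_+$, so the restriction of $\hat{\ell}_C$ to that rectangle is still biconvex and still agrees with $\ell_C$ on all feasible integer points $(e_{-},e_{+})$ with $e_{-} \leq m$ and $e_{+} \leq p-m$. Hence the corollary follows immediately.
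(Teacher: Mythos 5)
Your proposal is exactly the paper's argument: the corollary is stated as an immediate consequence of Proposition~\ref{biconvexitySupermodular} combined with the preceding proposition exhibiting the biconvex witness $\hat{\ell}_C$, and the paper gives no further proof. Your added remark about restricting $\hat{\ell}_C$ from $\mathbb{R}_+ \times \mathbb{R}_+$ to $[0,m] \times [0,p-m]$ is a correct and harmless tidying of a detail the paper leaves implicit.
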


\subsection{ADMM algorithm for loss augmented inference}\label{sec:ADMMlossaugmented}
Several Lagrangian based decomposition frameworks have been proposed, such as dual decomposition 
and ADMM \cite{boyd2011distributed}, with the latter having improved convergence guarantees. We have also observed a substantial improvement in performance using ADMM over dual decomposition in our own experiments. Here we consider a splitting method to optimize the minimization of the negative of Equation~\eqref{eq:maxenergy}, which is equivalent to finding the most violated constraint in cutting plane optimization:
\begin{equation}\label{eq:argmin}
\arg\min_{y_a,y_b} -\langle w,\phi(x,y_a) \rangle - \Delta(y^*,y_b)\quad
\textrm{s.t. }y_a=y_b .
\end{equation} 
and we form the augmented Lagrangian as
\begin{align}
\mathcal{L}(y_a,y_b,\lambda)
=& -\langle w,\phi(x,y_a) \rangle - \Delta(y^*,y_b) \nonumber \\
&+\lambda^T(y_a-y_b)+\frac{\rho}{2}\|y_a-y_b\|_2^2 \label{eq:lagADMM}
\end{align}
where $\rho>0$. \eqref{eq:lagADMM} can be optimized in an iterative fashion by Algorithm~\ref{alg:admm} \cite{boyd2011distributed}. 

\begin{algorithm}[!htb]
\caption{ADMM in scaled form for finding a saddle point of the Lagrangian in Equation~\eqref{eq:lagADMM} }\label{alg:admm}
\begin{algorithmic}[1]
\STATE {Initialization $u^0=0$}
\REPEAT  
\STATE \label{alg:ADMM:subproblemInference}  
$y_a^{t+1} = \arg\min_{y_a} -\langle w,\phi(x,y_a) \rangle 
+ \frac{\rho}{2}(\|y_a-y_b^t+u^t\|_2^2)$
\STATE \label{alg:ADMM:subproblemLoss}
$y_b^{t+1} = \arg\min_{y_b}  - \Delta(y^*,y_b) + \frac{\rho}{2}(\|y_a^{t+1}-y_b+u^t\|_2^2)$
\STATE $u^{t+1}\ =\ u^t+(y_a^{t+1}-y_b^{t+1})$
\STATE $t=t+1$ 
\UNTIL{stopping criterion satisfied}  
\end{algorithmic}
\end{algorithm}

The saddle point of the Lagrangian will correspond to an optimal solution over a convex domain, while we are optimizing w.r.t.\ binary variables.  Strictly speaking, we may therefore consider the linear programming (LP) relaxation of our loss augmented inference problem, followed by a rounding post-processing step. We use a standard stopping criterion as in \cite{boyd2011distributed}: the primal and dual residuals must be small with an absolute criterion $\epsilon^{\text{abs}}=10^{-4}$ and a relative criterion $\epsilon^{\text{rel}}=10^{-2}$. In practice, we have found that discretizing the quadratic terms and incorporating them into the unary potentials of the respective graph cuts problems is more computationally efficient, while yielding results that are nearly identical with exact optimization with a primal-dual gap of 0.01\%.
We show in the experimental results that this strategy yields results almost identical to those of an LP relaxation. 


In general, we simply need task-specific solvers for Line~\ref{alg:ADMM:subproblemInference} and Line~\ref{alg:ADMM:subproblemLoss} 
of Algorithm~\ref{alg:admm}.  These solvers need not use a single graph cut algorithm, and can therefore exploit any available structure even though it may not be present, or aligned, between the two subproblems.  Although we have used this framework for the specific supermodular loss functions described in the previous subsection, we note that this provides an API for the structured output SVM framework alternate to that provided by SVMstruct \cite{tsochantaridis2005large}.  We have released our structured prediction toolbox as an open source project, enabling the application of this strategy to diverse structured prediction problems with non-modular loss functions.

\subsection{ADMM Convergence}\label{sec:admmconverge}

Consider the standard form of the problem solved by ADMM:
\begin{align}
&\min \qquad f(x)+g(x) \\
&\text{s.t.\ } Ax+Bz = c
\end{align}
with variables $x \in \mathbb{R}^n$ and $z\in\mathbb{R}^m$, where $A\in\mathbb{R}^{p\times n}$, $B\in\mathbb{R}^{p\times m}$, and 
$c\in \mathbb{R}^p$. Following \cite{boyd2011distributed}, some general convergence results for ADMM are considered in this section.

\begin{assumption}\label{th:admmconverge1}
The (extended-real-valued) functions $f:\mathbb{R}^n \mapsto \mathbb{R}\cup\{+\infty\}$ and $g:\mathbb{R}^m \mapsto \mathbb{R}\cup\{+\infty\}$ are closed, proper, and convex.
\end{assumption}

\begin{assumption}\label{th:admmconverge2}
The unaugmented Lagrangian of the problem has a saddle point.
\end{assumption}

If Assumption~\ref{th:admmconverge1} and Assumption~\ref{th:admmconverge2} hold, 
the ADMM algorithm guarantees:
(1) the residual convergence: $r^k\to 0$ as $k \to \infty$, i.e., the iterates approach feasibility;
(2) the objective convergence: $f(x^k) + g(z^k) \to p$ as $k\to\infty$, i.e., the objective function of the iterates approaches the optimal value;
(3) and the dual variable convergence: $y^k\to y$ as $k\to\infty$, where $y$ is a dual optimal point.
Proofs of the residual and objective convergence results are given in \cite{boyd2011distributed}.

\subsection{Optimization}\label{sec:optimization}

As shown in Line~\ref{alg:ADMM:subproblemLoss} in Algorithm~\ref{alg:admm}, we need to solve the subproblem that minimizes the negative of the loss function augmented by a term from the ADMM iteration. It is equivalent to maximizing the sum of the loss function and the negative of the ADMM term.
Among the three examples of supermodular loss functions we proposed, maximizing the  8-connected loss in Equation~\eqref{eq:supermodularLoss}, augmented by a modular term from the ADMM iteration, can be solved by a modified graph-cut. Maximizing the square loss (Equation~\eqref{eq:squareloss}) and the biconvex loss (Equation~\eqref{eq:biconvexloss}) can also be solved efficiently, as we will show in this section. 

Explicitly, we maximize over the sum of a supermodular loss function and a modular function:
\begin{equation}
y_b = \arg\max_{y}  \Delta(y^*,y) + r(y) \label{eq:lossAugmented}
\end{equation}
where $r(y) = - \frac{\rho}{2}(\|y_a-y+u\|_2^2)$ is an asymmetric modular function wrt the misprediction set $\{j | y^{*j} \neq y^j\}$ for a given $y_a$ and $u$ at the current iteration (we discard the supercript $t$ for simplicity). We know that any modular function can be written as 
\begin{equation}\label{eq:modularIsLinear}
r(A) = \sum_{j \in A} w^j
\end{equation}
for some coefficient vector $w \in \mathbb{R}^{|V|}$. 
In our case, we have
\begin{equation}
w^j = - \frac{\rho}{2}(y_a^j+y^{*j}+u^j)^2 + \frac{\rho}{2}(y_a^j-y^{*j}+u^j)^2, \forall j\in V
\end{equation}
Under the assumption that $\Delta(y^*,y_b)$ is a symmetric loss function, such as the square loss in Equation~\eqref{eq:squareloss}, Algorithm~\ref{alg:symmetric} solves the required optimization efficiently.
\begin{algorithm}[htb]
\caption{\label{alg:symmetric} Maximization of Equation~\eqref{eq:lossAugmented} with a symmetric loss function.} 
\begin{algorithmic}[1]
\STATE Sort the vector $w = \langle w^{1},\dots,w^{j},\dots w^{|V|}\rangle$ in decreasing order, denoted $w^{\pi} = \langle w^{\pi^1},\dots,w^{\pi^j},\dots w^{\pi^{|V|}}\rangle$ with $\pi$ the permutation that achieves this sorting;
\FOR{$j = 1$ \TO $|V|$} \label{line:forAllj}
\STATE \label{line:marginalvalues} Calculate the marginal values 
\begin{displaymath}
\ell_{\text{marginal}}(j) = \ell(j)-\ell(j-1);
\end{displaymath}
\STATE Calculate the augmented marginal values by adding the sorted modular vector $w^{\pi}$
\begin{displaymath}
\ell_{\text{augmented}}(j)= \ell_{\text{marginal}}(j) + w^{\pi^j};
\end{displaymath}
\STATE Calculate the loss augmented values, which are the cumulative sum of the marginal values
\begin{displaymath}
\ell_{\text{all}}(j) = \sum_{k=1}^{j} \ell_{\text{augmented}}(k);
\end{displaymath}
\ENDFOR
\STATE Find the maximum of $\ell_{\text{all}}(j)$ wrt $j \leq |V|$, denote $A_{\text{opt}}$
\RETURN $y$ such that $\{j|y^{*j}\neq y^j\} = A_{\text{opt}}$.
\end{algorithmic}
\end{algorithm}
By exploiting the symmetry properties of the loss function, all operations in Algorithm~\ref{alg:symmetric} are linear except for the first sorting operation.

For $\Delta(y^*,y_b)$ biconvex, as in Equation~\eqref{eq:biconvexloss}, we analyze the problem wrt to false positives and false negatives separately. For a given ground truth labeling $y^*$, we note that the subset  $M:=\{ j|  y^{*j} = +1\}$ is the set of all possible false negatives. $|M| = m$ following Equation~\eqref{eq:groundtruthPos}, $\{ j|  y^{*j} = -1\}$ is the set of all possible false positives, i.e.\ $V\setminus M$.
We first rewrite the modular function as a coefficient vector of ground truth positive entries and ground truth negative entries separately,
\begin{align}
& r(A) = \sum_{j \in A} w_{\text{neg}}^j, \forall A\subseteq M,\label{eq:modularFalseNeg} \\
& r(A) = \sum_{j \in A} w_{\text{pos}}^j, \forall A\subseteq V\setminus M, \label{eq:modularFalsePos}
\end{align}
for two coefficient vectors $w_{\text{neg}}, w_{\text{pos}} \in \mathbb{R}^{|V|}$.

Under the assumption that $\Delta(y^*,y_b)$ is a biconvex function, as in Equation~\eqref{eq:biconvexloss}, Algorithm~\ref{alg:biconvex} is an efficient solver for the resulting optimization.
\begin{algorithm}[]
\caption{\label{alg:biconvex} Maximization of Equation~\eqref{eq:lossAugmented} with biconvex-supermodular loss function} 
\begin{algorithmic}[1]
\STATE Sort $w_{\text{neg}}$ and $w_{\text{pos}}$ in decreasing order, denoted $w_{\text{neg}}^{\pi_{-}}$ and $w_{\text{pos}}^{\pi_{+}}$;
\FOR{$j=0$ \TO $|M|$}\label{line:forAlln}
\FOR{$k = 1$ \TO $|V\setminus M|$} \label{line:forAllp}
\STATE Calculate the marginal values wrt one false positive
\begin{displaymath}
\ell_{\text{marginal}}(j,k) = \ell(j,k)-\ell(j,k-1);
\end{displaymath}
\STATE Calculate the augmented marginal values by adding the sorted modular vector $w_{\text{pos}}^{\pi_{+}}$
\begin{displaymath}
\ell_{\text{augmented}}(j,k)= \ell_{\text{marginal}}(j,k) + w_{\text{pos}}^{\pi_{+}^k};
\end{displaymath}
\STATE Calculate the loss augmented values, which are the cumulative sums of the marginal values
\begin{displaymath}
\ell_{\text{pos}}(j,k) = \sum_{l=1}^{k} \ell_{\text{augmented}}(j,l);
\end{displaymath}
\ENDFOR
\STATE Find the maximum of $\ell_{\text{pos}}(j,k)$ wrt $k\subseteq (V\setminus M)$, denote $k_{\text{opt}}$ for the current $j$.
\ENDFOR
\FOR{each pair $(k_{\text{opt}},j\neq 0)$}
\STATE Calculate the marginal values wrt one false negative 
\begin{displaymath}
\ell_{\text{marginal}}(j,k_{\text{opt}}) = \ell(j,k_{\text{opt}})-\ell(j-1,k_{\text{opt}});
\end{displaymath}
\STATE Calculate the augmented marginal values by adding the sorted modular vector $w_{\text{neg}}^{\pi_{-}}$
\begin{displaymath}
\ell_{\text{augmented}}(j,k_{\text{opt}})= \ell_{\text{marginal}}(j,k_{\text{opt}}) + w_{\text{neg}}^{\pi_{-}^j};
\end{displaymath}
\STATE Calculate the loss augmented values, which are the cumulative sums of the marginal values
\begin{displaymath}
\ell_{\text{neg}}(j,k_{\text{opt}}) = \sum_{l=1}^{j} \ell_{\text{augmented}}(l,k_{\text{opt}});
\end{displaymath}
\ENDFOR
\STATE Find the maximum of $\ell_{\text{neg}}(j,k_{\text{opt}})$ wrt $j$, denote $j_{\text{opt}}$, along with the $k_{\text{opt}}$ for this $j_{\text{opt}}$.
\RETURN $y$ such that $\{i|y^{*i}\neq y^i\} = j_{\text{opt}}\cup k_{\text{opt}}$ .
\end{algorithmic}
\end{algorithm}

\begin{figure}[t]  \centering
\subfigure[]{\label{fig:rgb}\includegraphics[width=0.2\linewidth]{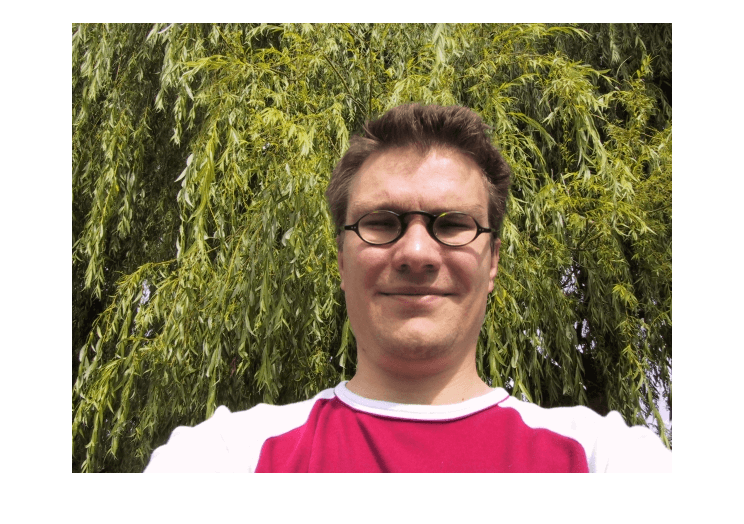}}
\subfigure[]{\label{fig:gt}\includegraphics[width=0.2\linewidth]{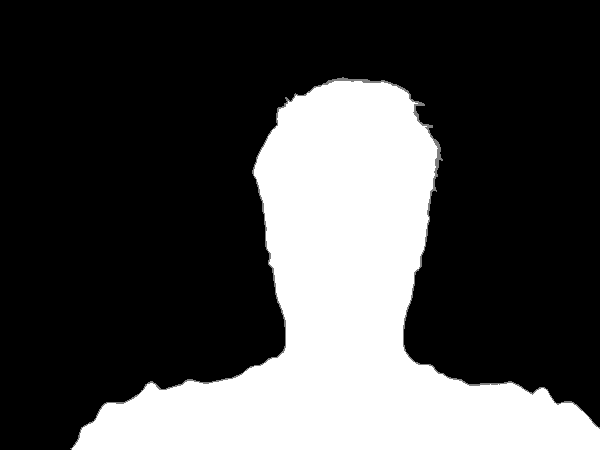}}
\subfigure[]{\label{fig:label}\includegraphics[width=0.2\linewidth]{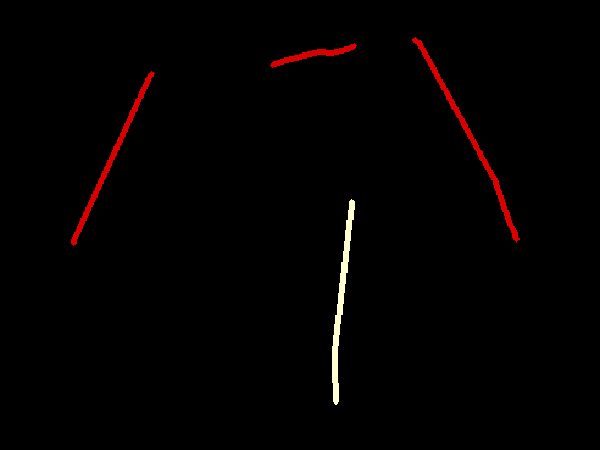}}
\subfigure[]{\label{fig:ext}\includegraphics[width=0.2\linewidth]{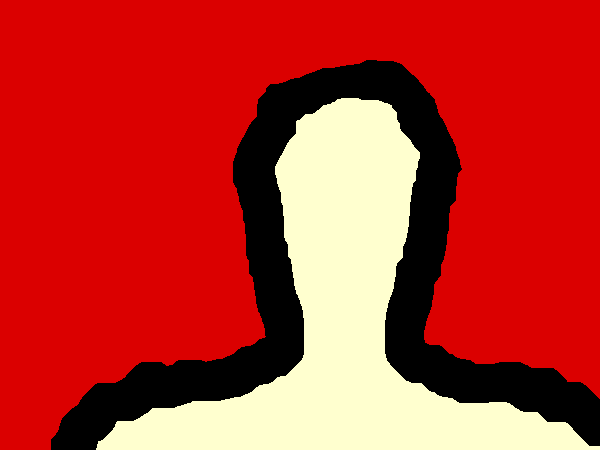}}\\
\subfigure[]{\label{fig:rgbFG}\includegraphics[width=0.2\linewidth]{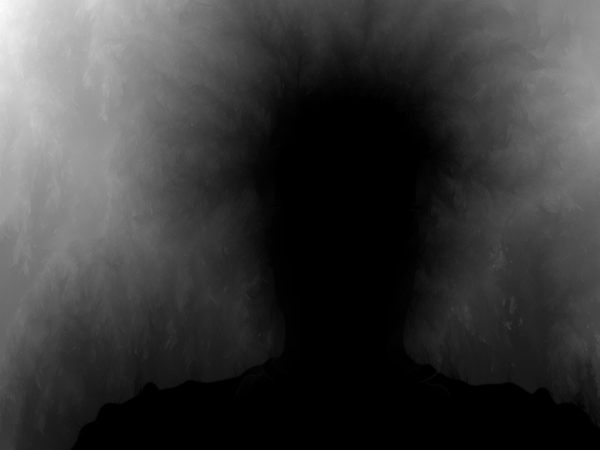}}
\subfigure[]{\label{fig:rgbBG}\includegraphics[width=0.2\linewidth]{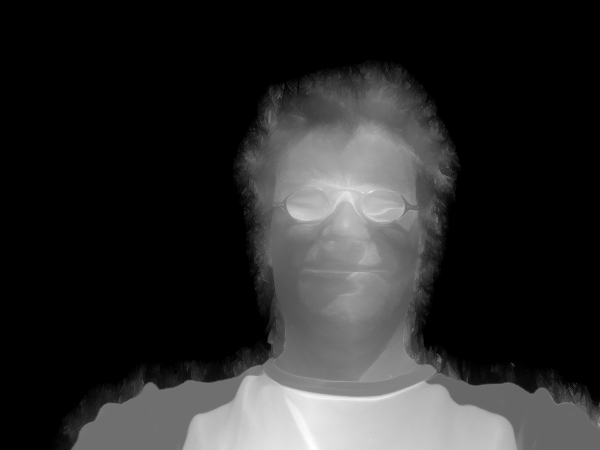}}
\subfigure[]{\label{fig:rgbGMM}\includegraphics[width=0.2\linewidth]{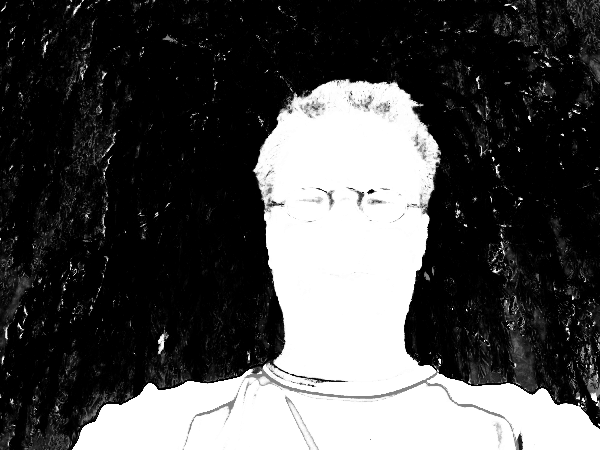}}
\subfigure[]{\label{fig:rgbGMMFG}\includegraphics[width=0.2\linewidth]{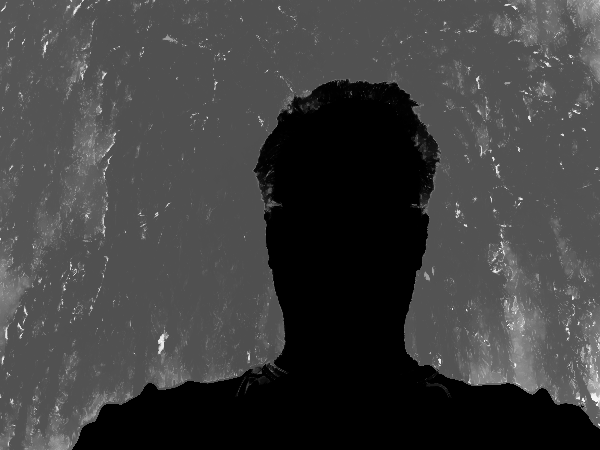}}
\caption{\label{fig:examples} Example training images and the extracted features.  \ref{fig:rgb} original RGB image; \ref{fig:gt} groundtruth; \ref{fig:label} the user-labelled seeds; \ref{fig:ext} the extended seeds; \ref{fig:rgbFG} the distance features to foreground seed based on RGB space; \ref{fig:rgbBG}  the distance features to background seed based on RGB space; \ref{fig:rgbGMM}  the GMM appearance model based on RGB space; \ref{fig:rgbGMMFG}  the distance features to foreground seed based on the RGB-space GMM appearance.}
\end{figure}

\section{Experimental Results}
In this section, we consider a foreground/background segmentation task. We compare the prediction using our proposed supermodular loss functions with the prediction using Hamming loss.  We show that: (i) our proposed splitting strategy is orders of magnitude faster than the 
minimum norm point algorithm; (ii) our strategy yields results nearly identical to a LP-relaxation while being much faster in practice; and (iii) training with the same supermodular loss as during test time yields better performance.  

\paragraph{Datasets} The dataset provided  by \cite{gulshan2010geodesic,blake2004interactive} contains 
color images in RGB space, ground truth foreground/background segmentations, and user-labelled seeds (see Figure~\ref{fig:rgb}, Figure~\ref{fig:gt}, and Figure~\ref{fig:label}, respectively). As we are discriminatively training a class specific segmentation system in our experiments, we focus on the images in which the foreground objects are \textit{people}. 
We compute in total 18 unary features following \cite{osokin2014perceptually}.  
Figure~\ref{fig:rgbFG} to Figure~\ref{fig:rgbGMMFG} show examples of the extracted features. 

\paragraph{IBSR Dataset}
We additionally utilise the Internet Brain Segmentation Repository (IBSR) dataset \cite{braindata}, which consists of T1-weighted MR images. Images and masks have been linearly registered and cropped to $145\times158\times123$. We choose one horizontal slice within each volume and we follow the feature extraction procedure as in \cite{stavros2014discrete}.

\subsection{Training with the 8-connected loss function}
We use the ADMM splitting strategy to solve the minimization problem in Equation~\eqref{eq:argmin}.
We use the GCMex - MATLAB wrapper for the Boykov-Kolmogorov graph cuts algorithm \cite{fulkerson2009,boykov2004,boykov2001,kolmogorov2004energy} to solve the optimization problems on Line~\ref{alg:ADMM:subproblemInference} for the inference. 
Results computed with different values of $\gamma>0$ are shown in Table~\ref{tab:loss} and Table~\ref{tab:brain}. 
During the training stage, we use $\rho = 0.1$ for the ADMM step-size parameter. The regularization parameter $C$ in Equation~\eqref{eq:svmmargin} is chosen by cross-validation in the range $\{ 10^i | -2\leq i \leq 2 \}$. We additionally train and test with Hamming loss as a comparison. 

At test time, we have computed the unnormalized Hamming loss, the intersection over union loss (IoU), and our 8-connected loss for each training scenario.  We have performed several random train-test splits in order to compute error bars on the loss estimates.
During testing stage, we evaluate one prediction as the average loss value for all images in the testing set. We compare different loss functions during training and during testing and measure the empirical loss values.
We  randomly split the data into training and testing sets five times to obtain an estimate of the average performance.

\begin{figure*}[tbp] \centering
{\includegraphics[width=0.15\textwidth]{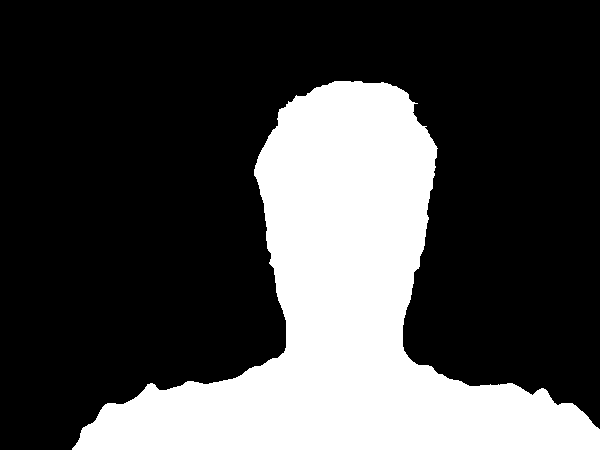}}
{\includegraphics[width=0.15\textwidth]{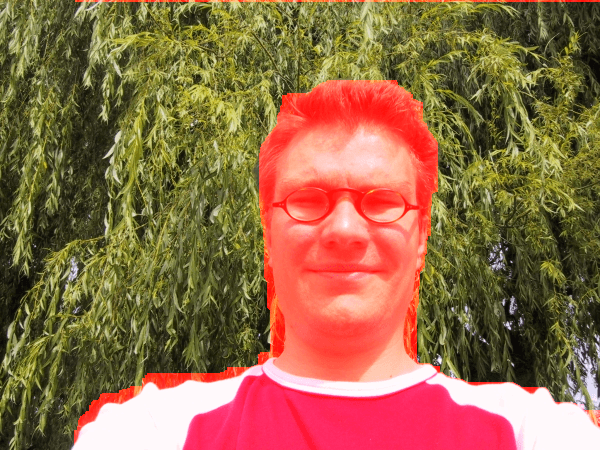}}
{\label{fig:1admm}\includegraphics[width=0.15\textwidth]{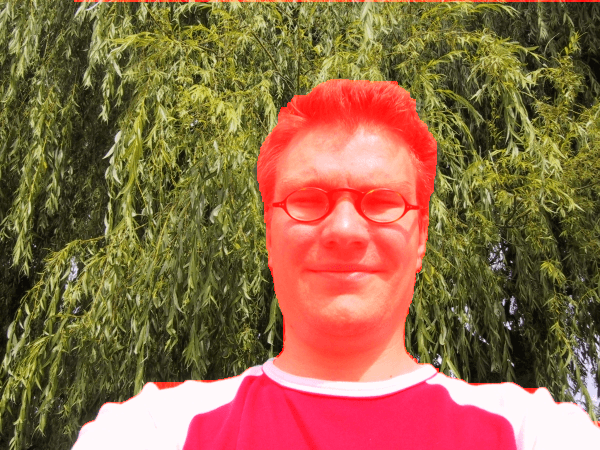}}
{\includegraphics[width=0.15\textwidth]{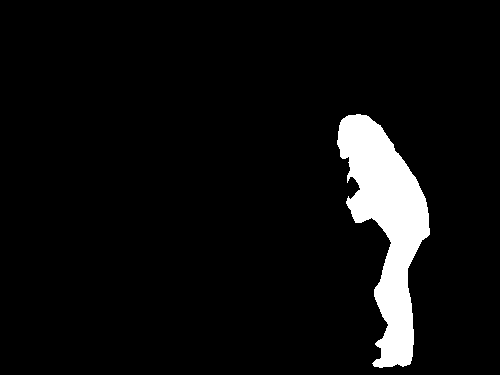}}
{\includegraphics[width=0.15\textwidth]{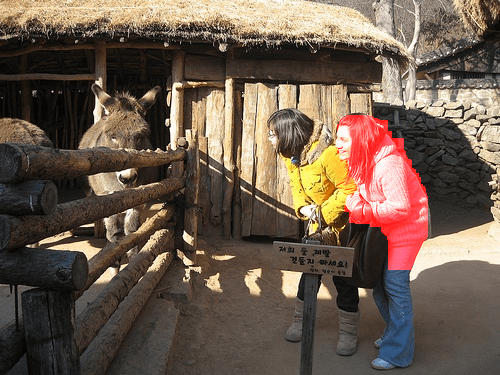}}
{\includegraphics[width=0.15\textwidth]{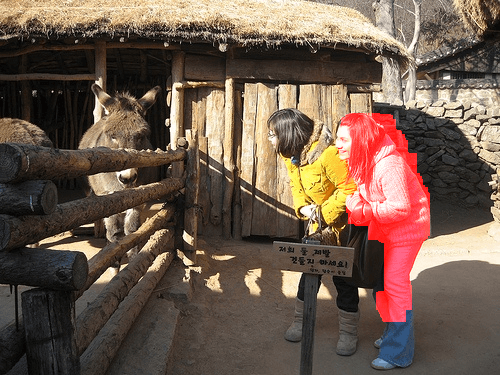}}
{\includegraphics[width=0.15\textwidth]{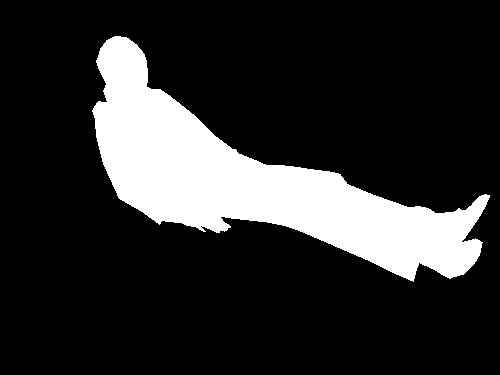}}
{\includegraphics[width=0.15\textwidth]{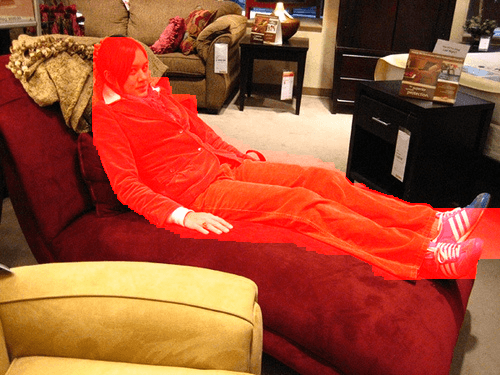}}
{\includegraphics[width=0.15\textwidth]{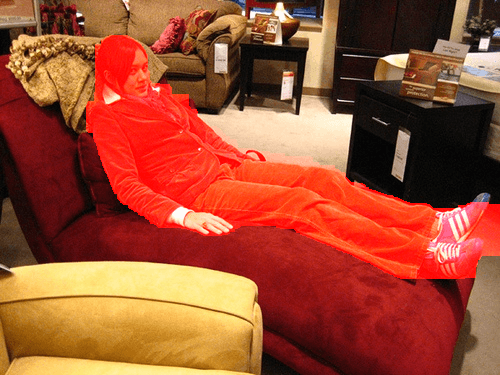}}
{\includegraphics[trim={0 2.5cm 0 0},clip,width=0.15\textwidth]{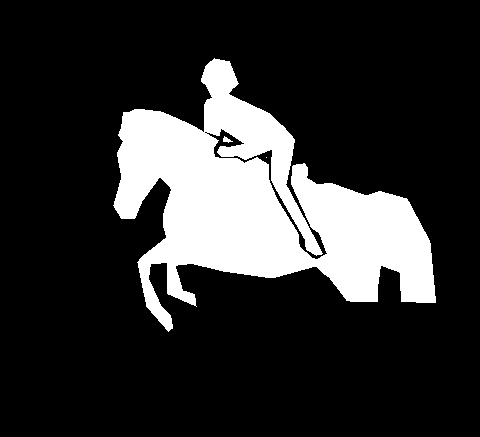}}
{\includegraphics[trim={0 2.5cm 0 0},clip,width=0.15\textwidth]{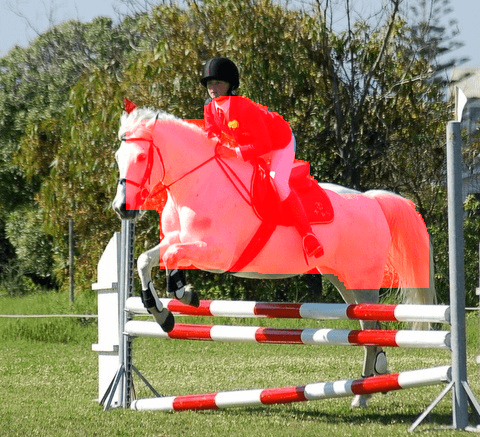}}
{\includegraphics[trim={0 2.5cm 0 0},clip,width=0.15\textwidth]{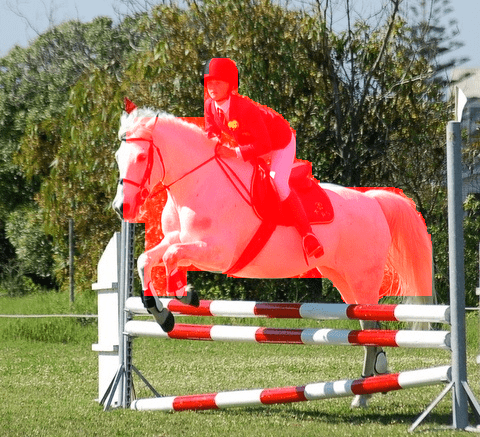}}
{\includegraphics[trim={0 2.5cm 0 2.5cm},clip, width=0.15\textwidth]{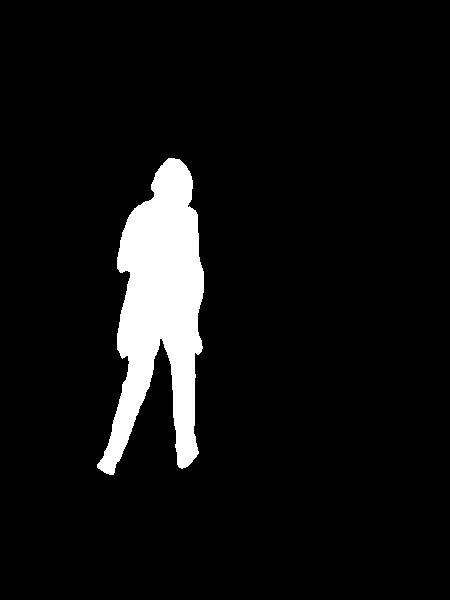}}
{\includegraphics[trim={0 2.5cm 0 2.5cm},clip,width=0.15\textwidth]{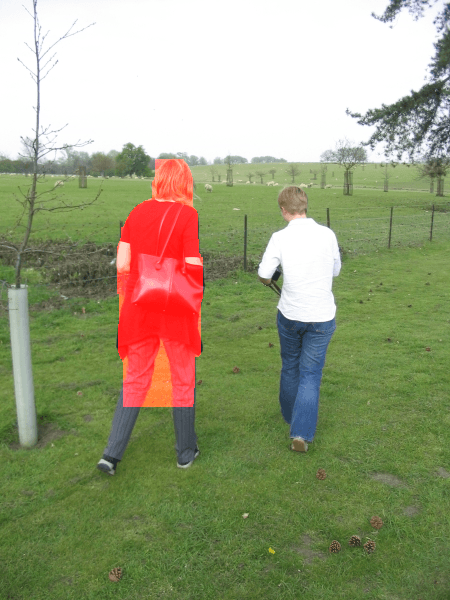}}
{\includegraphics[trim={0 2.5cm 0 2.5cm},clip,width=0.15\textwidth]{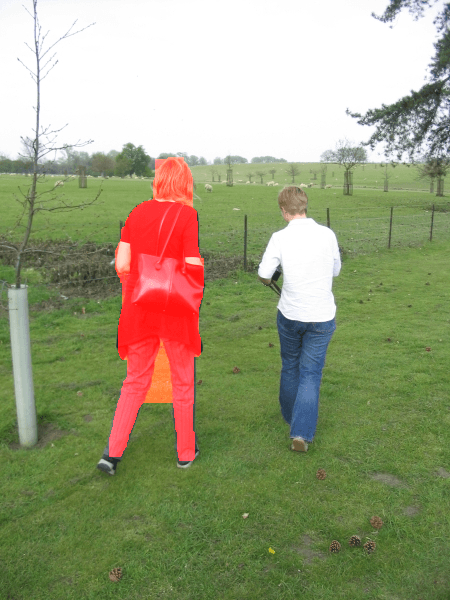}}
{\includegraphics[trim={0 2.5cm 0 2.5cm},clip,width=0.15\textwidth]{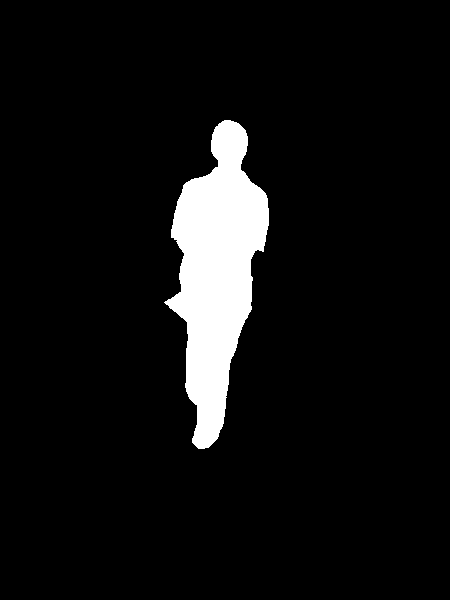}}
{\includegraphics[trim={0 2.5cm 0 2.5cm},clip,width=0.15\textwidth]{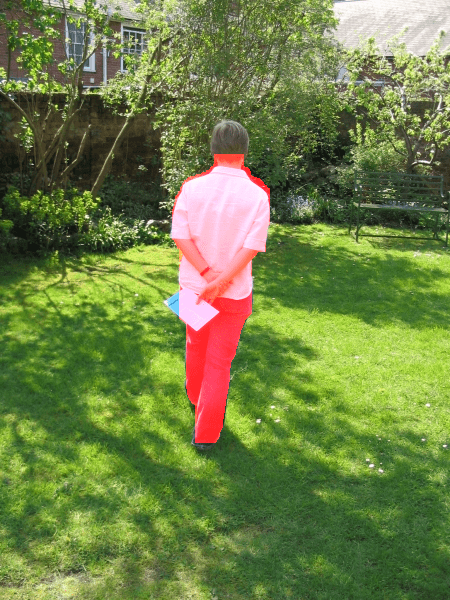}}
{\includegraphics[trim={0 2.5cm 0 2.5cm},clip,width=0.15\textwidth]{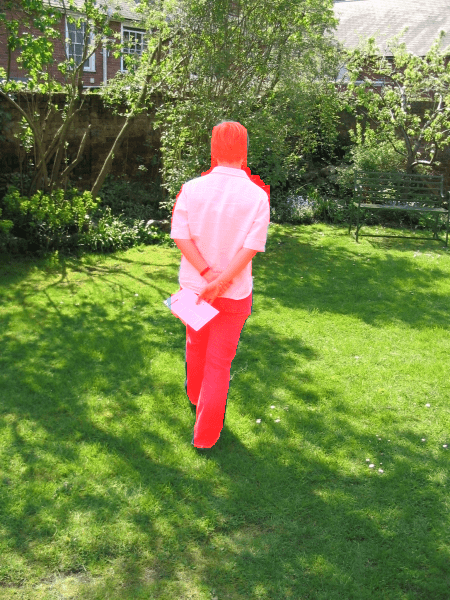}}
\subfigure[groundtruth]{\includegraphics[trim={0 2.5cm 0 2.5cm},clip,width=0.15\textwidth]{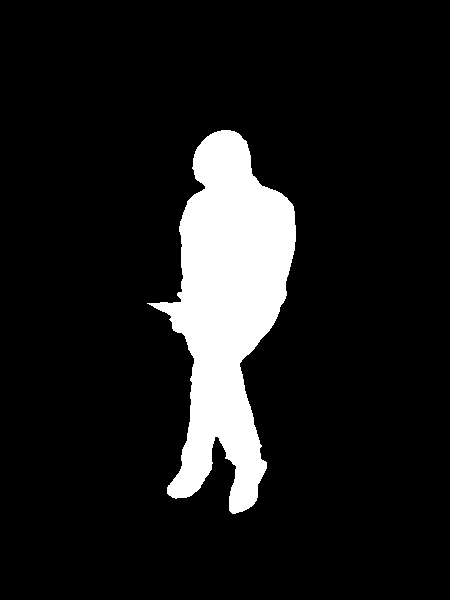}}
\subfigure[Hamming]{\includegraphics[trim={0 2.5cm 0 2.5cm},clip,width=0.15\textwidth]{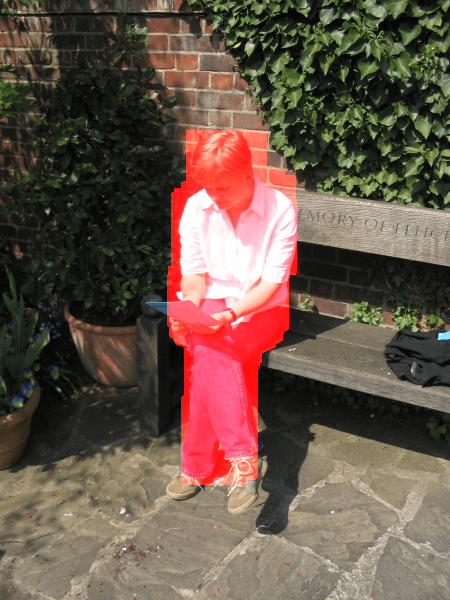}}
\subfigure[8-connected]{\includegraphics[trim={0 2.5cm 0 2.5cm},clip,width=0.15\textwidth]{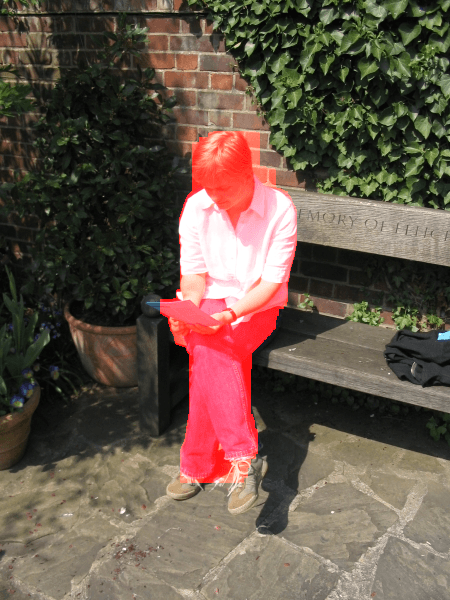}}
\subfigure[groundtruth]{\includegraphics[width=0.15\textwidth]{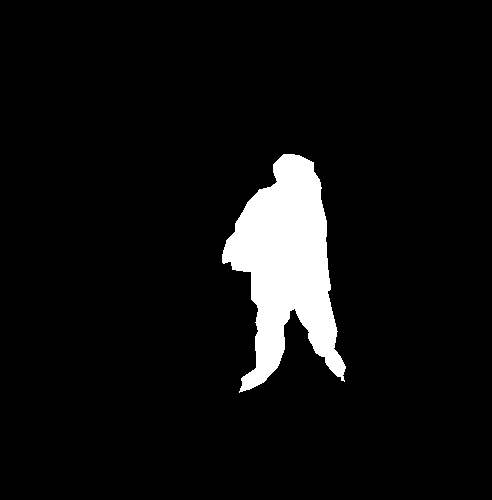}}
\subfigure[Hamming]{\includegraphics[width=0.15\textwidth]{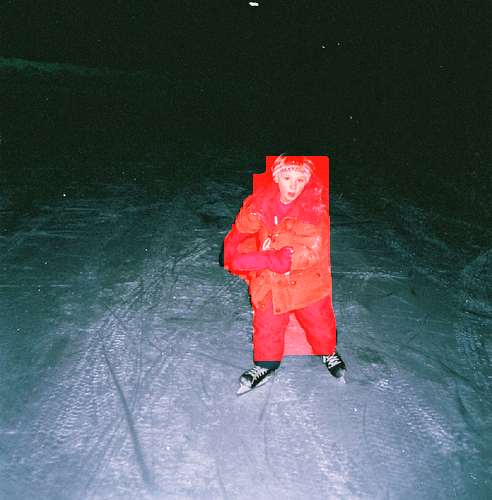}}
\subfigure[8-connected]{\includegraphics[width=0.15\textwidth]{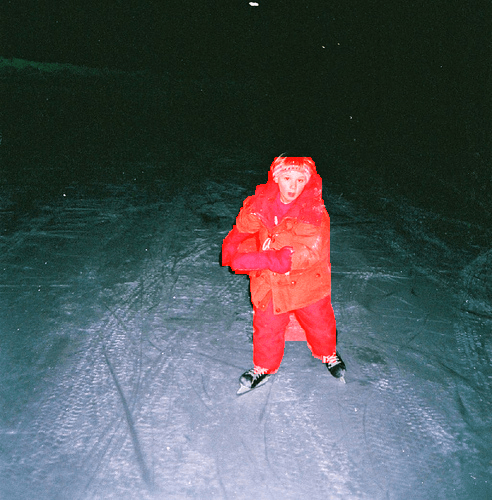}}
\caption{\label{fig:segmentation} The segmentation results of prediction trained with Hamming loss (columns 2 and 5) and the 8-connected loss (columns 3 and 6). The supermodular loss performs better on foreground object boundary than Hamming loss does, and it achieves better prediction on the elongated structures of the foreground object e.g.\ the heads and the legs.
}
\end{figure*}

\paragraph{Empirical Results} 

We show in Table~\ref{tab:loss} and Table~\ref{tab:brain} the empirical error values by training with the 8-connected supermodular loss compared with training with the Hamming loss (labeled 0-1).
In Table~\ref{tab:loss} we show the results by using different values of $\gamma$ for the 8-connected loss. We notice that in all cases, training with the same supermodular loss as used for testing has achieved the best performance, i.e.\ lower error values. Training with the supermodular loss even outperforms training with Hamming loss when measured by Hamming loss on the test set.
Wilcoxon sign rank tests are shown in Table~\ref{tab:pvalueErr}, which shows that training with the supermodular loss functions gives significantly better results in nearly all cases. 

We have additionally tried training with a joint graph cuts loss augmented inference using the pairwise potentials illustrated in Figure~\ref{fig:notSubmodularJointGraphcuts}.  However, due to the non-submodular potentials, the graph cuts procedure does not correctly minimize the energy resulting in incorrect cutting planes that causes optimization to fail after a small number of iterations.  The performance of this system was effectively random, and we have not included these values in Table~\ref{tab:loss}.

Qualitative segmentation results are shown in Figure~\ref{fig:segmentation}. 
In Figure~\ref{fig:segDiff} and \ref{fig:segDiff2} we show a pixelwise comparison of the predictions. The 8-connected loss achieves better performance on the foreground/background boundary, as well as on elongated structures of the foreground object, such as the head and legs, especially when the appearance of the foreground is similar to the background.

\begin{table}[]
\resizebox{0.99\linewidth}{!}{
\begin{tabular}{c|c|c|c|c}
\hline
\multicolumn{2}{c|}{} & \multicolumn{3}{|c}{Eval.} \\
\cline{3-5}
\multicolumn{2}{c|}{$\gamma = 0.25$} & 0-1(1e3) & $\Delta_8$(1e3) & IoU \\
\hline
\multirow{2}{*}{\rotatebox{90}{\small{Train.}}}
 & 0-1		&	$ 3.245\pm 0.137$ &  $6.097\pm 0.267$ & $0.2209\pm 0.0075$ \\
 & $\Delta_8$ &	$ 3.097\pm 0.141$ &  $5.807\pm 0.274$ & $0.2166\pm 0.0086$\\
\hline
\end{tabular}
}
\resizebox{0.99\linewidth}{!}{
\begin{tabular}{c|c|c|c|c}
\hline
\multicolumn{2}{c|}{} & \multicolumn{3}{|c}{Eval.} \\
\cline{3-5}
\multicolumn{2}{c|}{$\gamma = 0.5$} & 0-1(1e3) & $\Delta_8$(1e3) & IoU \\
\hline
\multirow{2}{*}{\rotatebox{90}{\small{Train.}}}
 & 0-1		&	$3.245\pm 0.137$ &  $8.950\pm 0.398$ & $0.2209\pm 0.0075$ \\
 & $\Delta_8$ &	$3.032\pm 0.149$ &  $8.329\pm 0.426$ & $0.2123\pm 0.0071$ \\
\hline
\end{tabular}
}

\resizebox{0.99\linewidth}{!}{
\begin{tabular}{c|c|c|c|c}
\hline
\multicolumn{2}{c|}{} & \multicolumn{3}{|c}{Eval.} \\
\cline{3-5}
\multicolumn{2}{c|}{$\gamma = 0.75$} & 0-1(1e3) & $\Delta_8$(1e3) & IoU \\
\hline
\multirow{2}{*}{\rotatebox{90}{\small{Train.}}}
 & 0-1		&	$ 3.245\pm 0.137$ &  $11.802\pm 0.528 $ & $0.2209\pm 0.0075$ \\
 & $\Delta_8$ &	$ 2.841\pm 0.138$ &  $10.250\pm 0.519 $ & $0.2054\pm 0.0066$ \\
\hline
\end{tabular}
}
\resizebox{0.99\linewidth}{!}{
\begin{tabular}{c|c|c|c|c}
\hline
\multicolumn{2}{c|}{} & \multicolumn{3}{|c}{Eval.} \\
\cline{3-5}
\multicolumn{2}{c|}{$\gamma = 1.0$} & 0-1(1e3 & $\Delta_8$(1e3)) & IoU \\
\hline
\multirow{2}{*}{\rotatebox{90}{\small{Train.}}}
 & 0-1		&	$ 3.245\pm 0.137$ &  $14.655\pm 0.659$ & $0.2209\pm 0.0075$ \\
 & $\Delta_8$ &	$ 2.863\pm 0.124$ &  $12.822\pm 0.585$ & $0.2065\pm 0.0075$ \\
\hline
\end{tabular}
}
\caption{\label{tab:loss}The cross comparison of average loss values (with standard error) using the 8-connected loss function ($\Delta_8$) and Hamming loss (labeled 0-1) during training.  During testing, we evaluate with the Hamming loss, the 8-connected loss and the Intersection over union loss (labeled IoU). Training with the same supermodular loss functions as used during testing yields the best results. Training with supermodular losses even outperforms the Hamming loss in terms of evaluating by Hamming loss. 
}
\end{table}

\begin{table}\centering
\begin{tabular}{c|c|c|c}
\hline
								& \multicolumn{3}{|c}{Eval.} \\
\cline{2-4}
								& 0-1		& $\Delta_8$ & IoU \\
\hline
$\Delta_8, \gamma = 0.25$ vs 0-1 	& $0.0195$ & $0.0195$ & $0.1055$ \\
\hline
$\Delta_8,\gamma = 0.5$ vs 0-1 	& $0.0371$ & $0.0371$ & $0.0273$\\
\hline
$\Delta_8,\gamma = 0.75$ vs 0-1 	& $0.0020$ & $0.0020$ & $0.0020$ \\
\hline
$\Delta_8,\gamma = 1.0$ vs 0-1 	& $0.0195$ & $0.0273$ & $0.0488$\\
\hline
\end{tabular}
\caption{\label{tab:pvalueErr}Wilcoxon sign rank test on the error values in Table~\ref{tab:loss} comparing training with Hamming loss (labeled 0-1) and the 8-connected loss $\Delta_8$ (with different values of $\gamma$).}
\end{table}

\begin{table}[]\centering
\resizebox{0.99\linewidth}{!}{
\begin{tabular}{c|c|c|c|c}
\hline
\multicolumn{2}{c|}{} & \multicolumn{3}{|c}{Eval.} \\
\cline{3-5}
\multicolumn{2}{c|}{$\gamma = 0.5$} & $\Delta_8$(1e3) & 0-1(1e3) & IoU \\
\hline
\multirow{2}{*}{\rotatebox{90}{\small{Train.}}}
 & $\Delta_8$ &	$ \mathbf{2.616 \pm 0.612} $  &  $1.297\pm0.224$ & $0.169\pm 0.018$ \\
 & 0-1       &	$2.885\pm0.765$            &   $1.393\pm0.279$ & $0.173\pm 0.019 $ \\
\hline
\end{tabular}
}
\caption{The cross comparison of average loss values on IBSR dataset (cf.\ comments for Table~\ref{tab:loss}).\label{tab:brain} }
\end{table}

We also ran a baseline comparing non-submodular loss augmented inference with the QPBO approach \cite{QPBOpaper}.  We computed pairwise energies as in Figure~\ref{fig:8connectedLoss}.  QPBO found loss augmented energies across the dataset of $1.1\times10^6 \pm 3\times10^5$ while ADMM found loss augmented energies of $3.7\times10^6 \pm 8 \times 10^5$, a substantial improvement.

\subsection{Training with the square loss and the biconvex loss}\label{sec:experimentNew}
We show in Table~\ref{tab:SquareAndBiconvex} the empirical error values by training with the square loss (labeled $\Delta_S$), and with the biconvex loss (labled $\Delta_C$), compared to training with the Hamming loss (labeled 0-1).
We can see that training with the same supermodular loss during test time yields better performance than training with the Hamming loss, which validates the correctness of the ADMM splitting strategy with more loss/inference combinations.
\begin{table*}[ht]\centering
\begin{tabular}{c|c|c|c|c|c}
\hline
\multicolumn{2}{c|}{} & \multicolumn{4}{|c}{Eval.} \\
\cline{3-6}
\multicolumn{2}{c|}{} & 0-1(1e3) & Square loss(1e3)  $\Delta_{\text{S}}$ & Biconvex loss $\Delta_{\text{C}}$  & IoU loss\\
\hline
\multirow{3}{*}{\rotatebox{90}{\small{Train.}}}
 & 0-1 					& $3.245\pm 0.137$ & $0.257\pm 0.019$ & $0.217\pm 0.012$ & $0.221\pm 0.007$ \\
 & $\Delta_{\text{S}}$	& $2.928\pm 0.418$ & $0.251\pm 0.040$ & $0.176\pm 0.022$ & $0.196\pm 0.016$ \\
 & $\Delta_{\text{C}}$	& $2.394\pm 0.166$ & $0.202\pm 0.019$ & $0.149\pm 0.011$ & $0.179\pm 0.007$ \\
\hline
\end{tabular}
\caption{The cross comparison of average loss values (with standard error) using supermodular losses and Hamming loss (labeled 0-1) during training and test time). We additionally evaluate on the intersection over union loss.}\label{tab:SquareAndBiconvex}
\end{table*}

Qualitative segmentation results are shown in Figure~\ref{fig:segmentation2}. Pixelwise comparison of the segmentation results using the square loss and the biconvex loss are shown in Figure~\ref{fig:segDiffSquare} and Figure~\ref{fig:segDiffBiconvex}, respectively.

\subsection{Computation Time} 
In addition, when using the 8-connected loss, we compare the time of one calculation of the loss augmented inference by the ADMM algorithm and by the minimum norm point algorithm \cite{fujishige2005submodular} (MinNorm). For MinNorm, we use the implementation provided in the SFO toolbox \cite{krause2010sfo}. Although it has been proven that in $t$ iterations, the MinNorm returns an $O(1/t)$-approximate solution \cite{chakrabarty2014provable}, the first step of this algorithm is to find a point in the submodular polytope, which alone is computationally intractable even for small $600\times400$ pixel images. Therefore, we measure the computation time on downsampled images, showing 
the growth in computation as a function of image size (Figure~\ref{fig:timehist} and Figure~\ref{fig:timeBox}). The running times are recorded on a machine with a 3.20GHz CPU.  Similarly, a dual-decomposition baseline took orders of magnitude longer computation than the ADMM approach, following known convergence results \cite{boyd2011distributed}.

We measure the computation time for 120 calculations of the loss augmented inference by ADMM and MinNorm on different sized images. 
From Figure~\ref{fig:timehist} and Figure~\ref{fig:timeBox} we can see that ADMM is always faster than the MinNorm by a substantial margin, and around 100 times faster when the problem size reaches $10^3$. The computing time for both ADMM and MinNorm vary approximately linearly in log-log scale, while MinNorm has a higher slope, suggesting a worse big-$\mathcal{O}$ computational complexity.  We note that theoretical bounds on MinNorm are currently weak and the exact complexity is unknown \cite{chakrabarty2014provable}. 

Although it is immediately clear from Figure~\ref{fig:timeBox} that ADMM is substantially faster than the minimum norm point algorithm, we have performed Wilcoxon sign rank tests that show this difference is significant with $p<10^{-20}$ in all settings.
\begin{figure}[tbp]
\centering
\subfigure[Size=600]{\label{fig:timehist600}\includegraphics[width=0.224\textwidth]{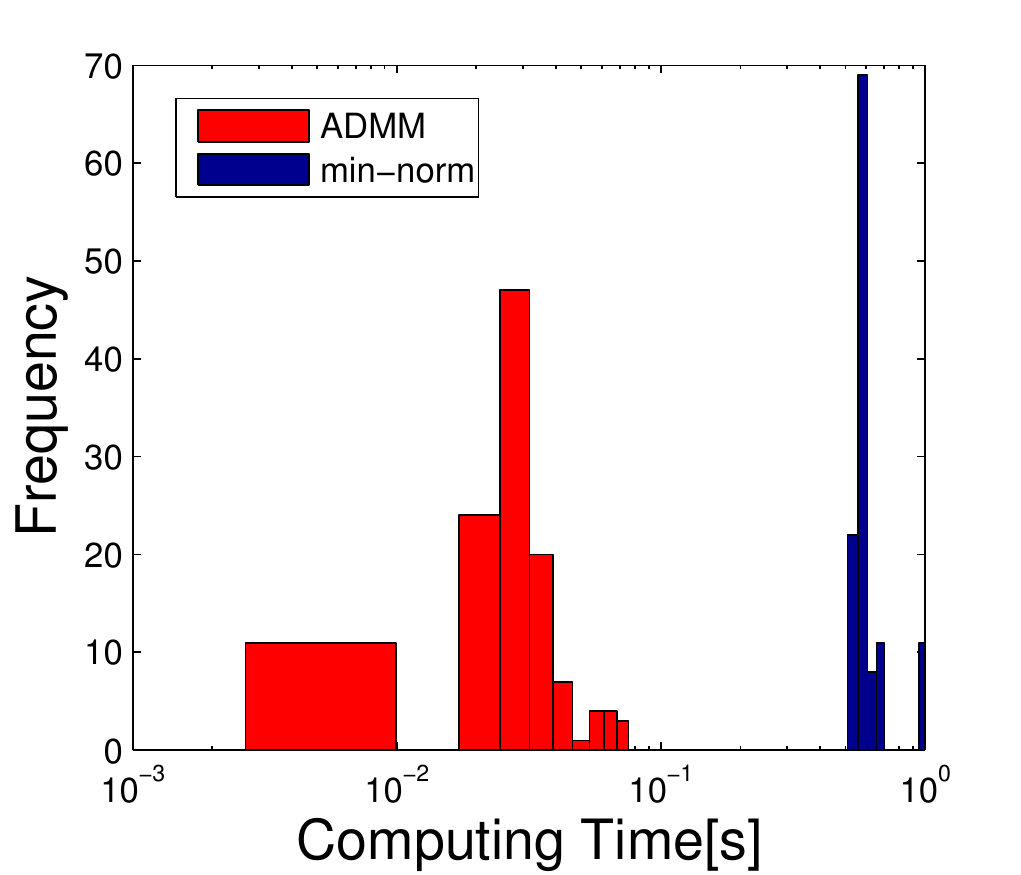}} 
\subfigure[Size=1200]{\label{fig:timehist1200}\includegraphics[width=0.224\textwidth]{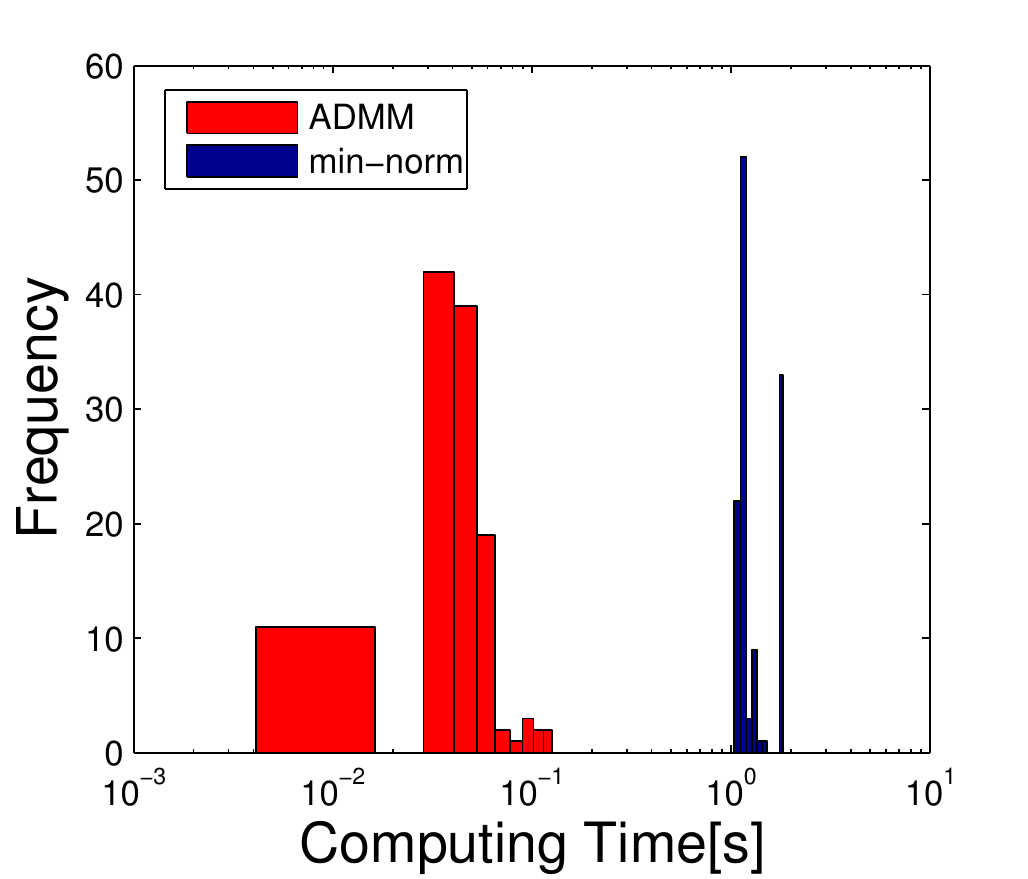}}
\subfigure[Size=2400]{\label{fig:timehist2400}\includegraphics[width=0.224\textwidth]{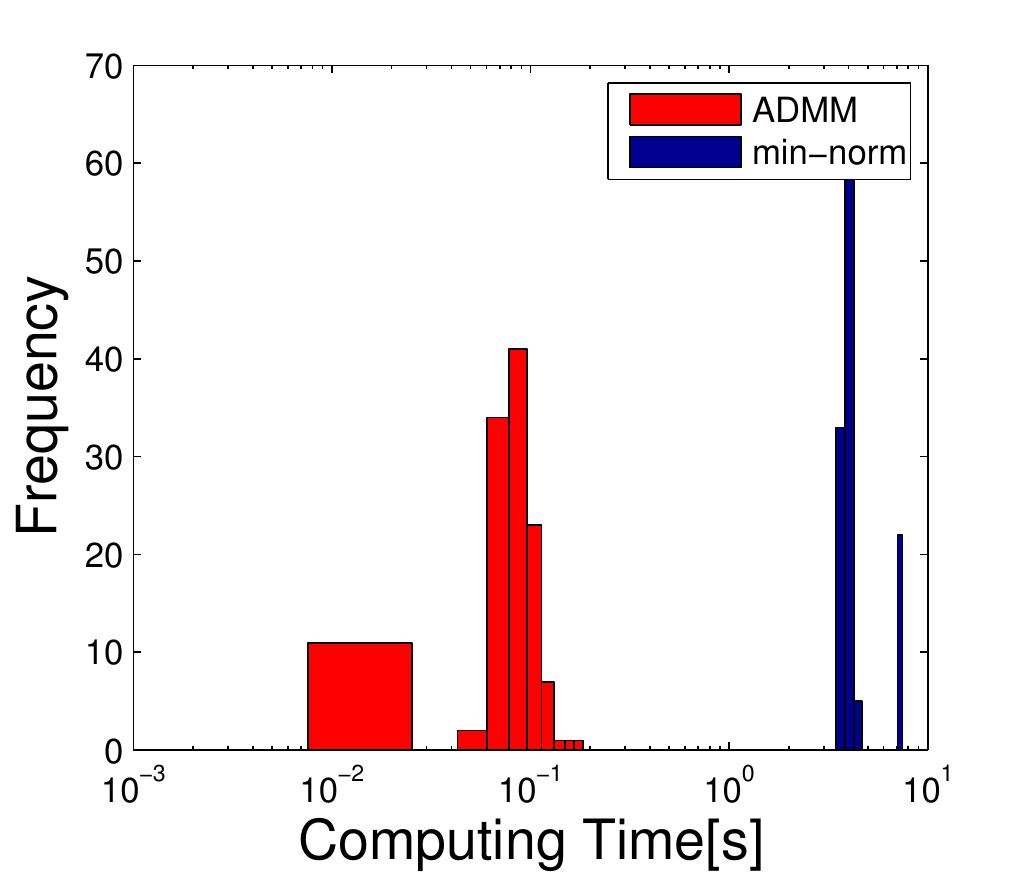}}
\subfigure[Size=4800]{\label{fig:timehist4800}\includegraphics[width=0.224\textwidth]{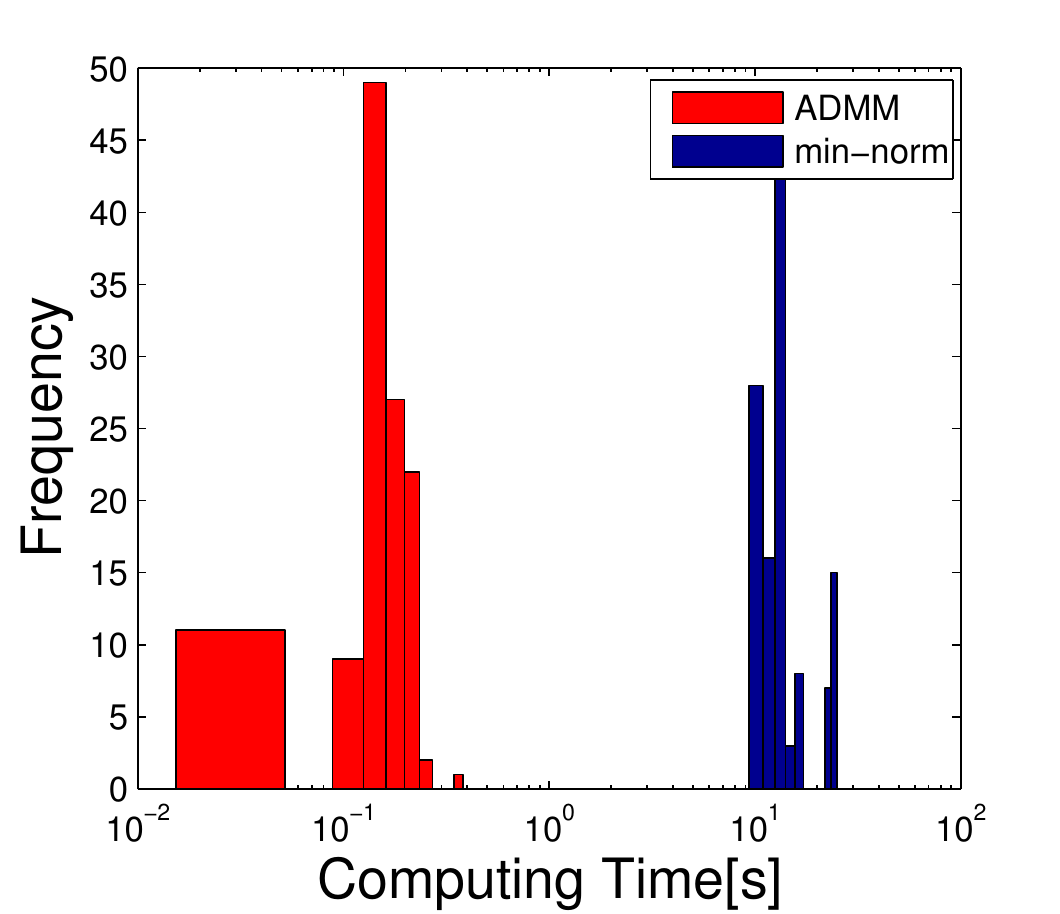}}
\caption{\label{fig:timehist} The computing time for the loss augmented inference, on different problem sizes. The {\color{red}red} histograms stands for ADMM and the {\color{blue}blue} for MinNorm. The calculation by ADMM is always faster than by MinNorm, and there is no overlap between the computing time by the two methods.}
\end{figure} 
\begin{figure}[]
\centering
\includegraphics[width=0.85\linewidth]{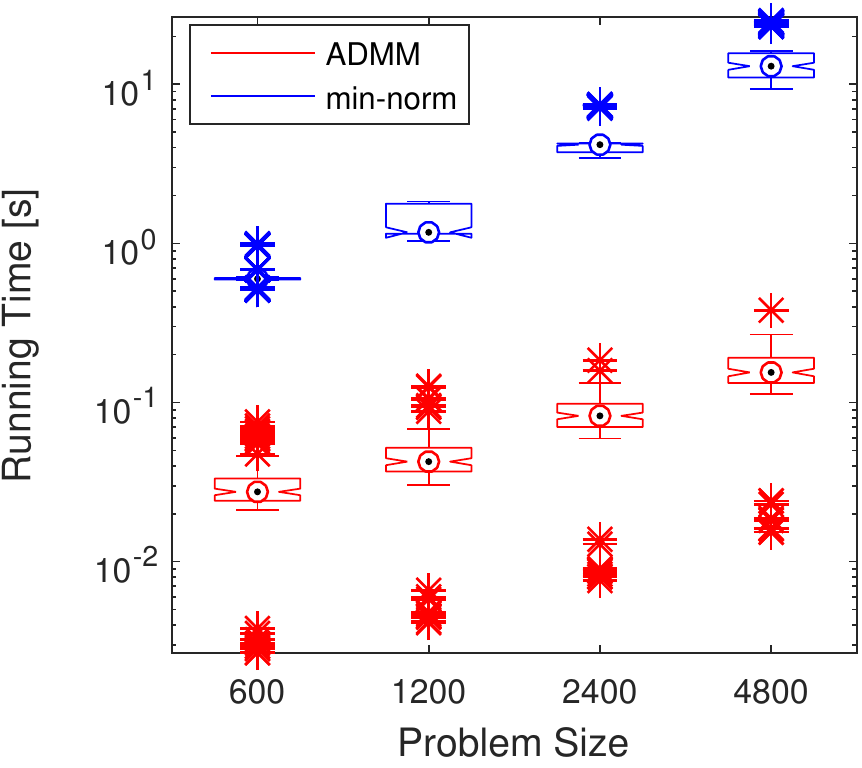}
\caption{\label{fig:timeBox} The running time increase along with the problem size. Both algorithm increase linearly in log scale while the ADMM has a time reduction from $10$ times to $10^2$ times along with the increase of the problem size.}
\end{figure}

\subsection{Comparison to LP-relaxation} 

We additionally compare ADMM to an LP relaxation procedure for the loss augmented inference to determine the accuracy of our optimization in practice, with using the 8-connected loss function and the Hamming loss (0-1).
For the implementation of the LP relaxation, we use the \texttt{UGM} toolbox \cite{schmidt2012ugm}.
We show in Table~\ref{tab:ADMMvsLP} the comparison between using ADMM and the LP relaxation. The first column represents the energy achieved by the loss augmented inference (Equation~\eqref{eq:maxenergy}). We observe that the (maximal) energy achieved by ADMM is almost the same as the LP relaxation: a difference of $0.4\%$. Columns 2--4 show the computing time for one calculation of the loss augmented inference on the downsampled images. Using an LP relaxation, the computation time is orders of magnitude slower, growing as a function of the image size. ADMM provides a more efficient strategy without loss of performance.

\begin{table}\centering
\resizebox{0.99\linewidth}{!}{
\begin{tabular}{c|c|c|c|c}
\hline
		& $-E$ & size $=600$   & size $=1200$   & size $=2400$  \\ 
\hline 
ADMM		& $2.28\pm 0.58$ & $0.035\pm 0.002$ & $0.051\pm 0.002$ & $0.864\pm 0.476$ \\
\hline
LP		& $2.29\pm 0.57$ & $1.857\pm 0.128$ & $3.946\pm 0.286$ & $13.57\pm 1.359$\\
\hline
\end{tabular}}
\caption{\label{tab:ADMMvsLP} The comparison between ADMM and an LP relaxation for solving the loss augmented inference. The 1st column shows the optimal energy values ($10^3$) (Equation~\eqref{eq:maxenergy}); columns 2--4 show the computation time (s) for one calculation on downsampled images of varying size.}
\end{table}

\begin{figure*}[]
\centering
\subfigure[]{\includegraphics[width=0.225\linewidth]{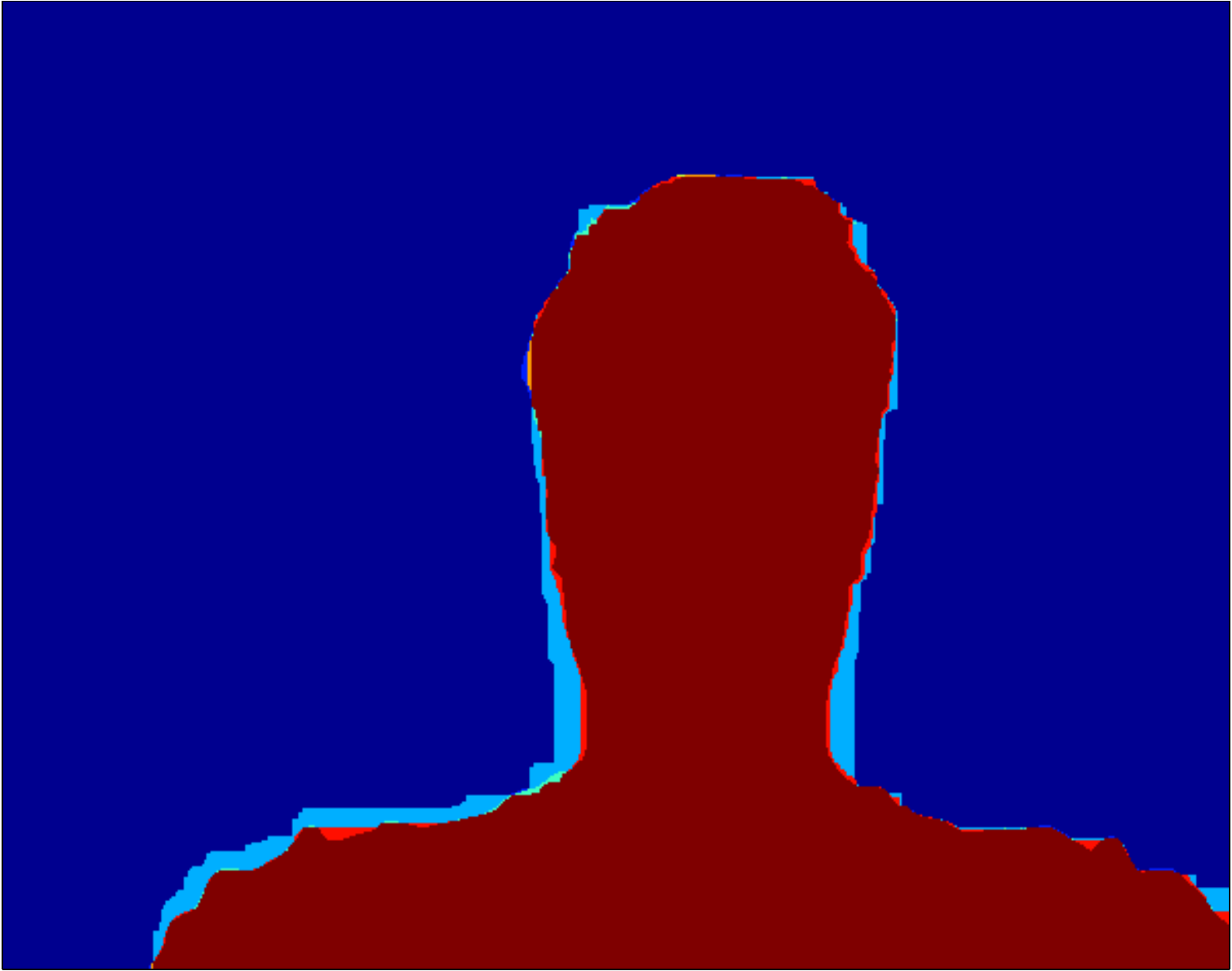}\label{fig:SegmentationComparison1}}
\subfigure[]{\includegraphics[width=0.225\linewidth]{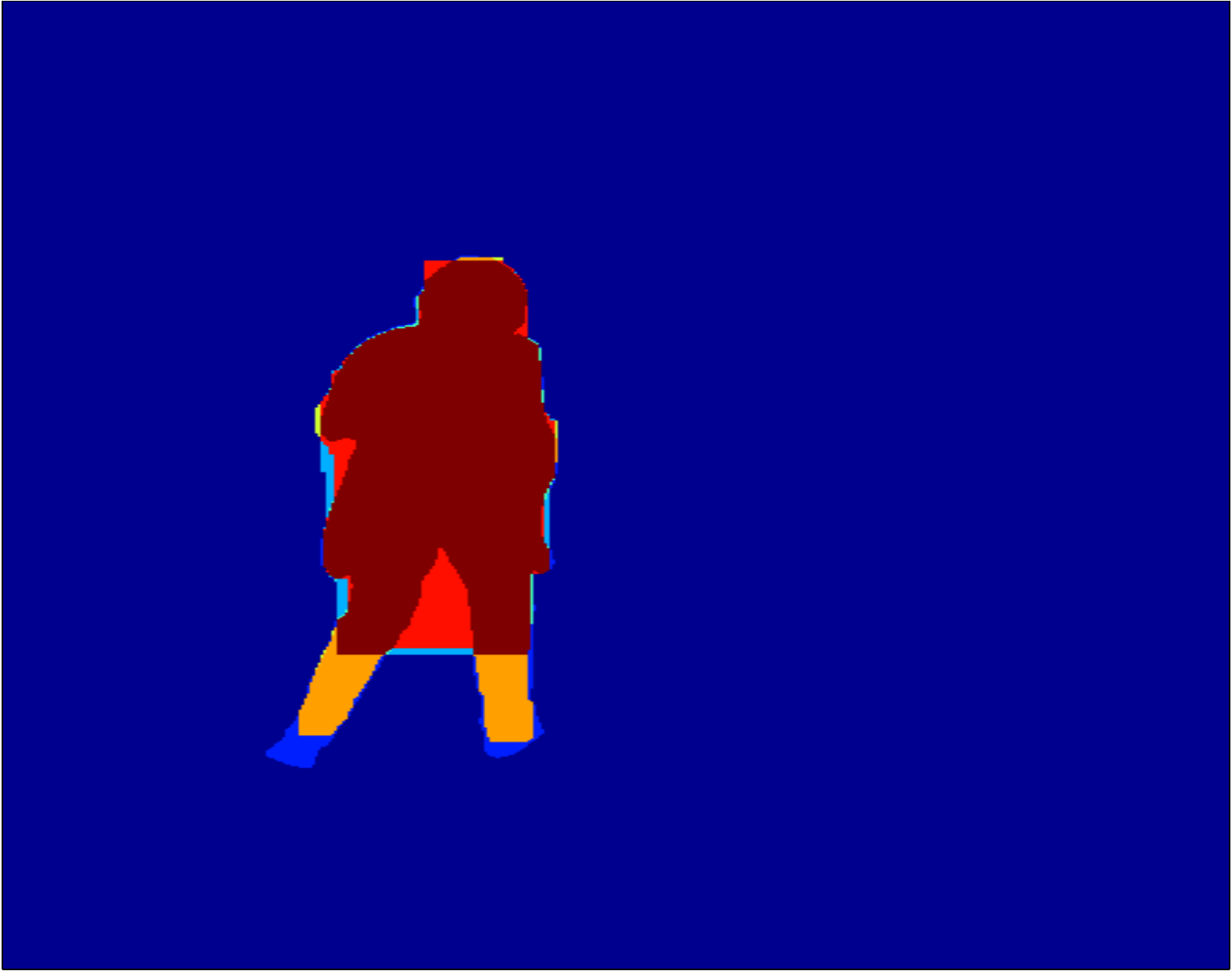}\label{fig:SegmentationComparison2}}
\subfigure[]{\includegraphics[width=0.225\linewidth]{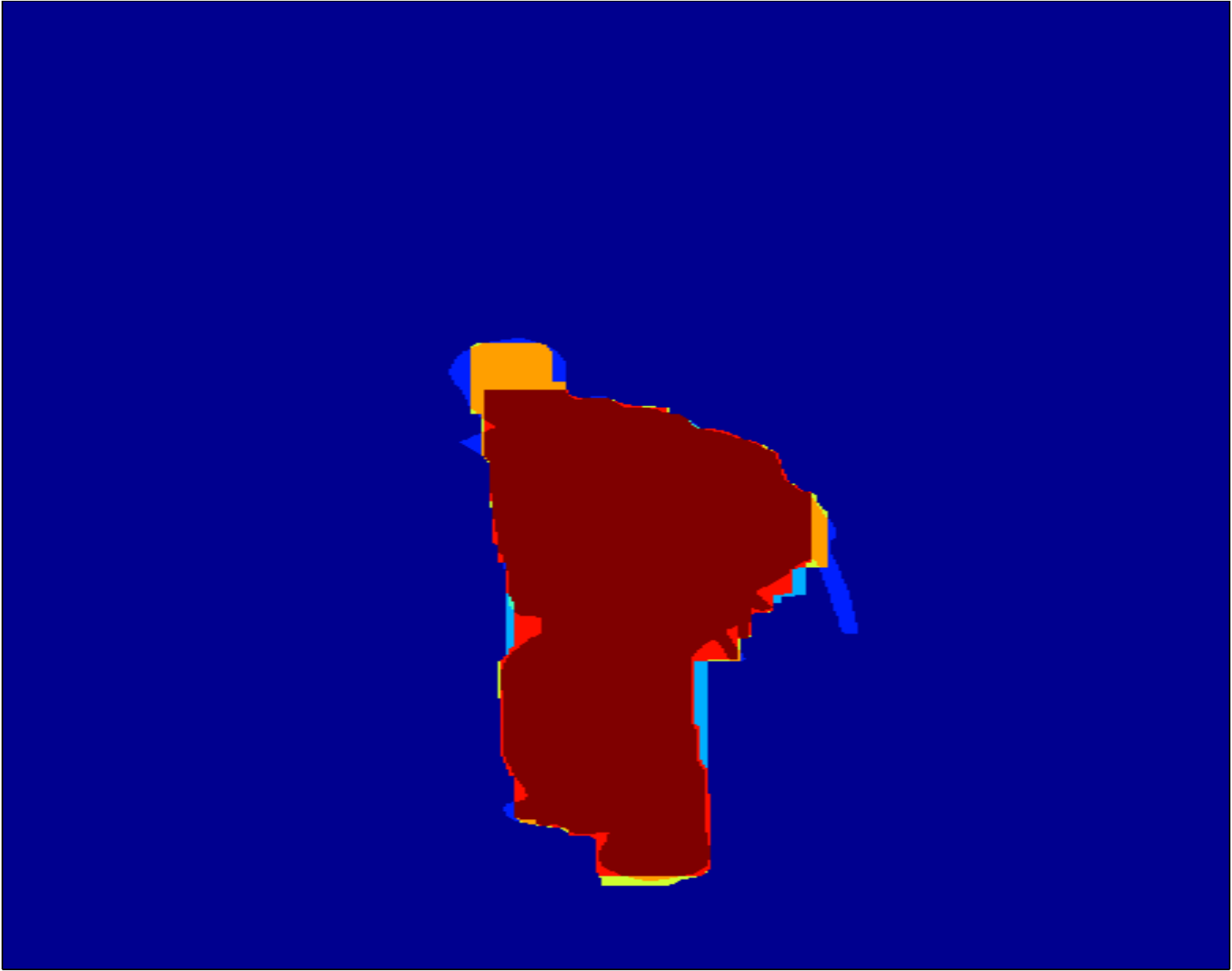}\label{fig:SegmentationComparison4}}
\subfigure[]{\includegraphics[width=0.225\linewidth]{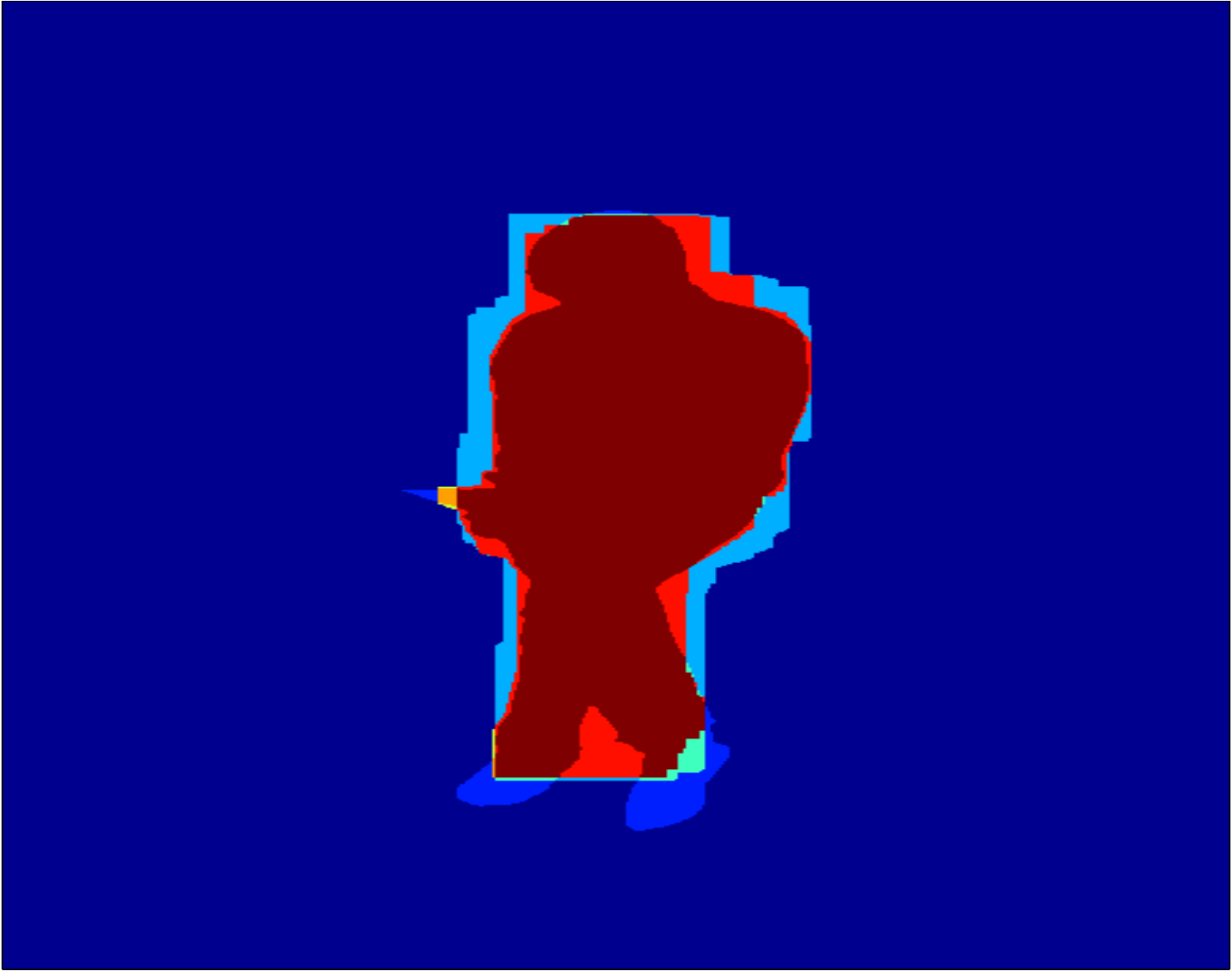}\label{fig:SegmentationComparison5}}
\subfigure[]{\includegraphics[width=0.225\linewidth]{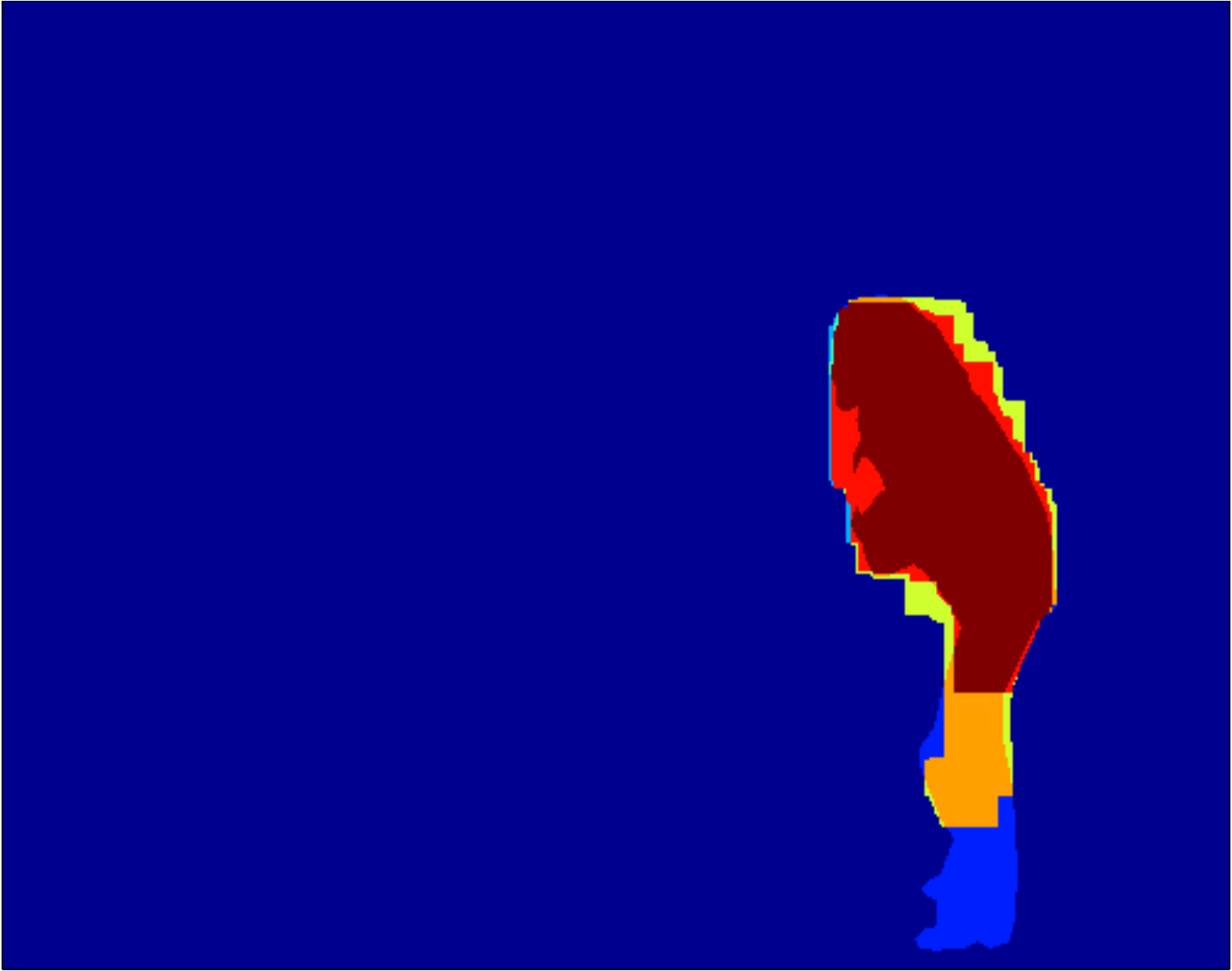}\label{fig:SegmentationComparison6}}
\subfigure[]{\includegraphics[width=0.225\linewidth]{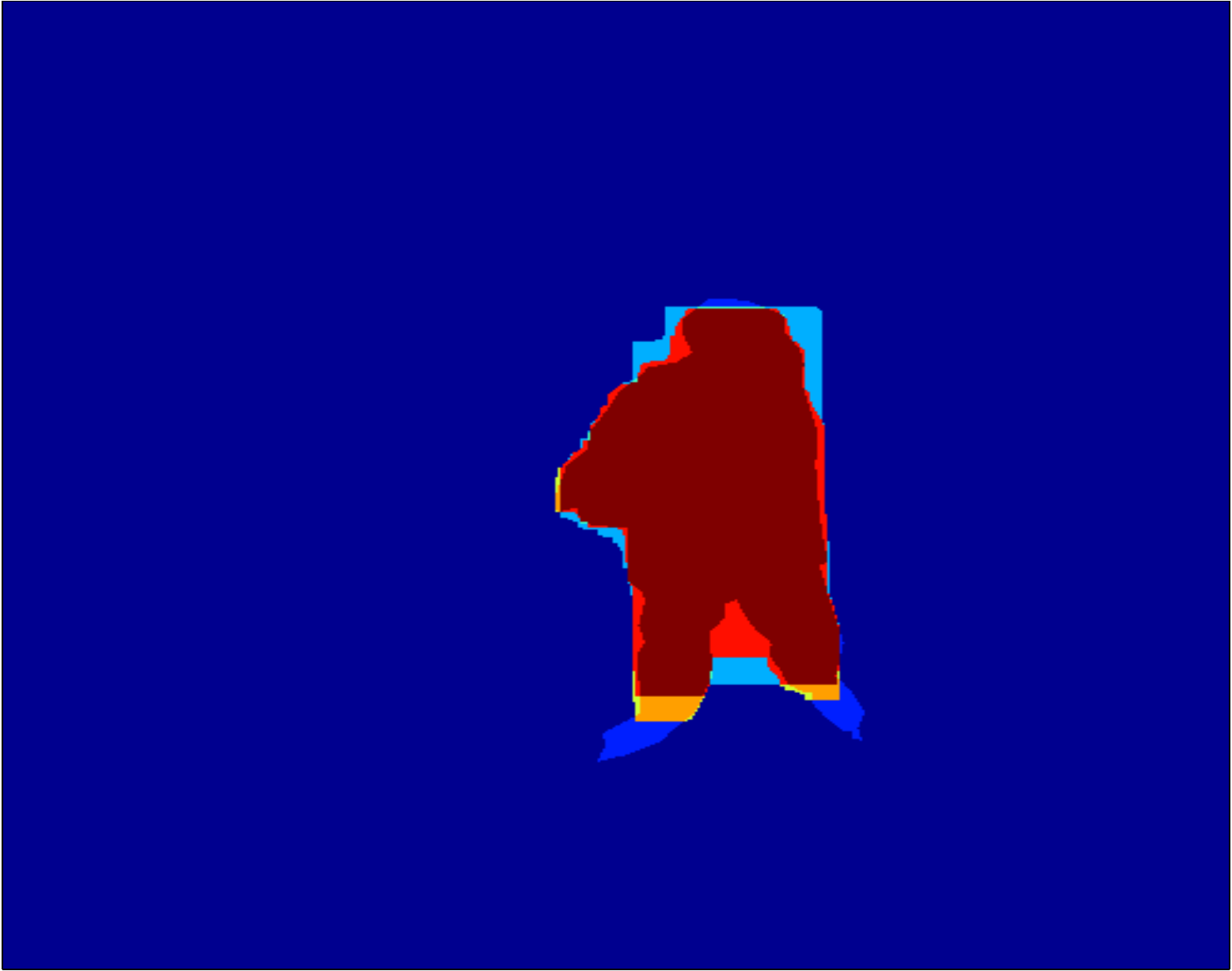}\label{fig:SegmentationComparison7}}
\subfigure[]{\includegraphics[width=0.225\linewidth]{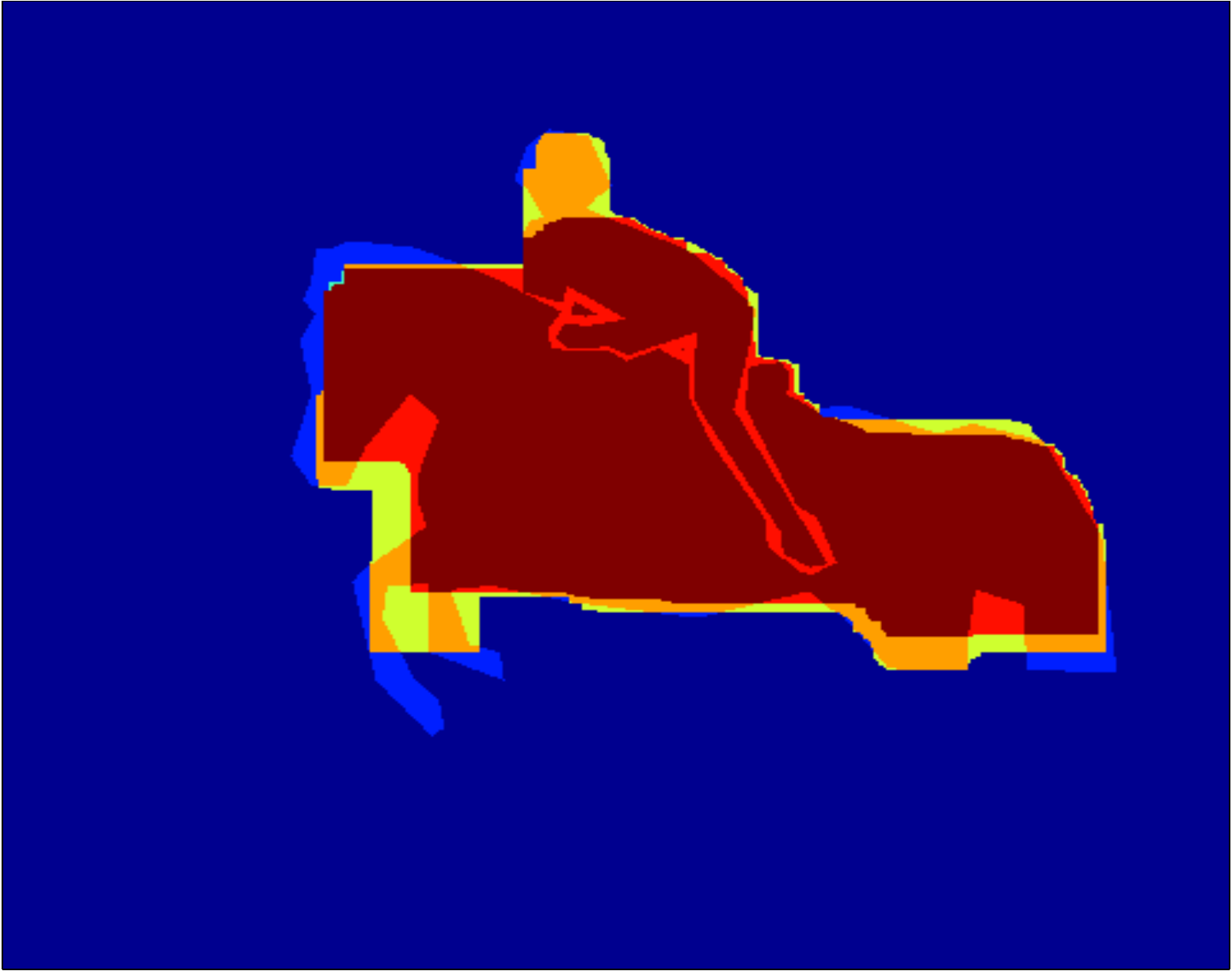}\label{fig:SegmentationComparison8}}
\subfigure[]{\includegraphics[width=0.225\linewidth]{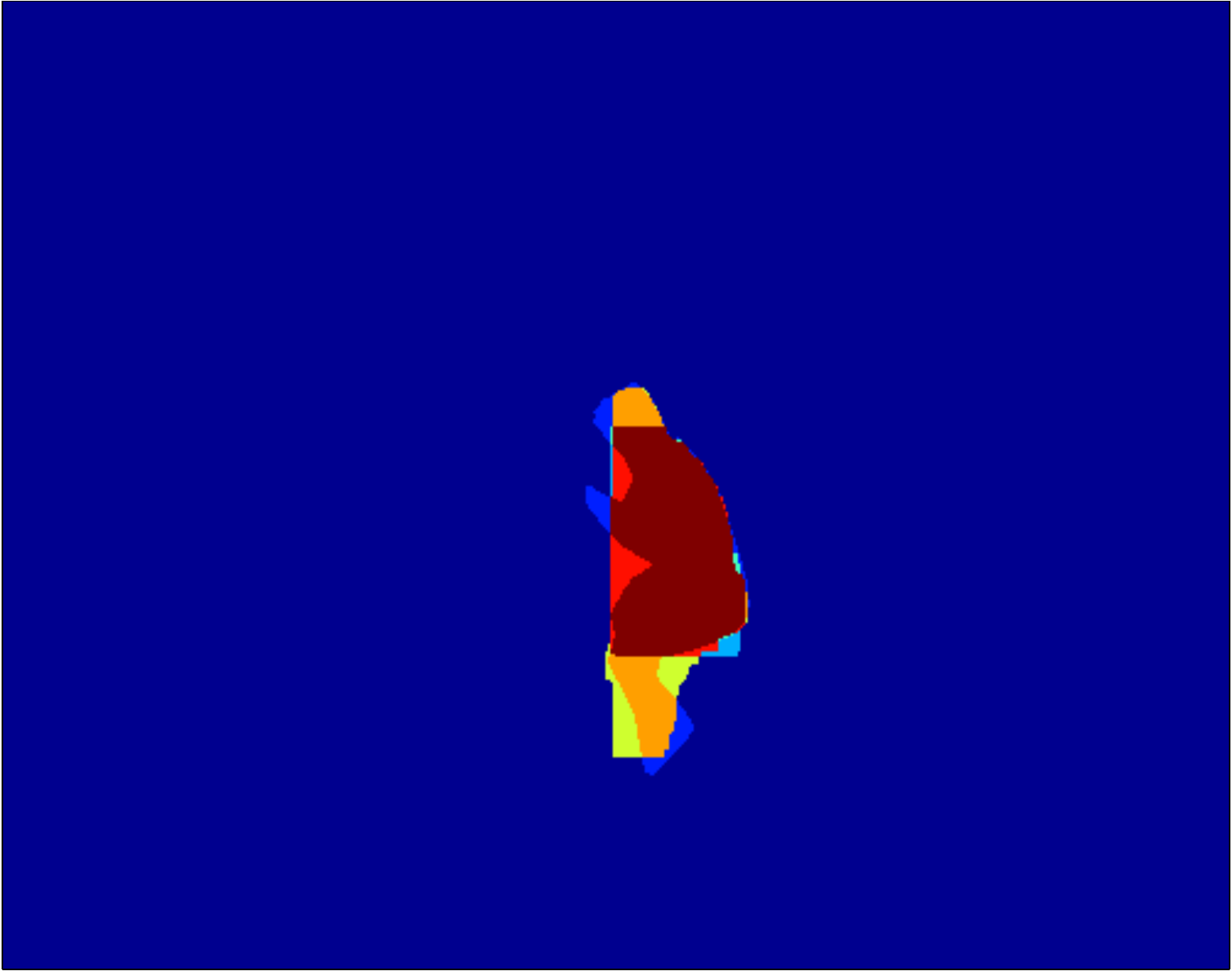}\label{fig:SegmentationComparison9}}
{\tiny
\begin{tabular}{p{0.05\linewidth}p{0.05\linewidth}p{0.05\linewidth}p{0.05\linewidth}p{0.05\linewidth}p{0.05\linewidth}p{0.05\linewidth}p{0.05\linewidth}}
\includegraphics[width=1\linewidth]{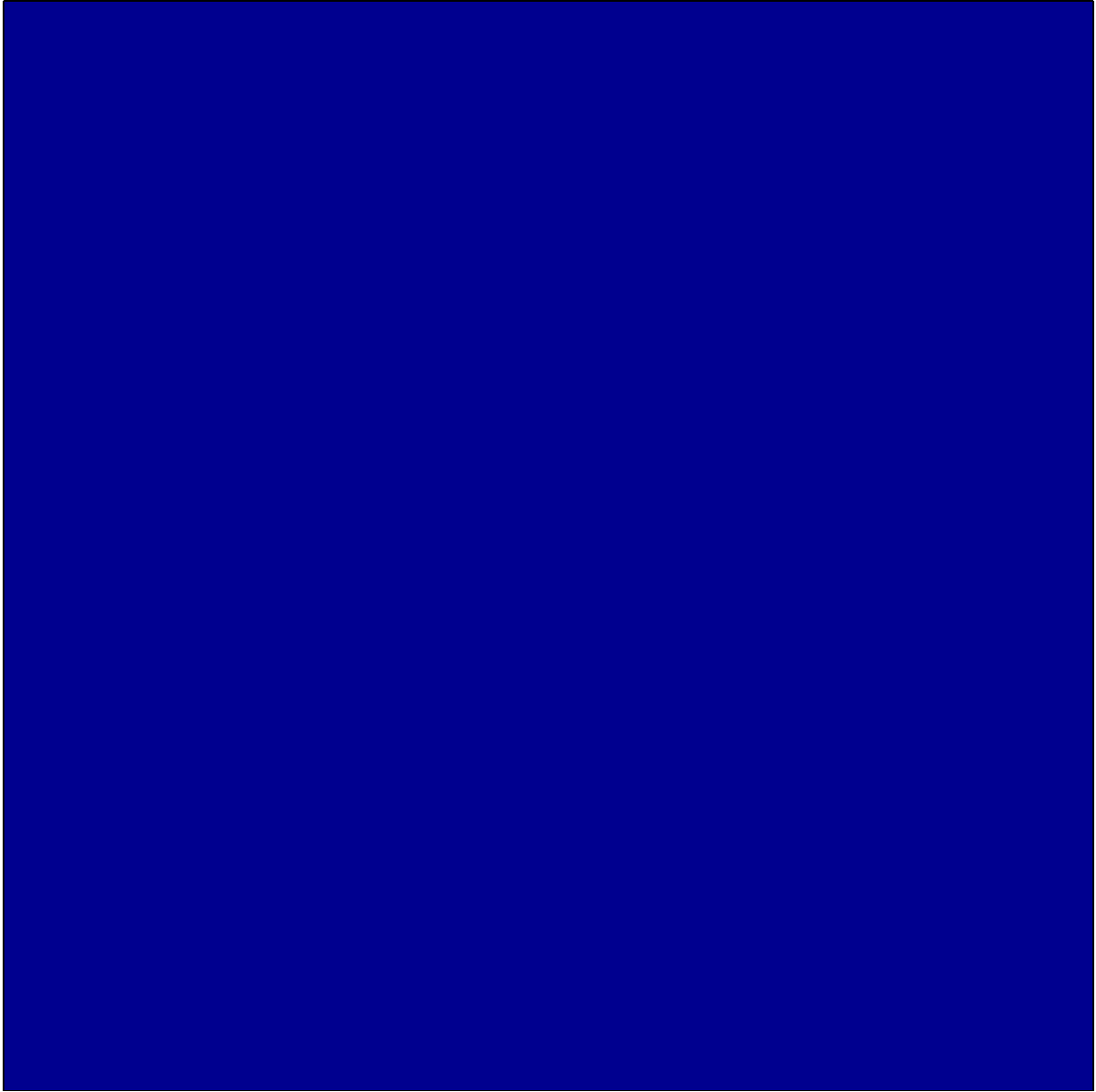} &
\includegraphics[width=1\linewidth]{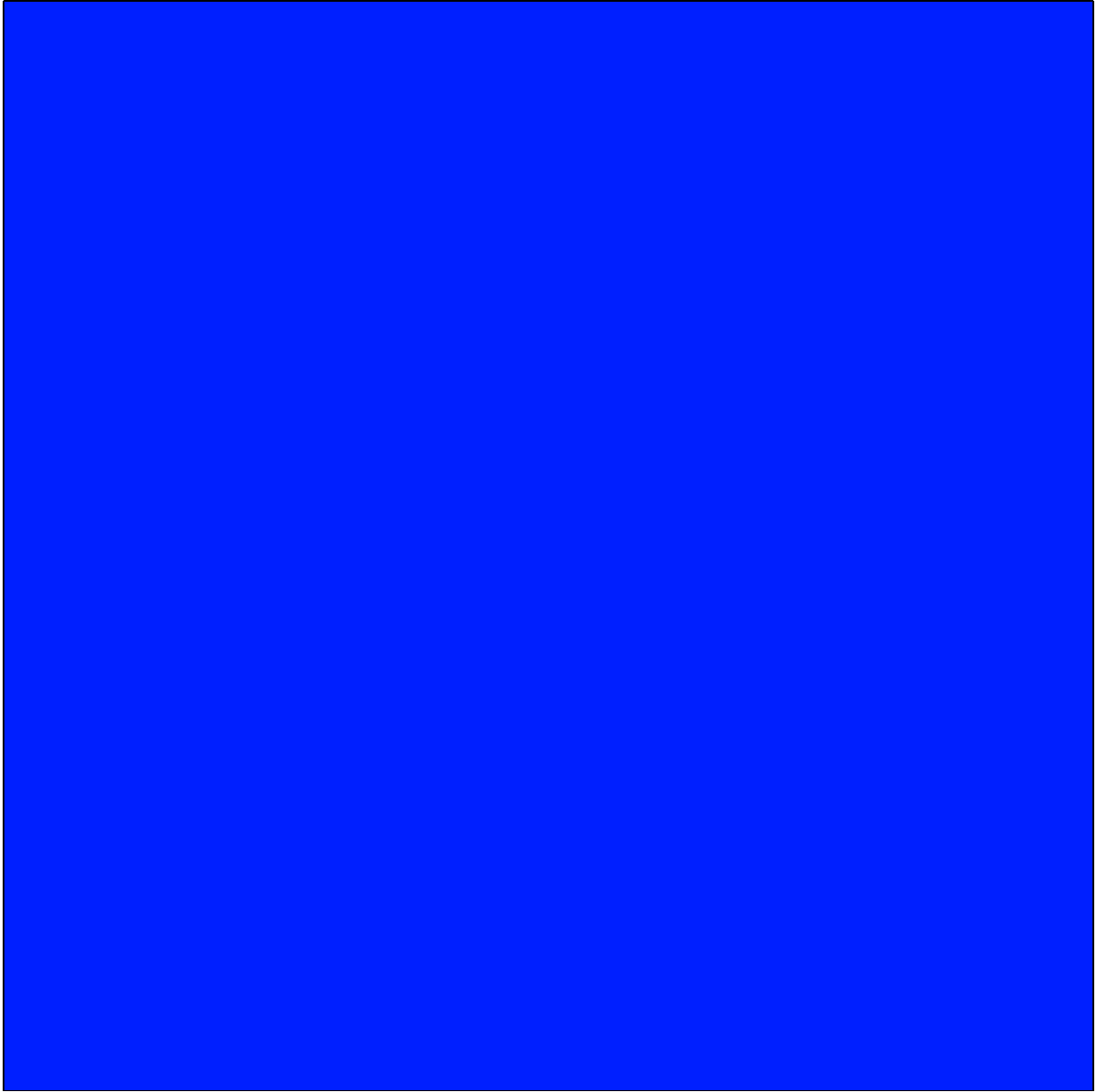} &
\includegraphics[width=1\linewidth]{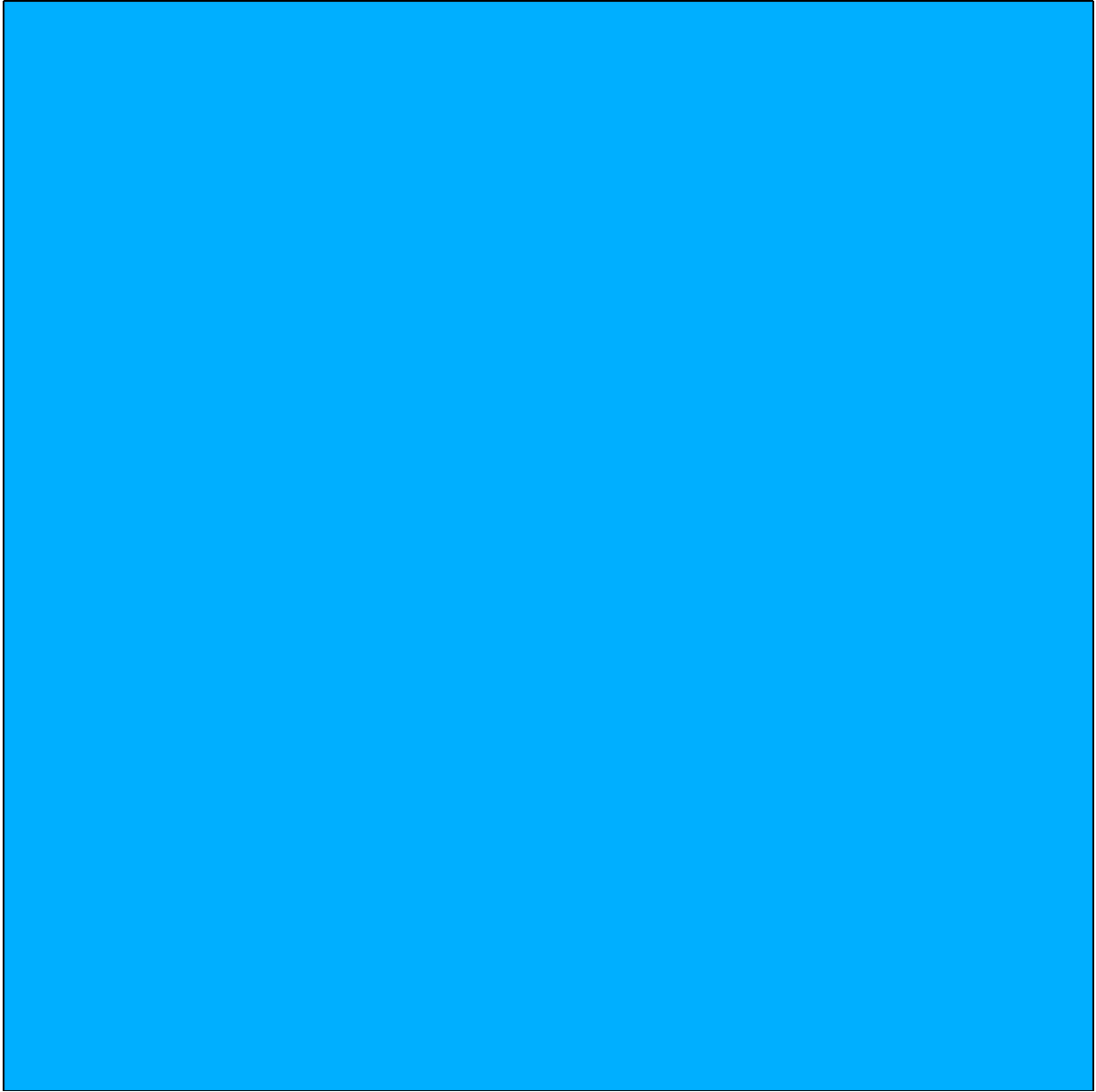} &
\includegraphics[width=1\linewidth]{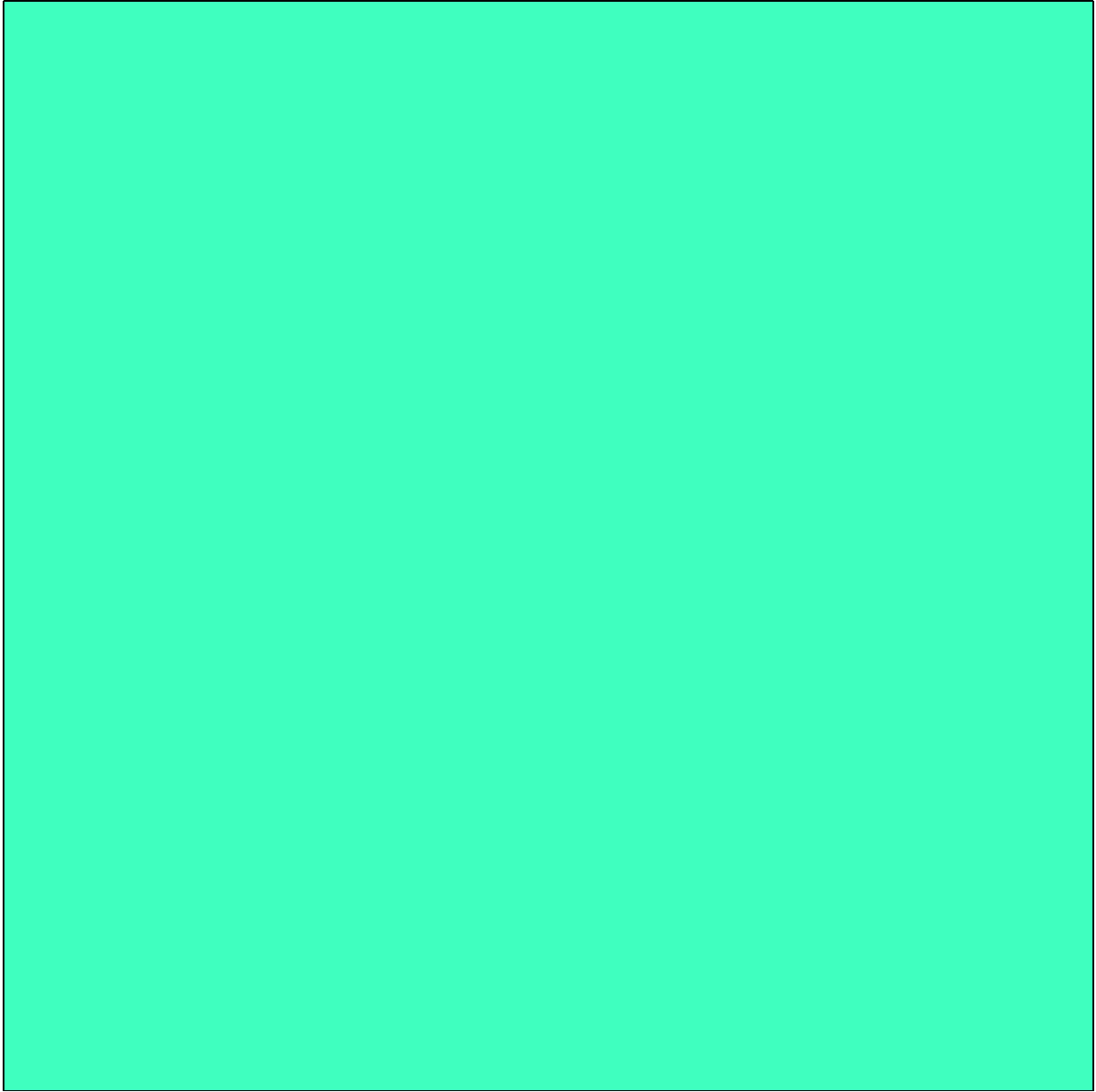} &
\includegraphics[width=1\linewidth]{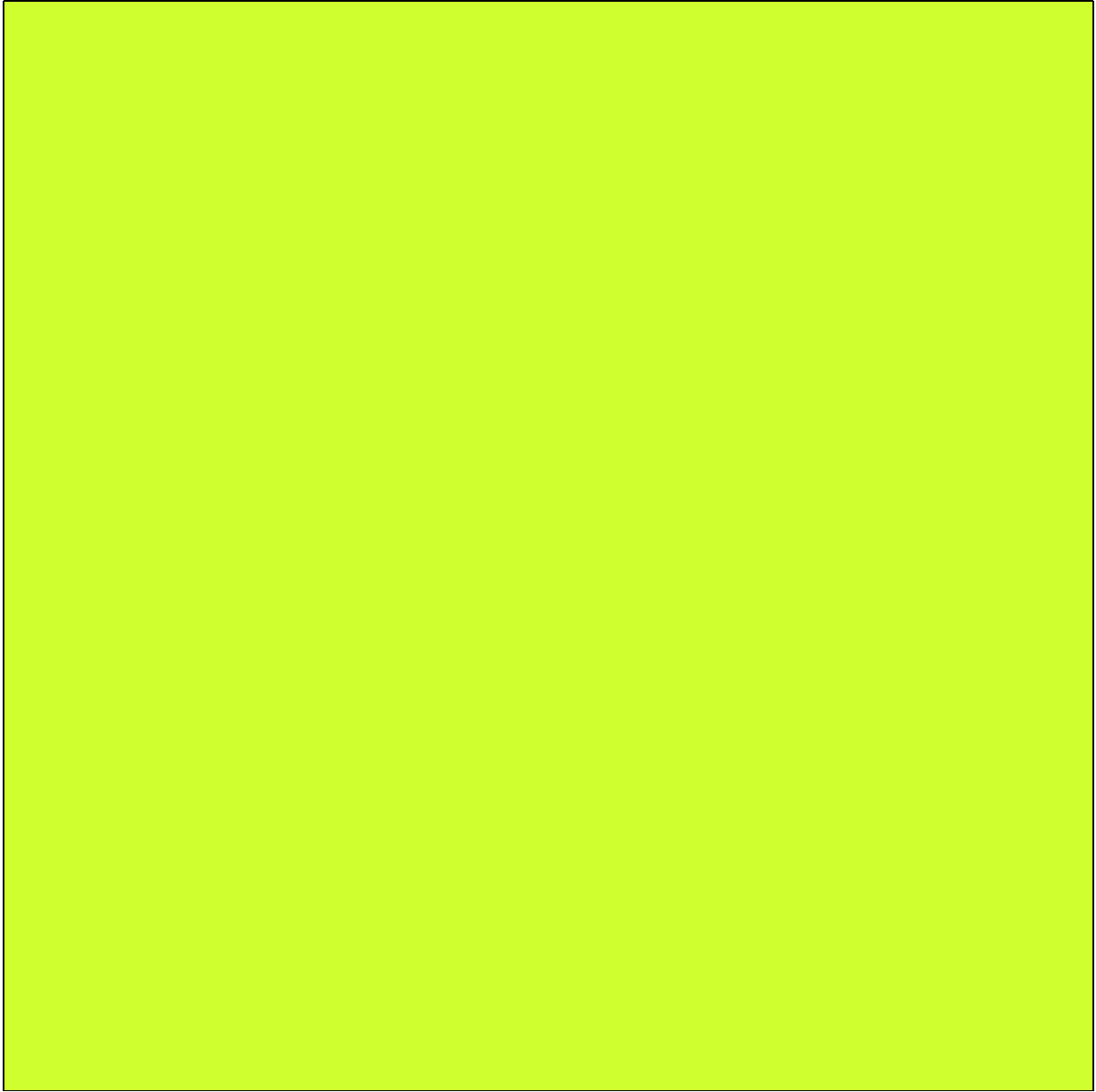} &
\includegraphics[width=1\linewidth]{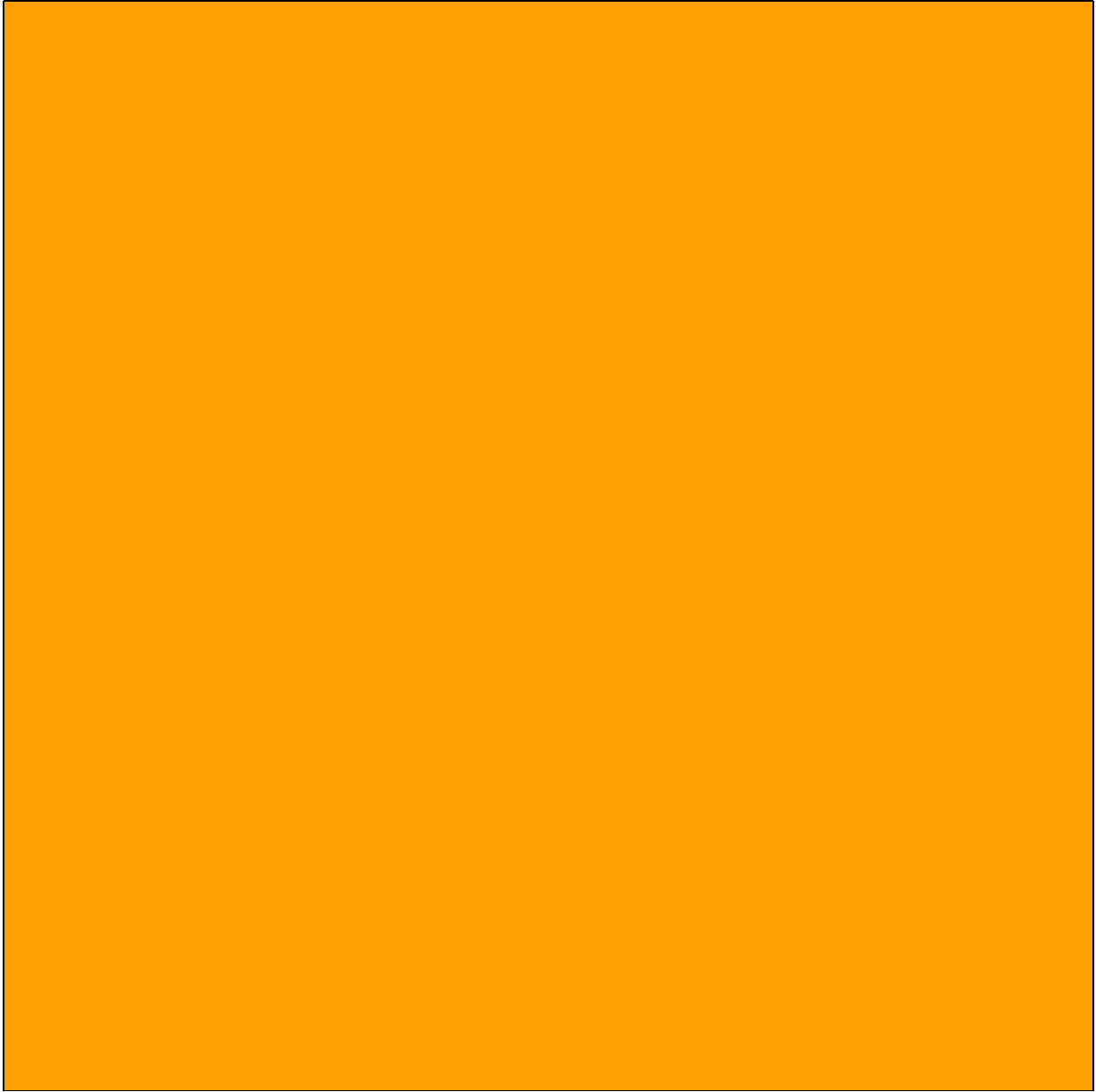} &
\includegraphics[width=1\linewidth]{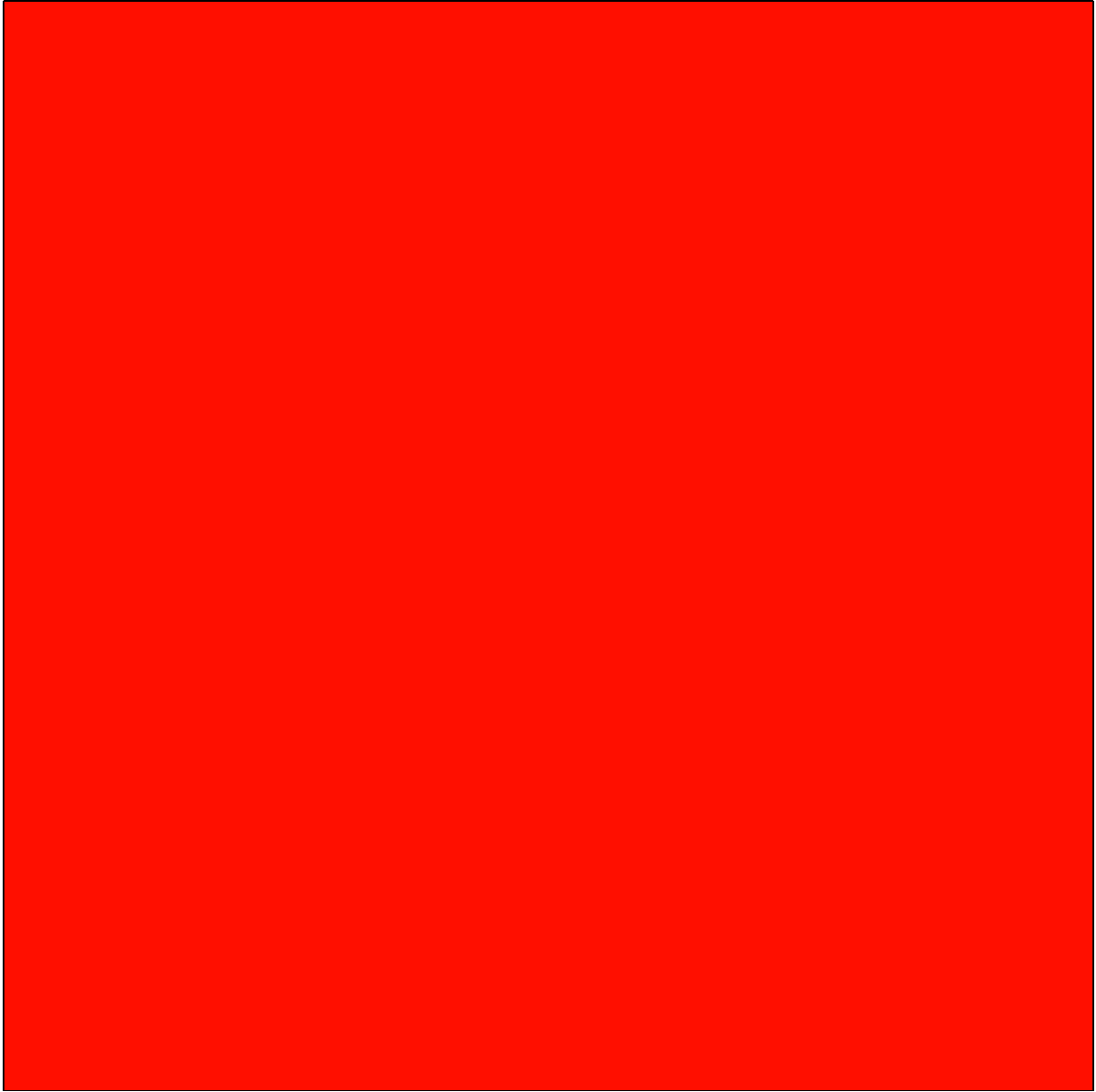} &
\includegraphics[width=1\linewidth]{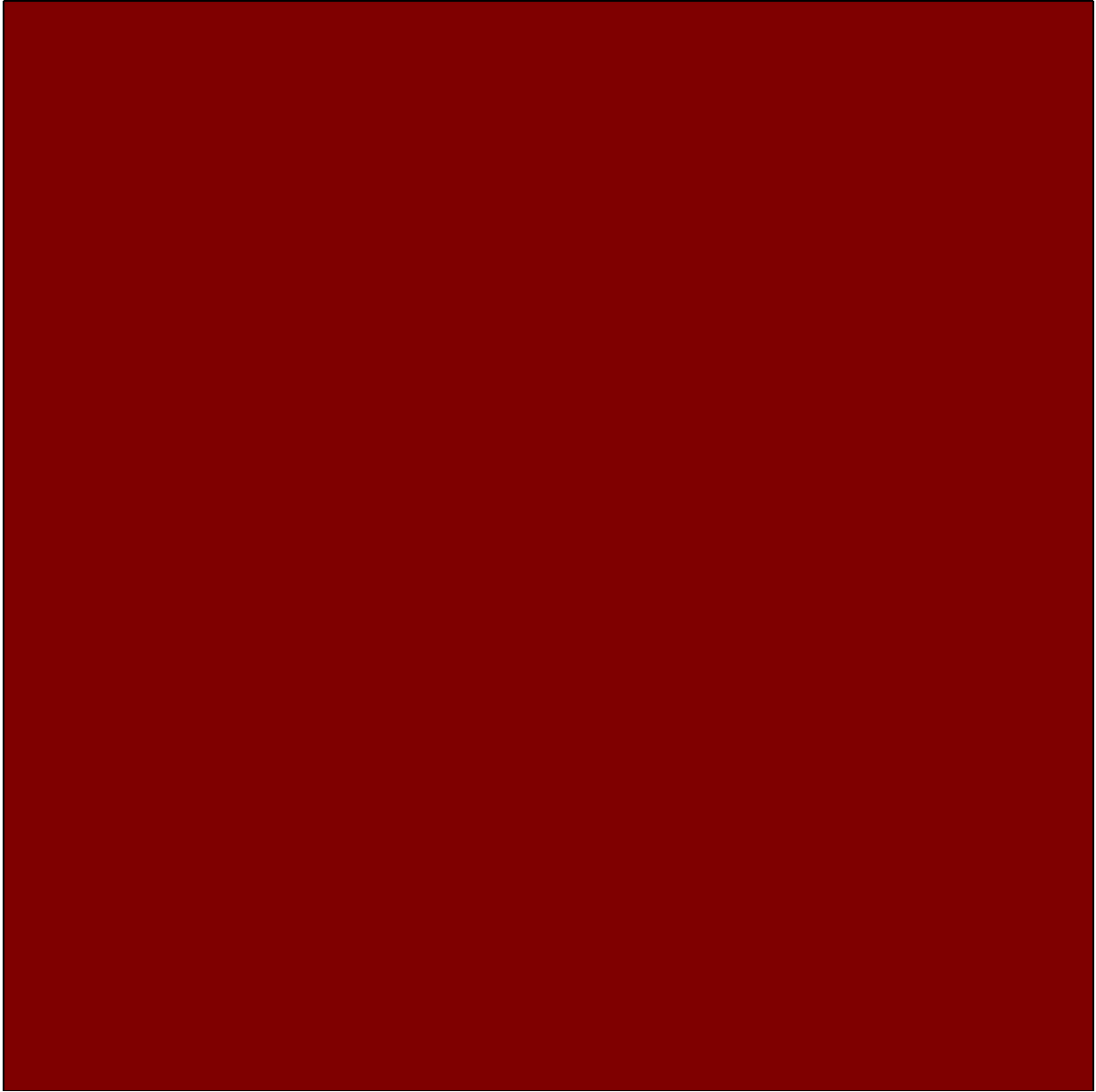} \\
\raggedright
$g=-1$ $h=-1$ $s=-1$ & 
\raggedright
$g=+1$ $h=-1$ $s=-1$& 
\raggedright
$g=-1$ $h=+1$ $s=-1$& 
\raggedright
$g=+1$ $h=+1$ $s=-1$& 
\raggedright
$g=-1$ $h=-1$ $s=+1$& 
\raggedright
$g=+1$ $h=-1$ $s=+1$& 
\raggedright
$g=-1$ $h=+1$ $s=+1$& 
\raggedright
$g=+1$ $h=+1$ $s=+1$\\
\end{tabular}
}
\caption{\label{fig:segDiff} A pixelwise comparison, in the semantic segmentation task \cite{gulshan2010geodesic}, of the ground truth (denoted $g$ in the legend), the prediction from training with Hamming loss (denoted $h$), and the prediction when training with the 8-connected loss (denoted $s$). We note that there are many regions in the set of images where the supermodular loss learns to correctly predict the foreground when Hamming loss fails (orange regions corresponding to $g=+1$, $h=-1$, and $s=+1$). } 
\end{figure*}

\begin{figure*}[]
\centering
\subfigure[]{\includegraphics[width=0.23\linewidth]{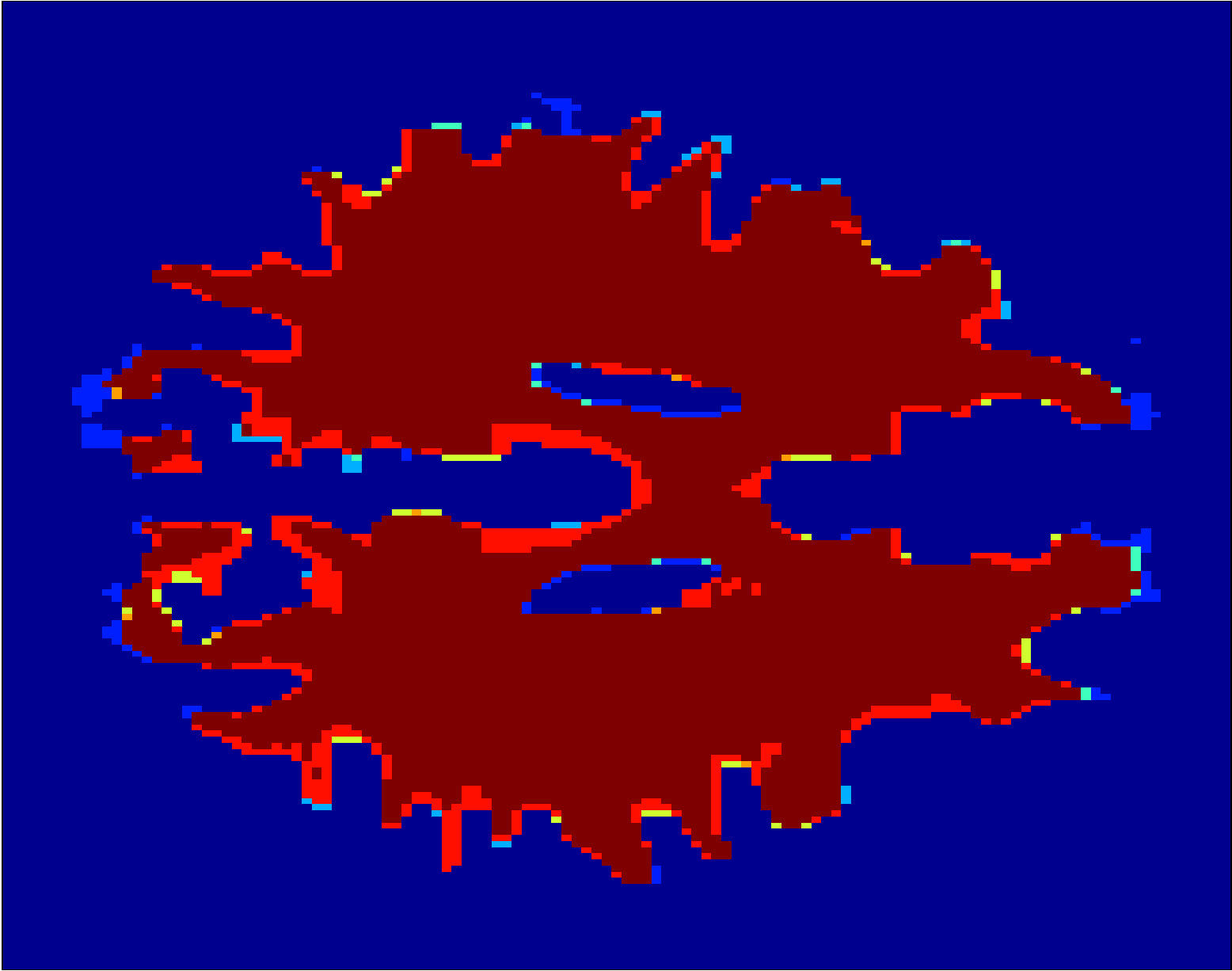}\label{fig:SegmentationComparison10}}
\subfigure[]{\includegraphics[width=0.23\linewidth]{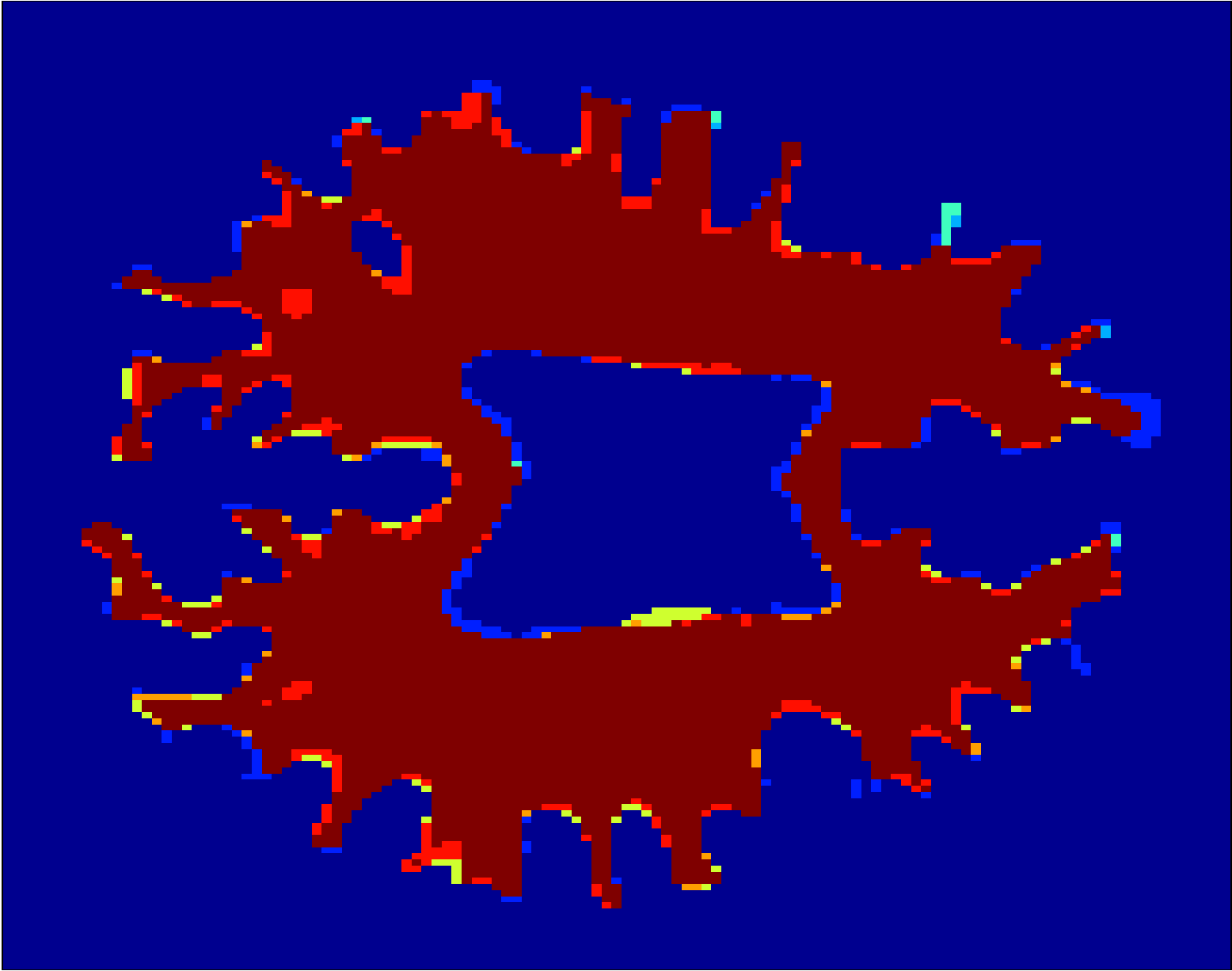}\label{fig:SegmentationComparison11}}
\subfigure[]{\includegraphics[width=0.23\linewidth]{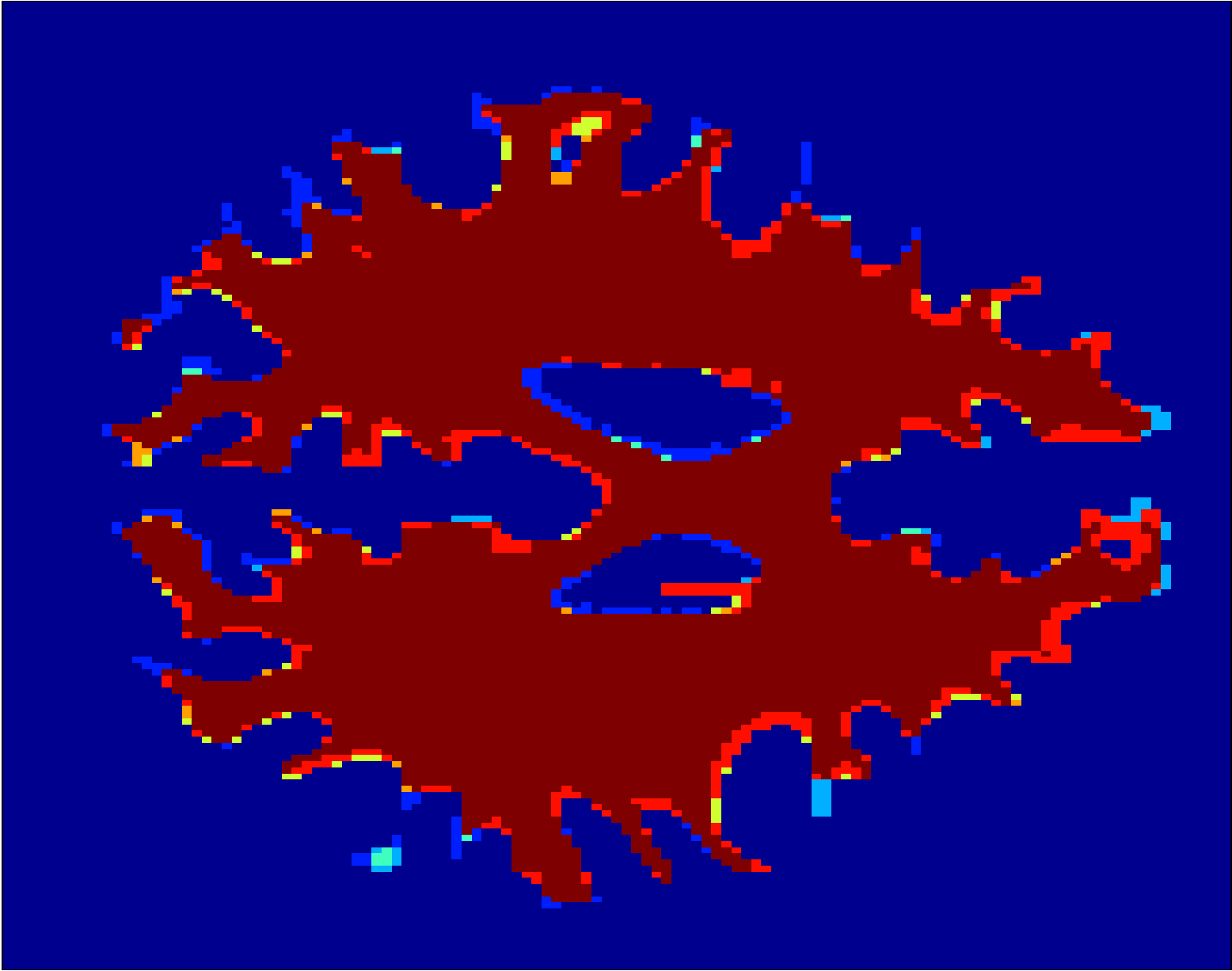}\label{fig:SegmentationComparison12}}
\subfigure[]{\includegraphics[width=0.23\linewidth]{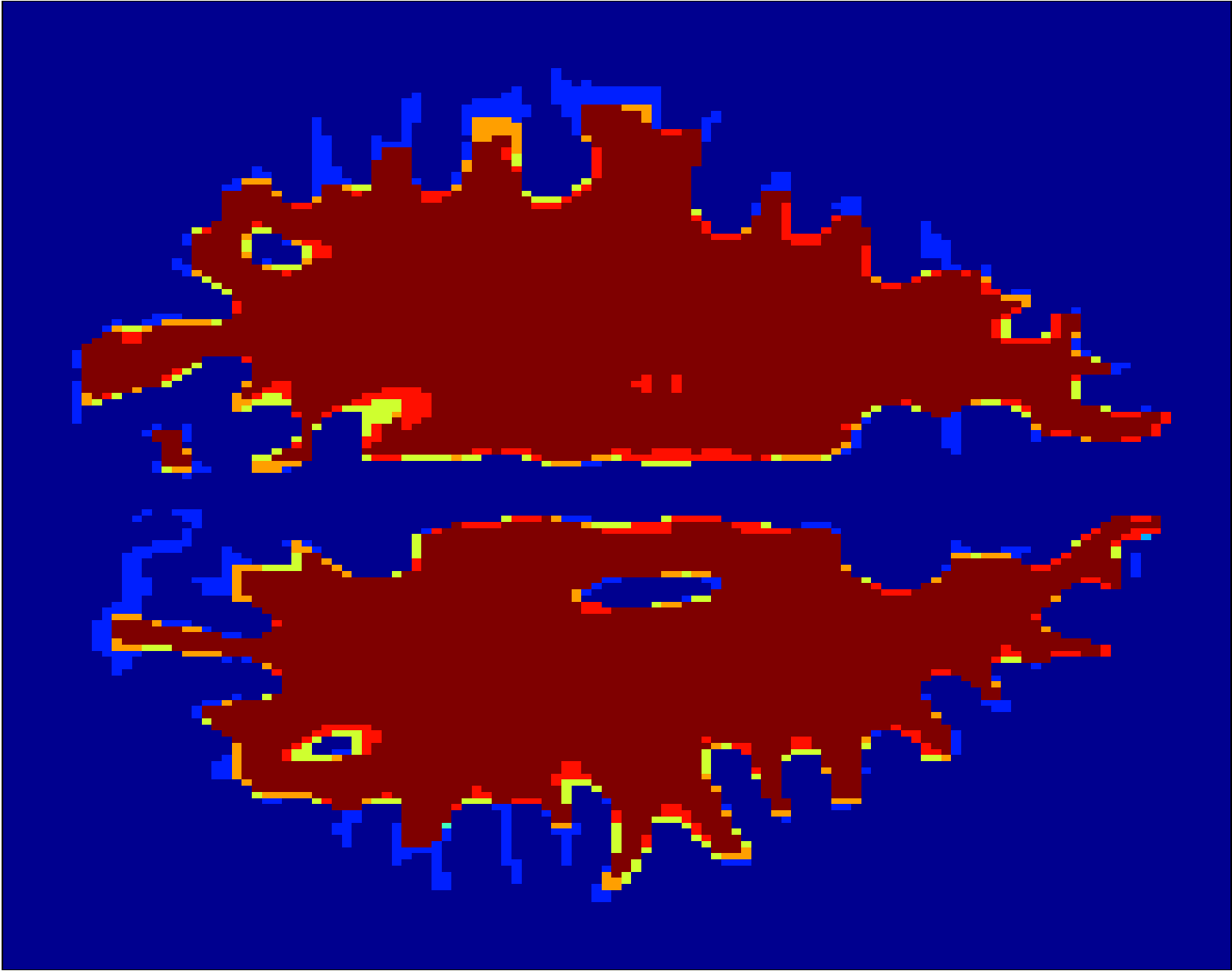}\label{fig:SegmentationComparison13}}
{\tiny
\begin{tabular}{p{0.05\linewidth}p{0.05\linewidth}p{0.05\linewidth}p{0.05\linewidth}p{0.05\linewidth}p{0.05\linewidth}p{0.05\linewidth}p{0.05\linewidth}}
\includegraphics[width=1\linewidth]{SegmentationComparisonLegend0-crop.pdf} &
\includegraphics[width=1\linewidth]{SegmentationComparisonLegend1-crop.pdf} &
\includegraphics[width=1\linewidth]{SegmentationComparisonLegend2-crop.pdf} &
\includegraphics[width=1\linewidth]{SegmentationComparisonLegend3-crop.pdf} &
\includegraphics[width=1\linewidth]{SegmentationComparisonLegend4-crop.pdf} &
\includegraphics[width=1\linewidth]{SegmentationComparisonLegend5-crop.pdf} &
\includegraphics[width=1\linewidth]{SegmentationComparisonLegend6-crop.pdf} &
\includegraphics[width=1\linewidth]{SegmentationComparisonLegend7-crop.pdf} \\
\raggedright
$g=-1$ $h=-1$ $s=-1$ & 
\raggedright
$g=+1$ $h=-1$ $s=-1$& 
\raggedright
$g=-1$ $h=+1$ $s=-1$& 
\raggedright
$g=+1$ $h=+1$ $s=-1$& 
\raggedright
$g=-1$ $h=-1$ $s=+1$& 
\raggedright
$g=+1$ $h=-1$ $s=+1$& 
\raggedright
$g=-1$ $h=+1$ $s=+1$& 
\raggedright
$g=+1$ $h=+1$ $s=+1$\\
\end{tabular}
}
\caption{\label{fig:segDiff2} A pixelwise comparison, in the structural brain segmentation task \cite{braindata}, of the ground truth (denoted $g$ in the legend), the prediction from training with Hamming loss (denoted $h$), and the prediction when training with the 8-connected loss (denoted $s$). We note that there are many regions in the set of images where the supermodular loss learns to correctly predict the foreground when Hamming loss fails (orange regions corresponding to $g=+1$, $h=-1$, and $s=+1$).}
\end{figure*}

\begin{figure*}
\centering
{\includegraphics[width=0.12\linewidth]{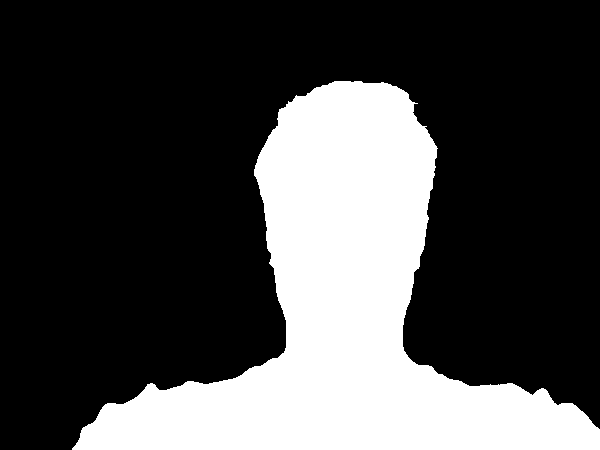}}
{\includegraphics[width=0.12\linewidth]{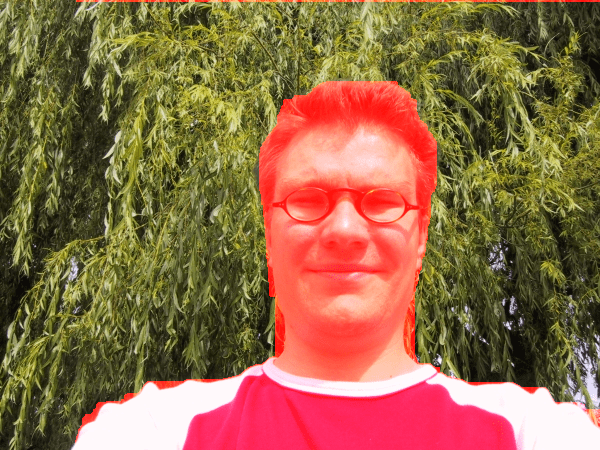}}
{\includegraphics[width=0.12\linewidth]{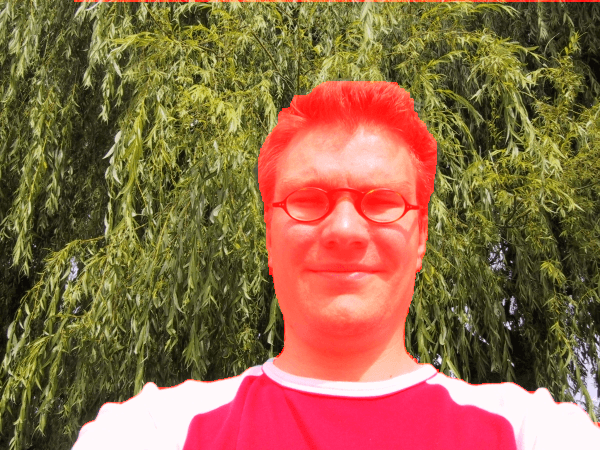}}
{\includegraphics[width=0.12\linewidth]{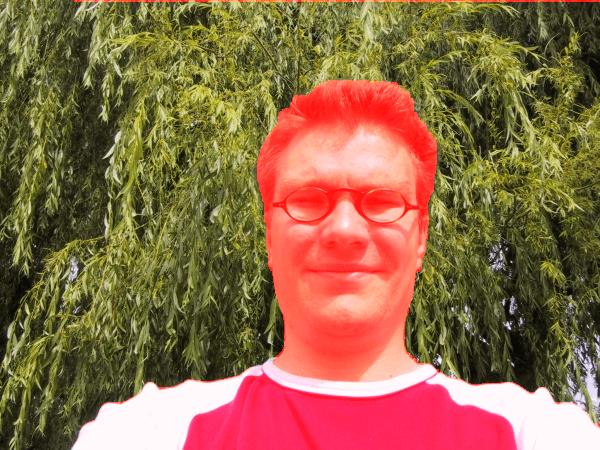}}
{\includegraphics[width=0.12\linewidth]{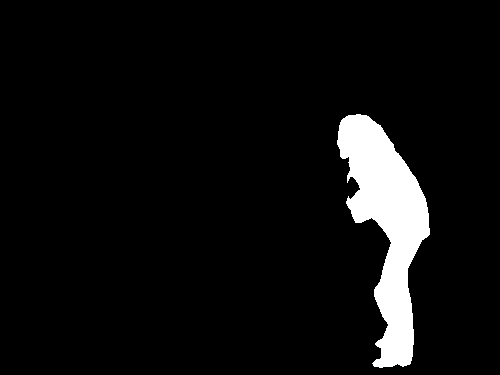}}
{\includegraphics[width=0.12\linewidth]{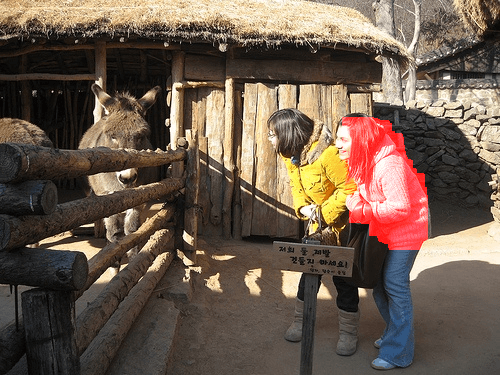}}
{\includegraphics[width=0.12\linewidth]{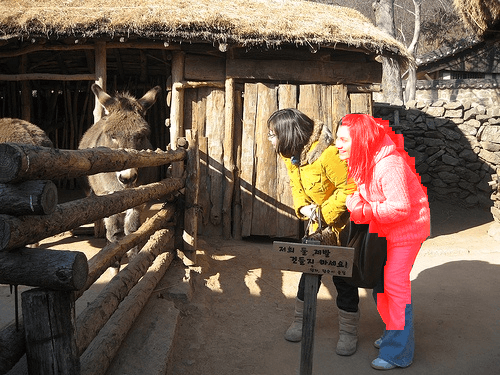}}
{\includegraphics[width=0.12\linewidth]{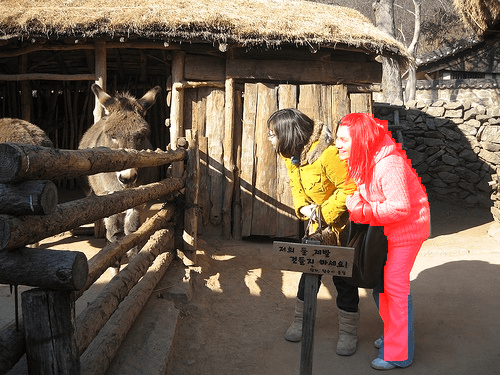}}
{\includegraphics[width=0.12\linewidth]{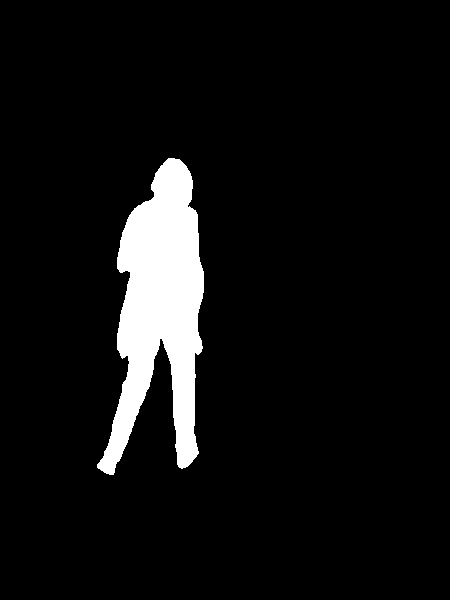}}
{\includegraphics[width=0.12\linewidth]{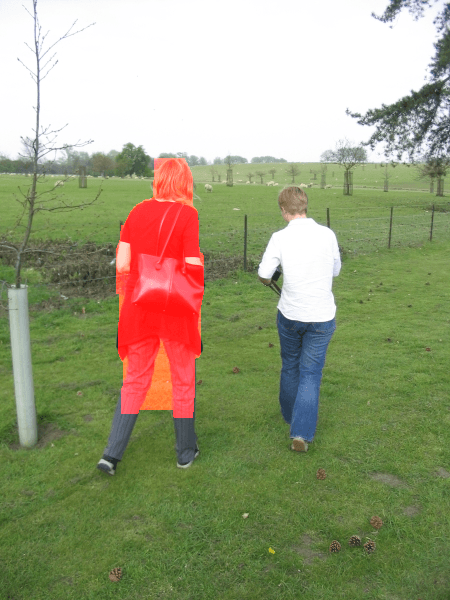}}
{\includegraphics[width=0.12\linewidth]{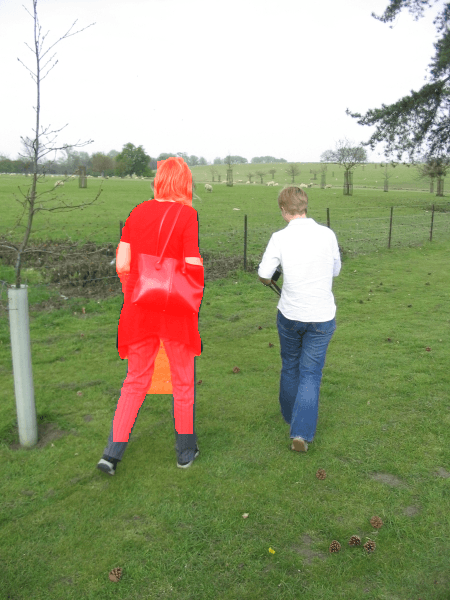}}
{\includegraphics[width=0.12\linewidth]{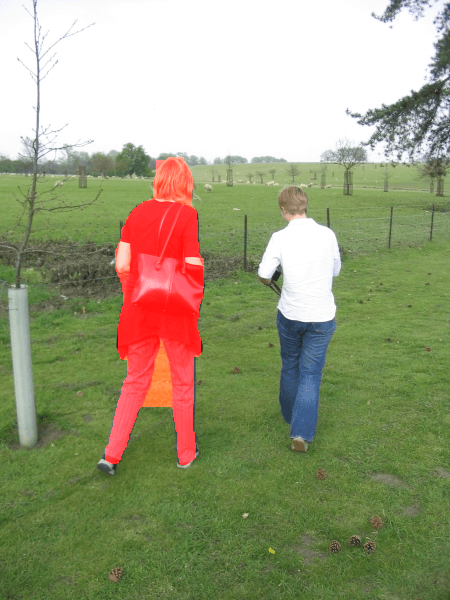}}
{\includegraphics[width=0.12\linewidth]{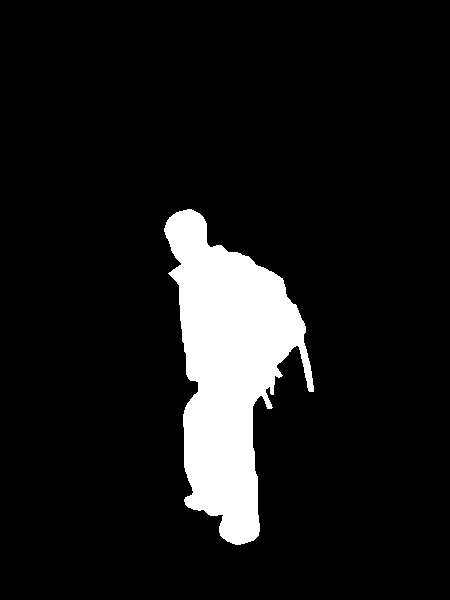}}
{\includegraphics[width=0.12\linewidth]{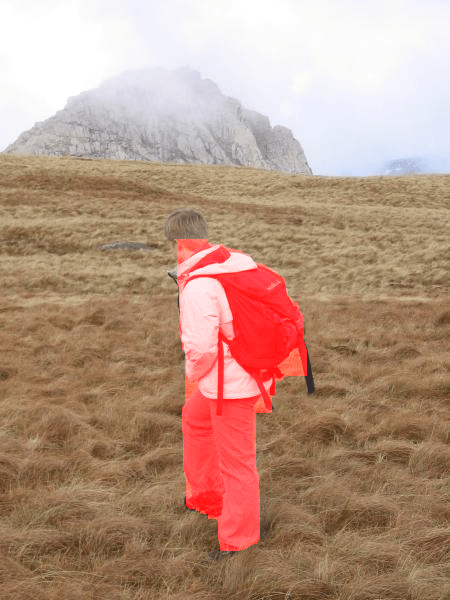}}
{\includegraphics[width=0.12\linewidth]{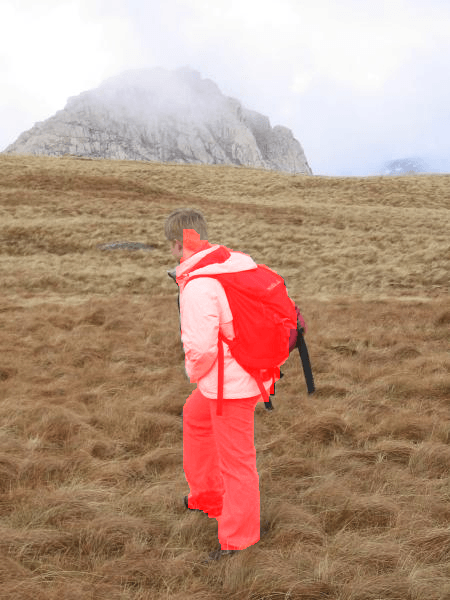}}
{\includegraphics[width=0.12\linewidth]{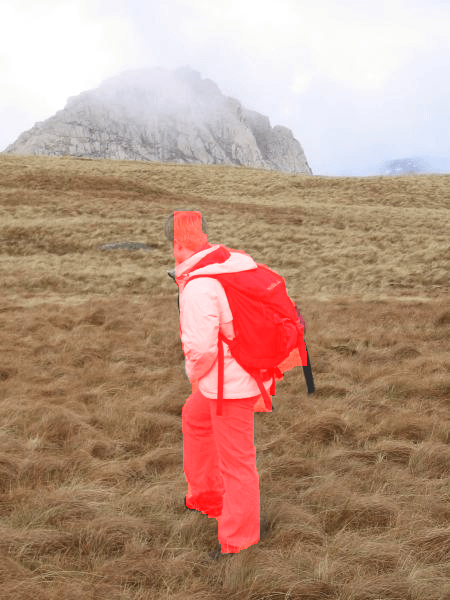}}
{\includegraphics[width=0.12\linewidth]{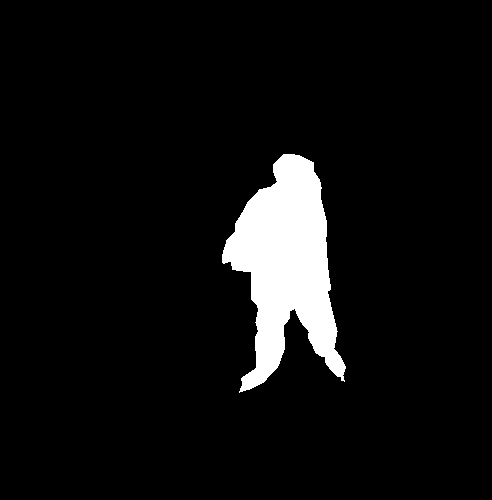}}
{\includegraphics[width=0.12\linewidth]{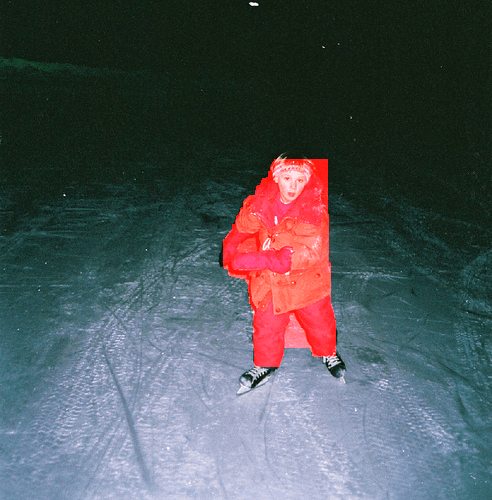}}
{\includegraphics[width=0.12\linewidth]{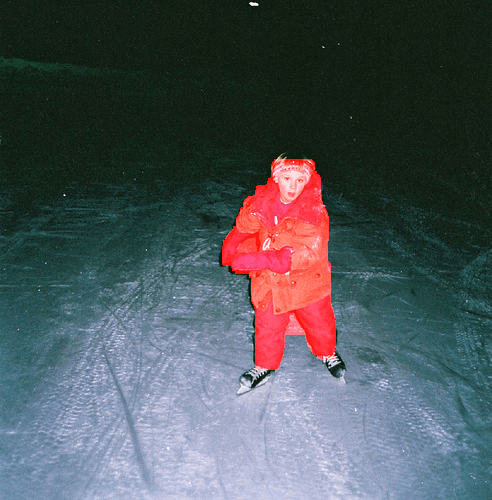}}
{\includegraphics[width=0.12\linewidth]{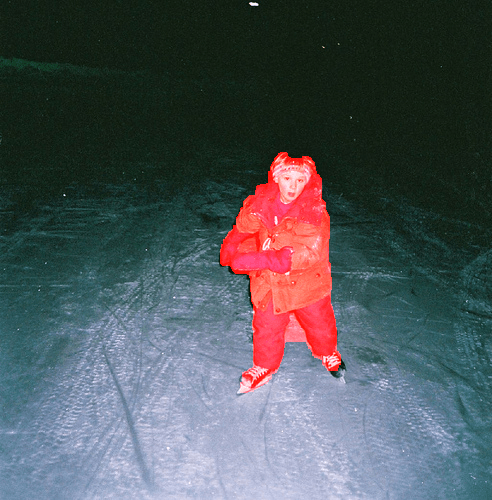}}
{\includegraphics[trim={0 2.5cm 0 2.5cm},clip,width=0.12\linewidth]{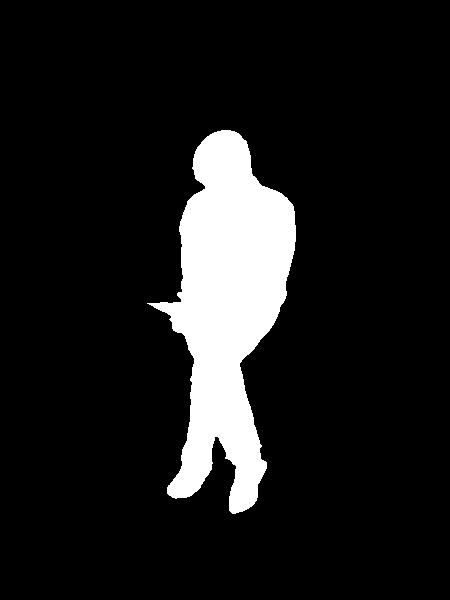}}
{\includegraphics[trim={0 2.5cm 0 2.5cm},clip,width=0.12\linewidth]{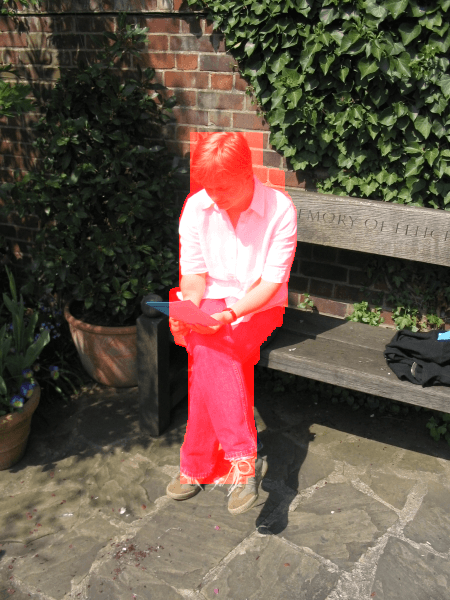}}
{\includegraphics[trim={0 2.5cm 0 2.5cm},clip,width=0.12\linewidth]{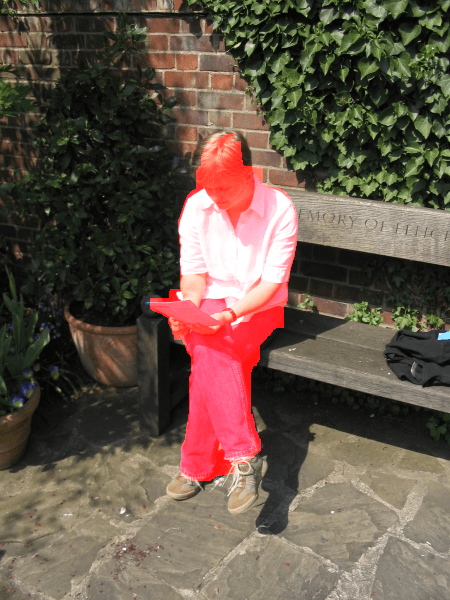}}
{\includegraphics[trim={0 2.5cm 0 2.5cm},clip,width=0.12\linewidth]{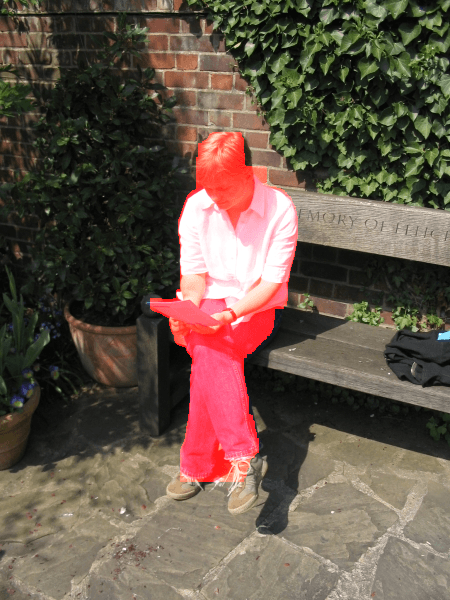}}
\subfigure[groundtruth]{\includegraphics[width=0.12\linewidth]{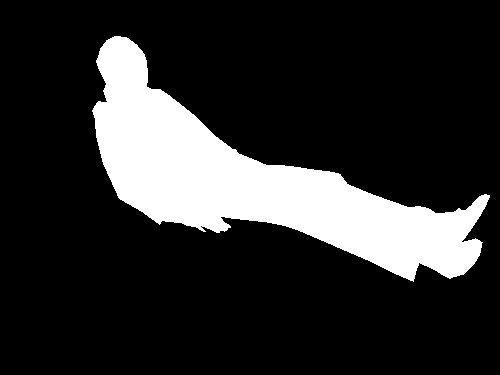}}
\subfigure[Hamming]{\includegraphics[width=0.12\linewidth]{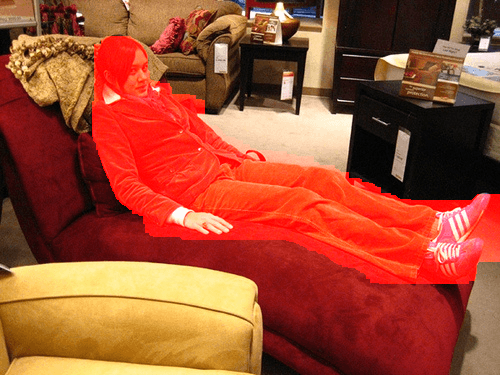}}
\subfigure[Square]{\includegraphics[width=0.12\linewidth]{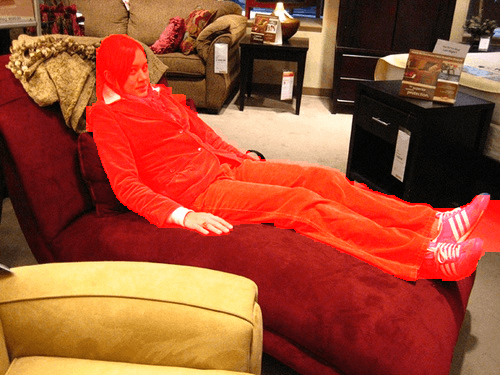}}
\subfigure[Biconvex]{\includegraphics[width=0.12\linewidth]{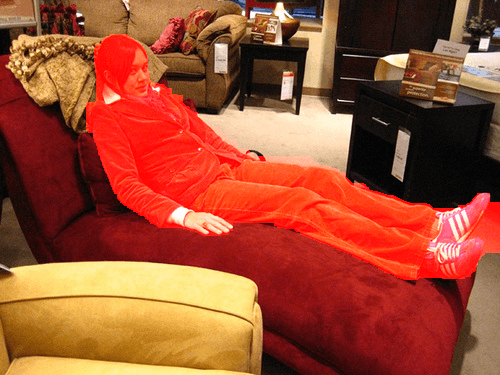}}
\subfigure[groundtruth]{\includegraphics[trim={0 2.5cm 0 0},clip,width=0.12\linewidth]{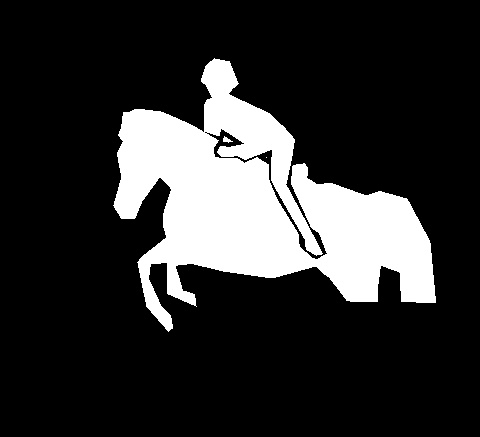}}
\subfigure[Hamming]{\includegraphics[trim={0 2.5cm 0 0},clip,width=0.12\linewidth]{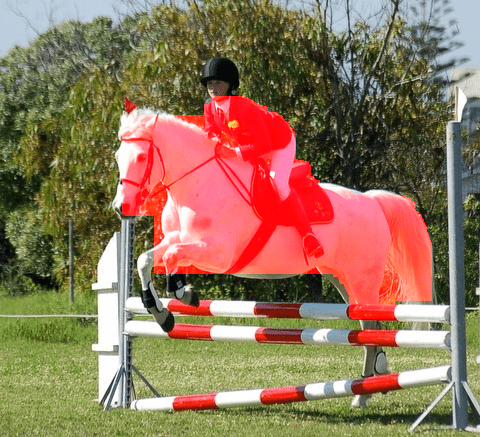}}
\subfigure[Square]{\includegraphics[trim={0 2.5cm 0 0},clip,width=0.12\linewidth]{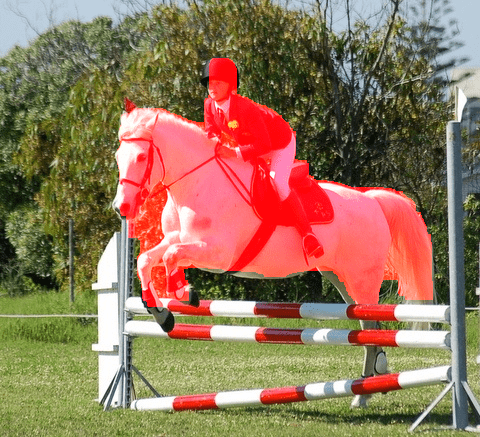}}
\subfigure[Biconvex]{\includegraphics[trim={0 2.5cm 0 0},clip,width=0.12\linewidth]{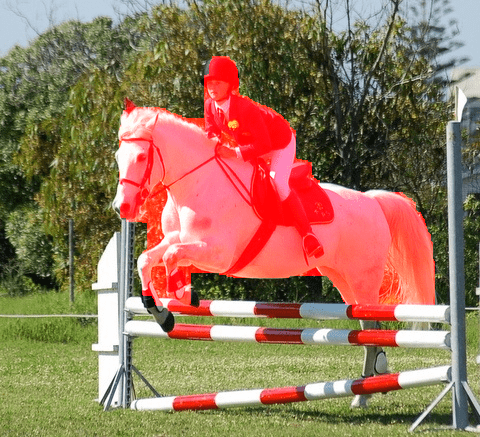}}
\caption{\label{fig:segmentation2} The segmentation results of prediction trained with Hamming loss (columns 2 and 6), the square loss (columns 3 and 7) and the biconvex loss (columns 4 and 8). The supermodular loss functions perform better on foreground object boundary than Hamming loss does, as well as they achieve better prediction on the elongated structure of the foreground object e.g.the heads and the legs.}
\end{figure*}

\begin{figure*}[]
\centering
\subfigure[]{\includegraphics[width=0.225\linewidth]{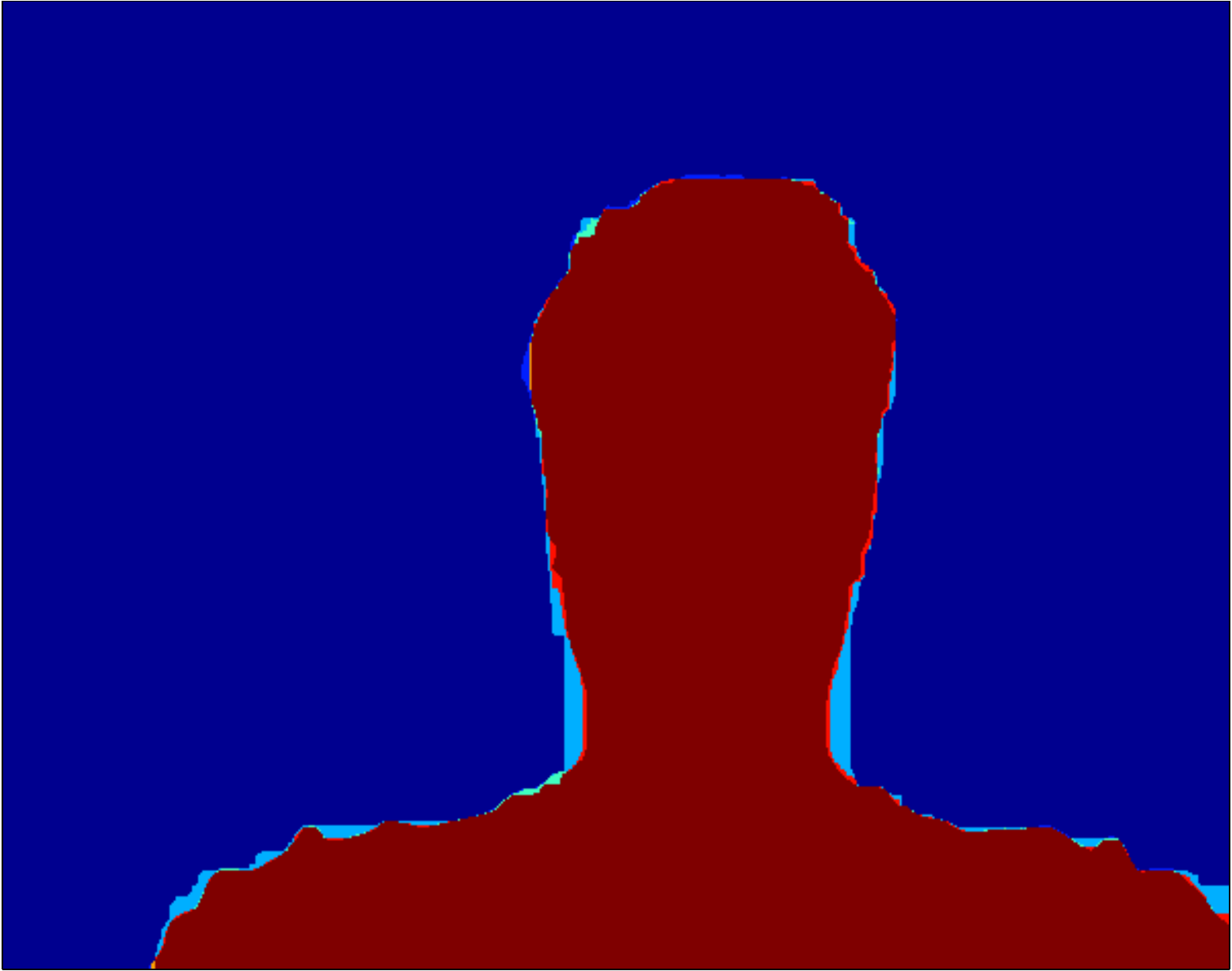}}
\subfigure[]{\includegraphics[width=0.225\linewidth]{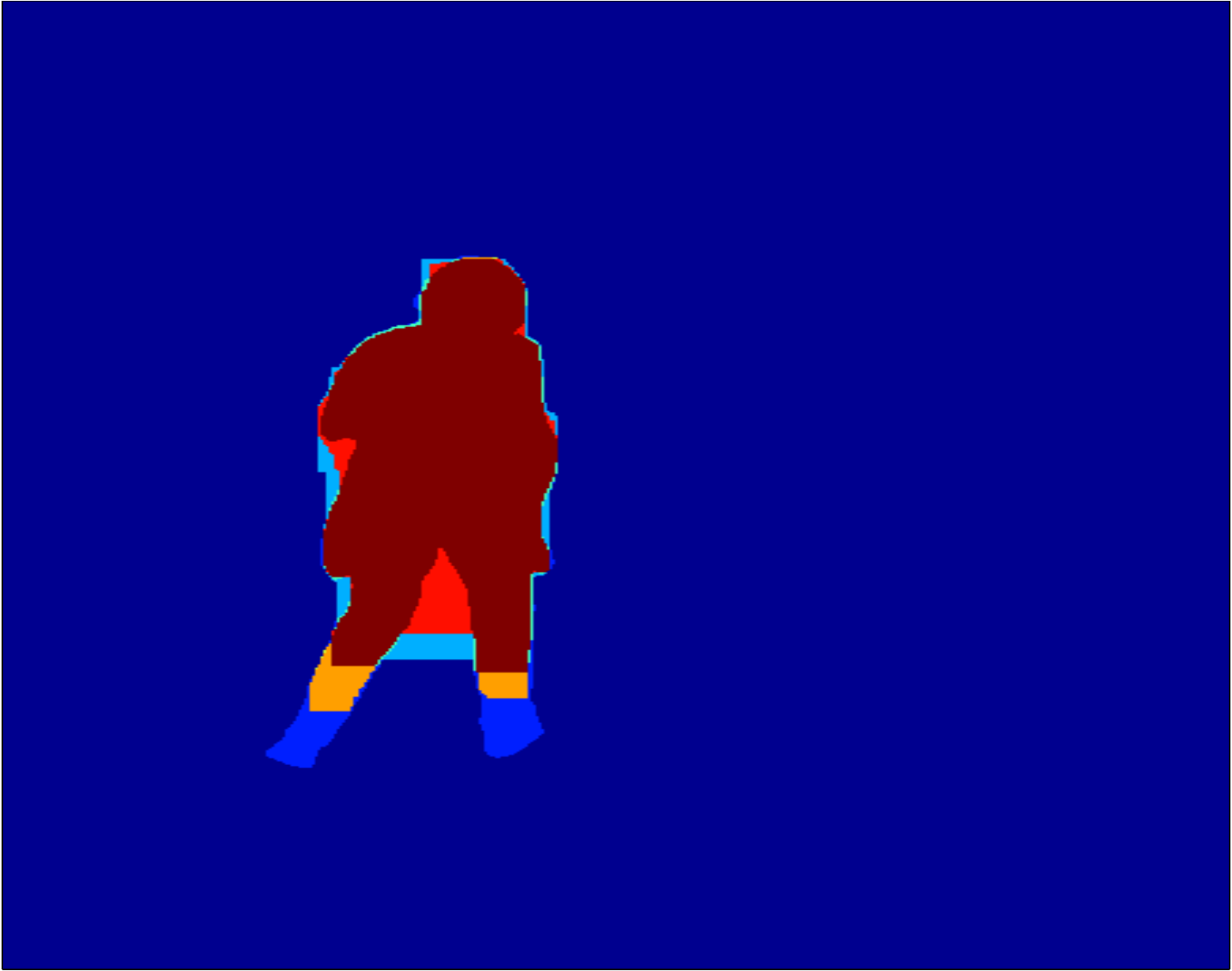}}
\subfigure[]{\includegraphics[width=0.225\linewidth]{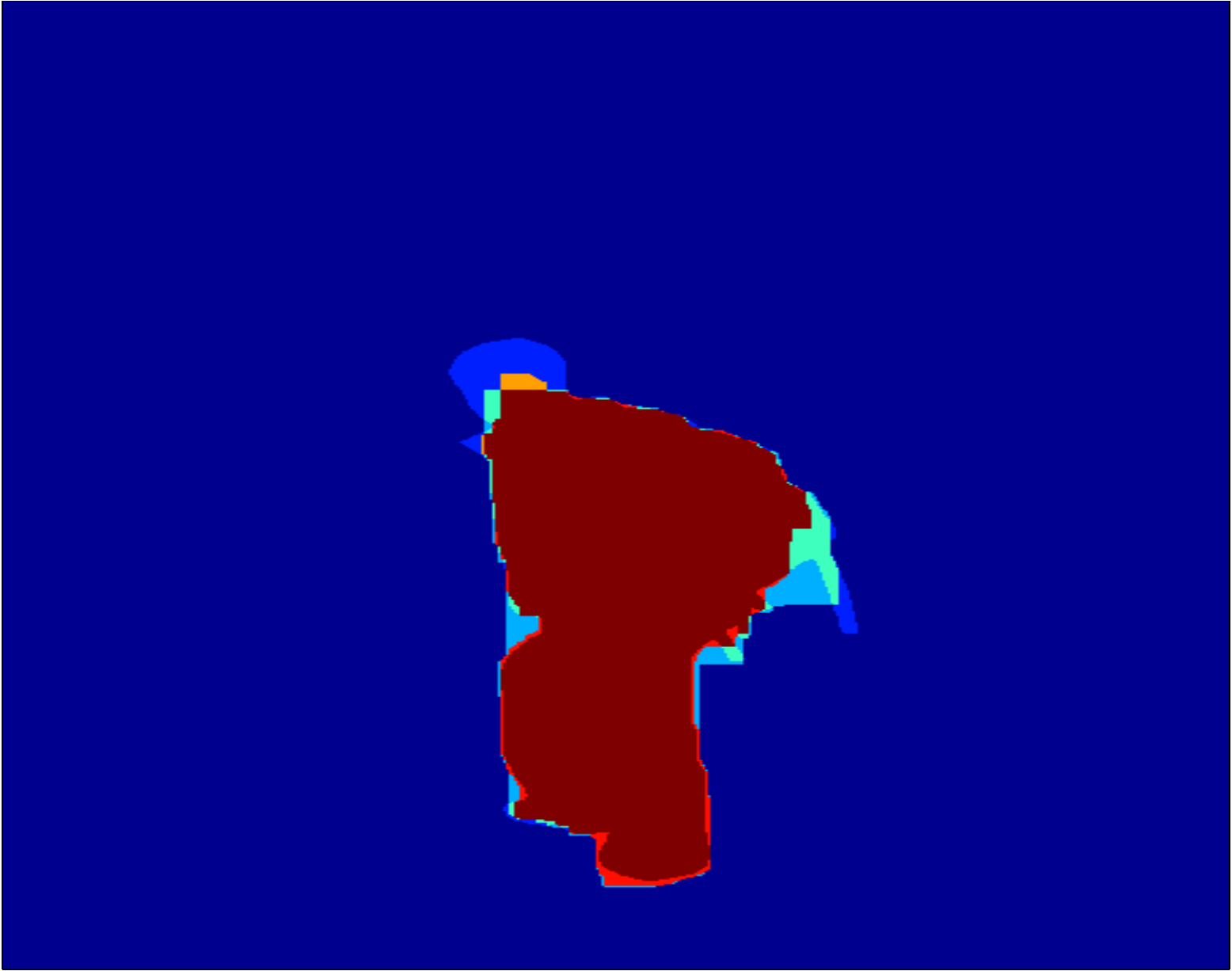}}
\subfigure[]{\includegraphics[width=0.225\linewidth]{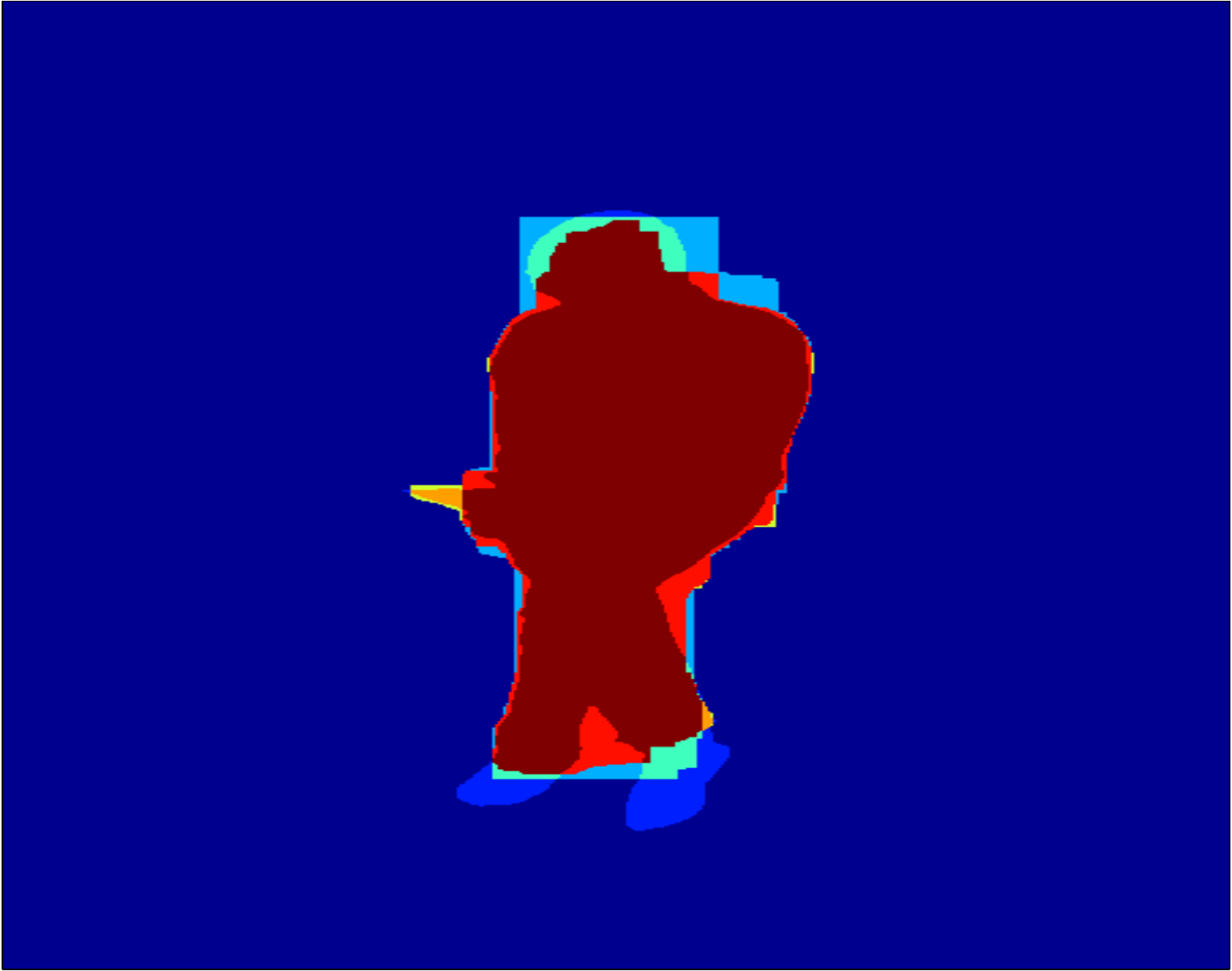}}
\subfigure[]{\includegraphics[width=0.225\linewidth]{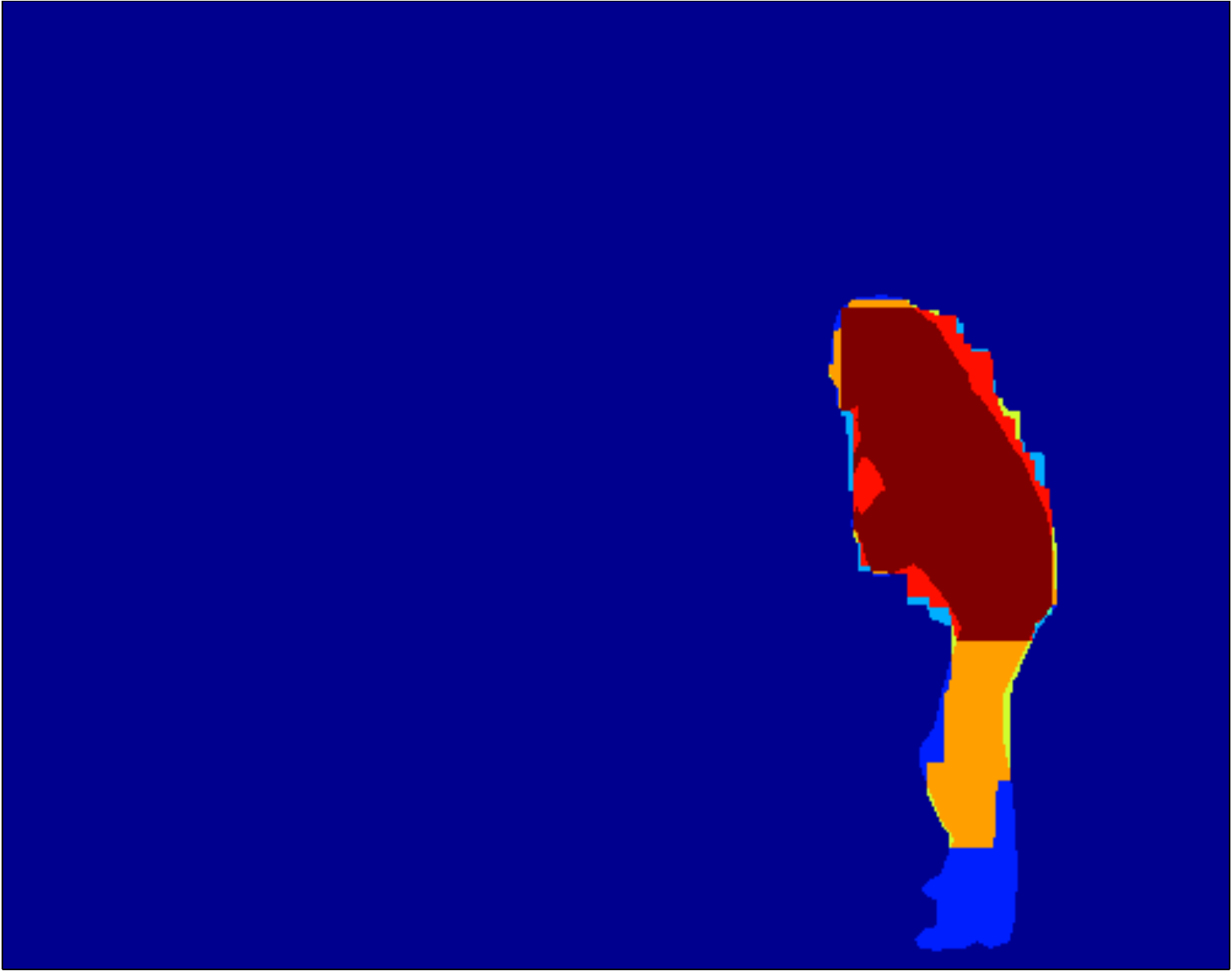}}
\subfigure[]{\includegraphics[width=0.225\linewidth]{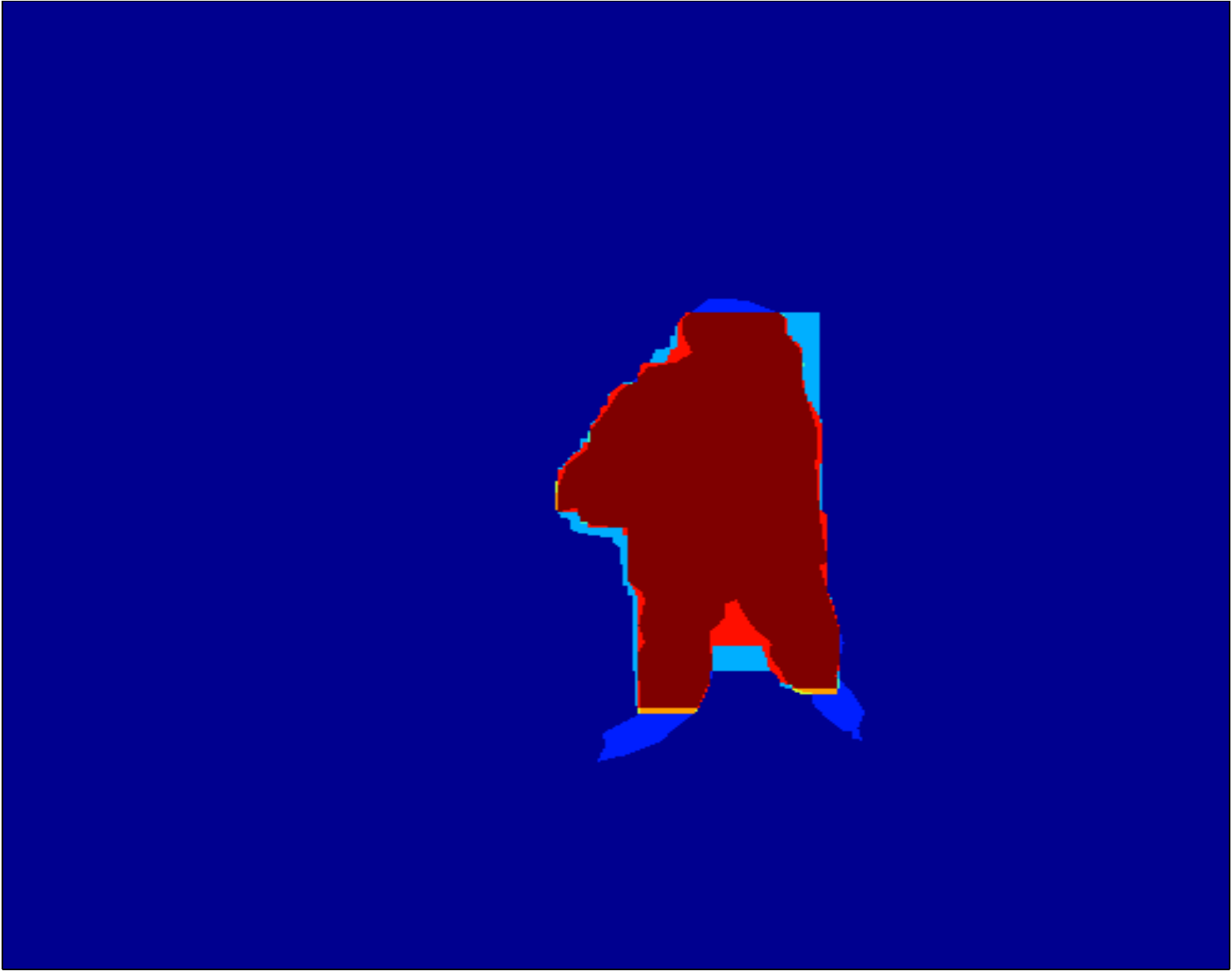}}
\subfigure[]{\includegraphics[width=0.225\linewidth]{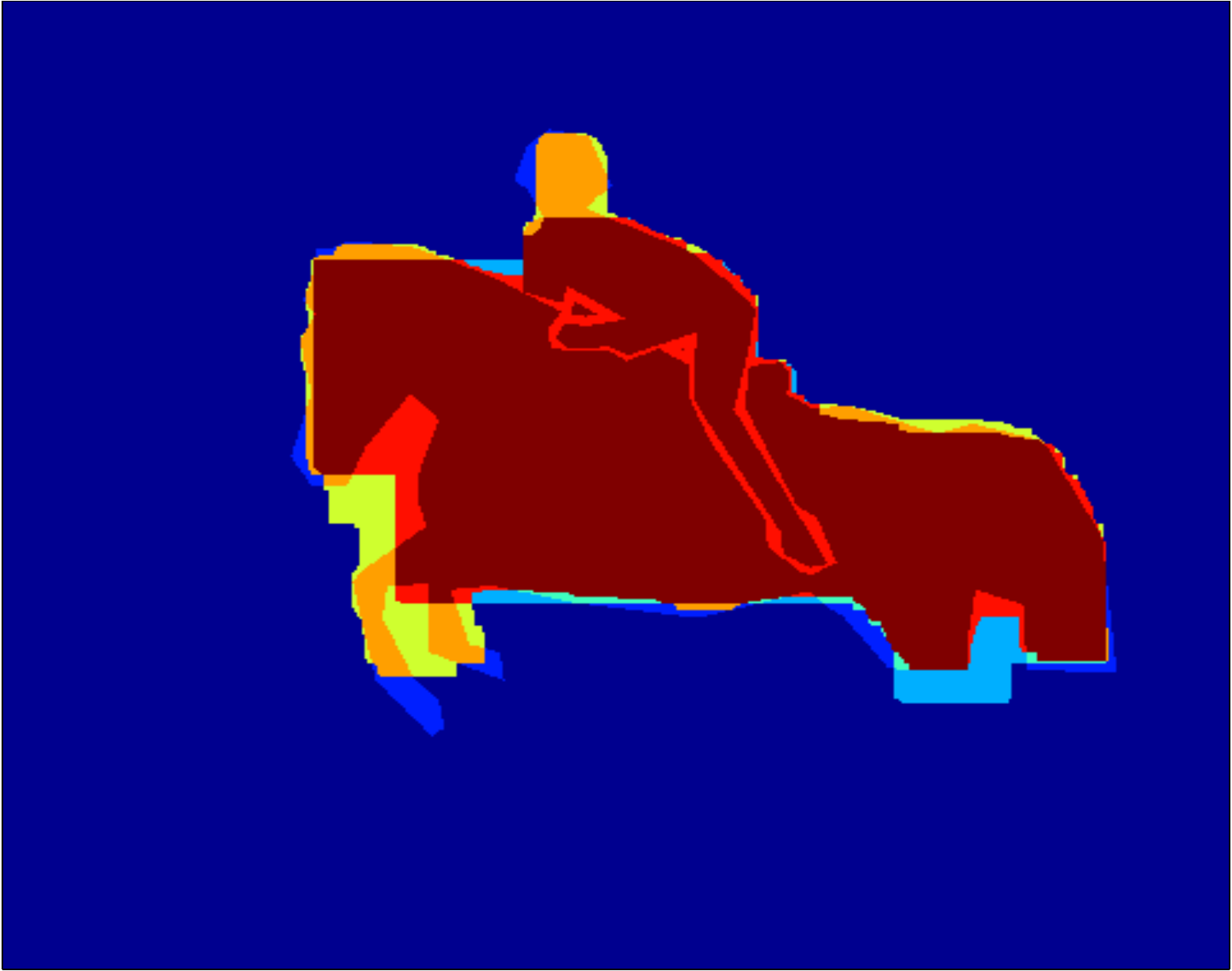}}
\subfigure[]{\includegraphics[width=0.225\linewidth]{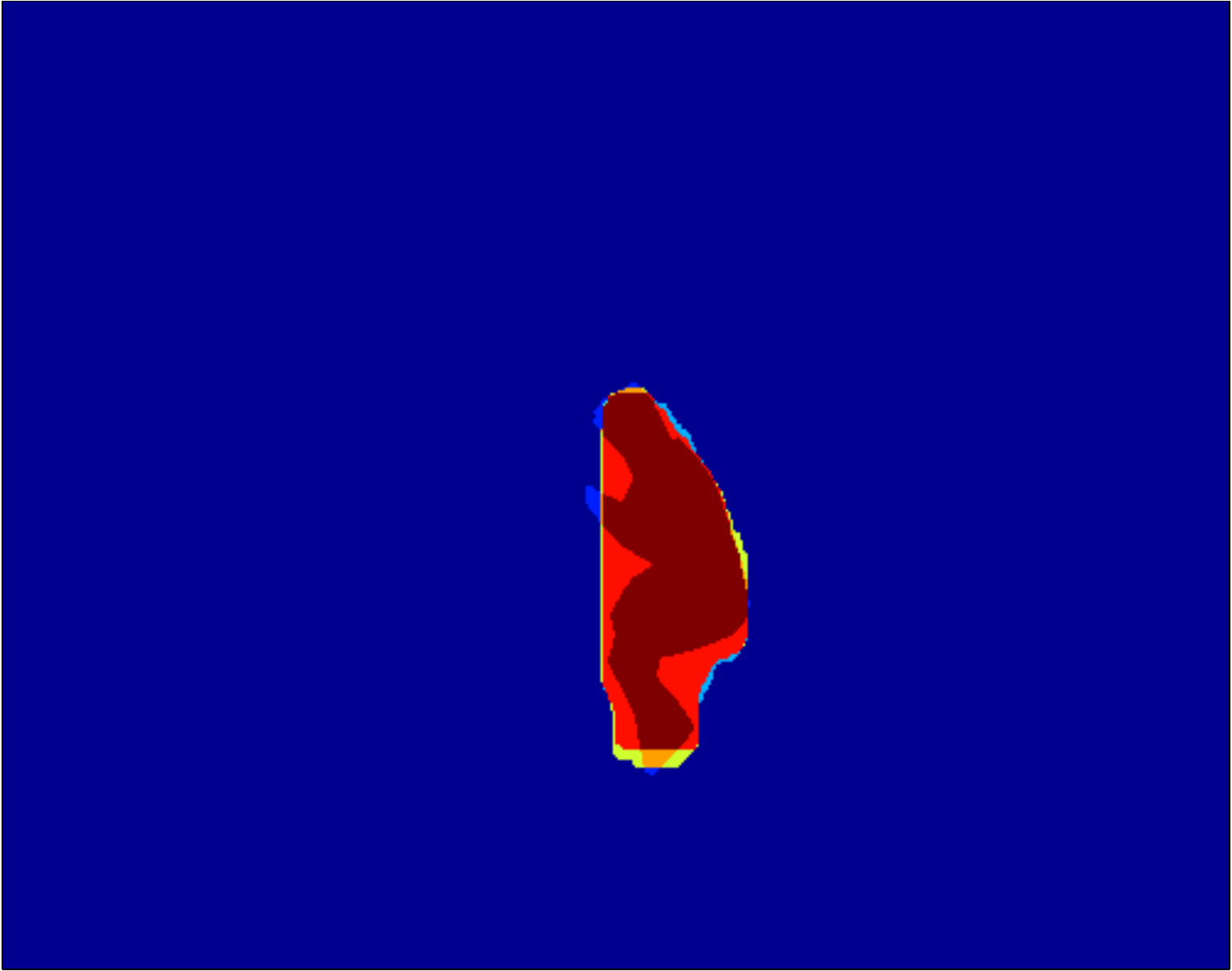}}
{\tiny
\begin{tabular}{p{0.05\linewidth}p{0.05\linewidth}p{0.05\linewidth}p{0.05\linewidth}p{0.05\linewidth}p{0.05\linewidth}p{0.05\linewidth}p{0.05\linewidth}}
\includegraphics[width=1\linewidth]{SegmentationComparisonLegend0-crop.pdf} &
\includegraphics[width=1\linewidth]{SegmentationComparisonLegend1-crop.pdf} &
\includegraphics[width=1\linewidth]{SegmentationComparisonLegend2-crop.pdf} &
\includegraphics[width=1\linewidth]{SegmentationComparisonLegend3-crop.pdf} &
\includegraphics[width=1\linewidth]{SegmentationComparisonLegend4-crop.pdf} &
\includegraphics[width=1\linewidth]{SegmentationComparisonLegend5-crop.pdf} &
\includegraphics[width=1\linewidth]{SegmentationComparisonLegend6-crop.pdf} &
\includegraphics[width=1\linewidth]{SegmentationComparisonLegend7-crop} \\
\raggedright
$g=-1$ $h=-1$ $s=-1$ & 
\raggedright
$g=+1$ $h=-1$ $s=-1$& 
\raggedright
$g=-1$ $h=+1$ $s=-1$& 
\raggedright
$g=+1$ $h=+1$ $s=-1$& 
\raggedright
$g=-1$ $h=-1$ $s=+1$& 
\raggedright
$g=+1$ $h=-1$ $s=+1$& 
\raggedright
$g=-1$ $h=+1$ $s=+1$& 
\raggedright
$g=+1$ $h=+1$ $s=+1$\\
\end{tabular}
}
\caption{\label{fig:segDiffSquare} A pixelwise comparison of the ground truth (denoted $g$ in the legend), the prediction from training with Hamming loss (denoted $h$), and the prediction when training with the square loss in Equation~\eqref{eq:squareloss} (denoted $s$). The {\color{orange} orange} regions ($g=+1$, $h=-1$, and $s=+1$) show where the supermodular loss learns to correctly predict the foreground when Hamming loss fails; the {\color{cyan} cyan} regions ($g=-1$, $h=+1$, and $s=-1$) show where the supermodular loss learns to correctly predict the background when Hamming loss fails.
 } 
\end{figure*}

\begin{figure*}[]
\centering
\subfigure[]{\includegraphics[width=0.225\linewidth]{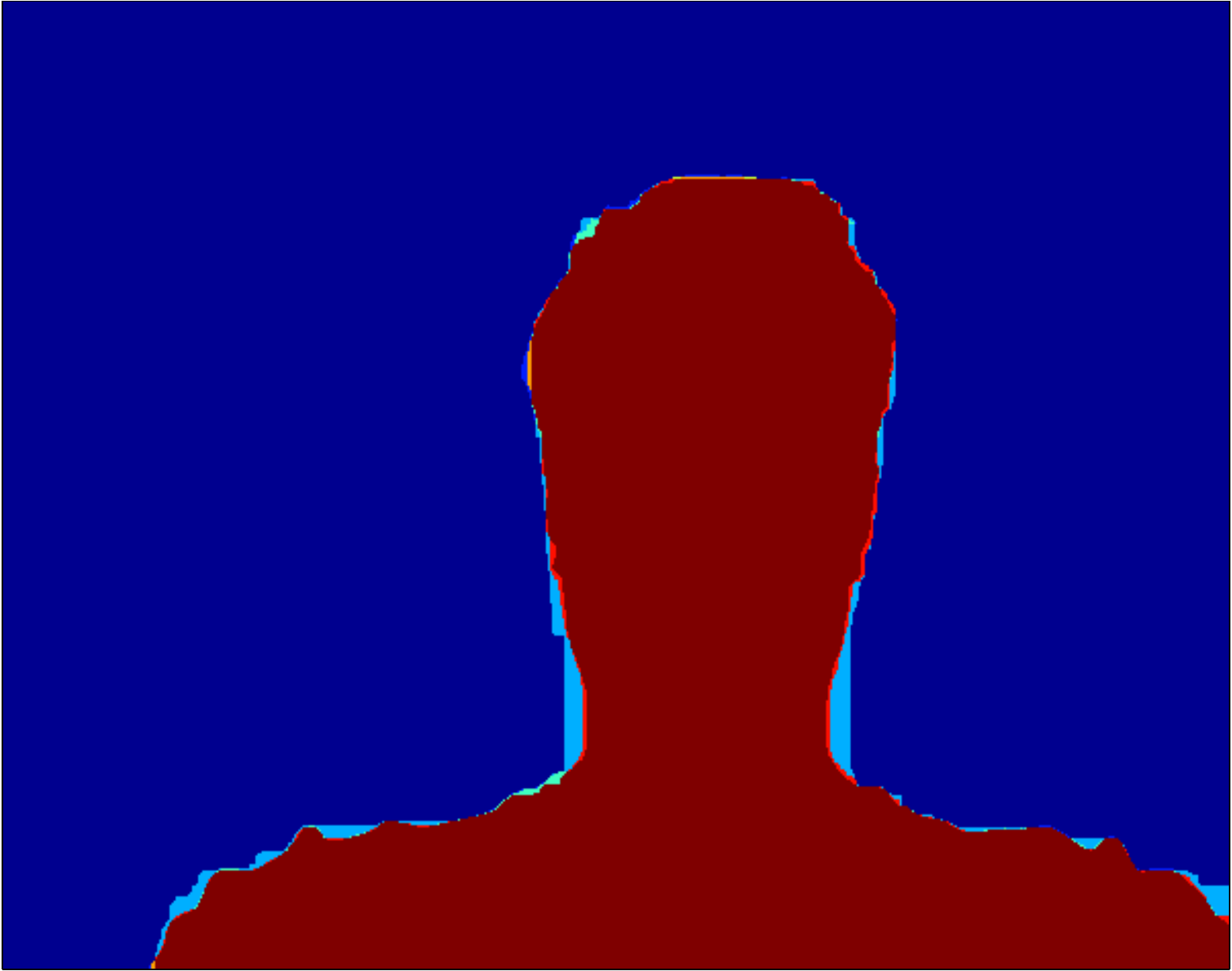}}
\subfigure[]{\includegraphics[width=0.225\linewidth]{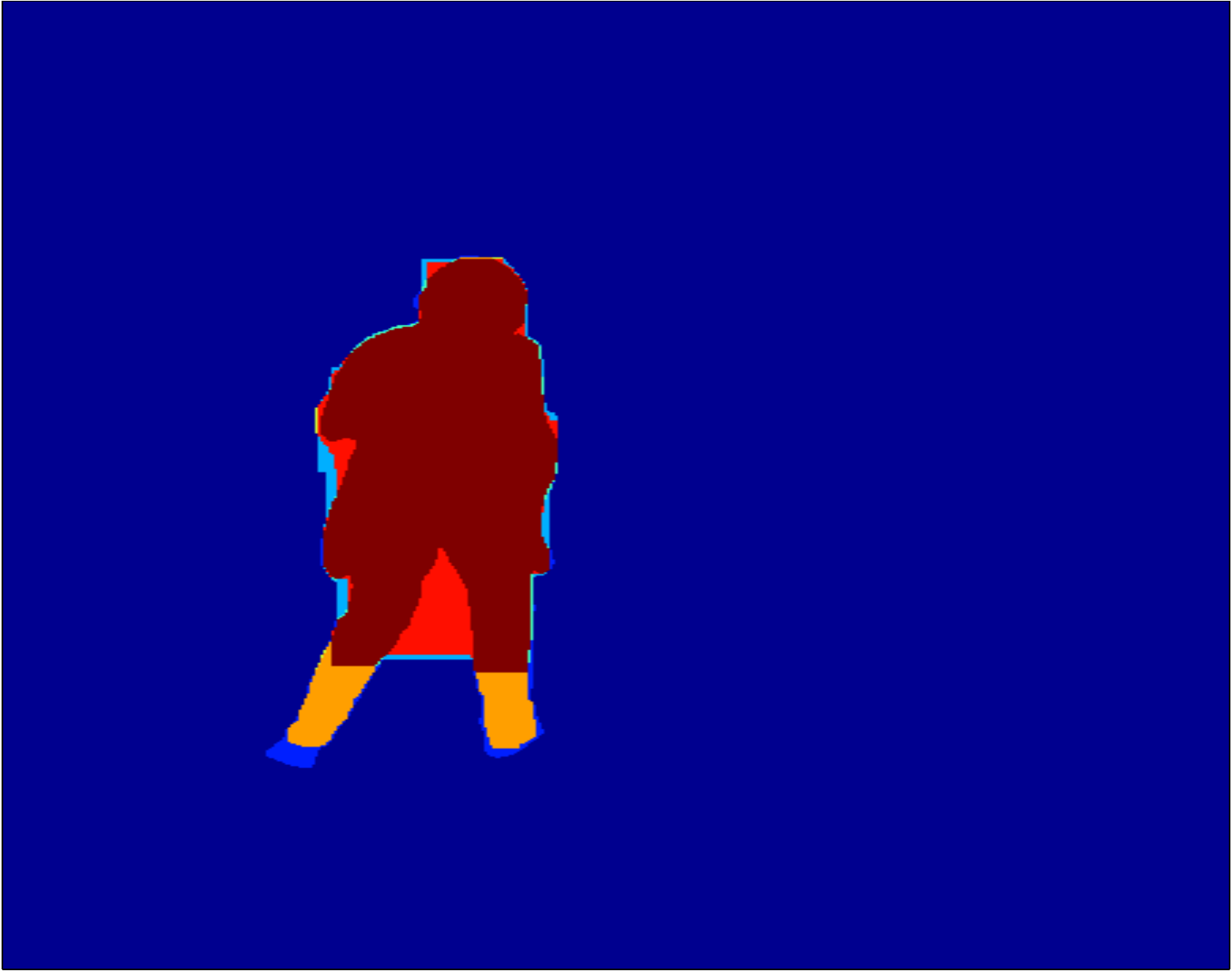}}
\subfigure[]{\includegraphics[width=0.225\linewidth]{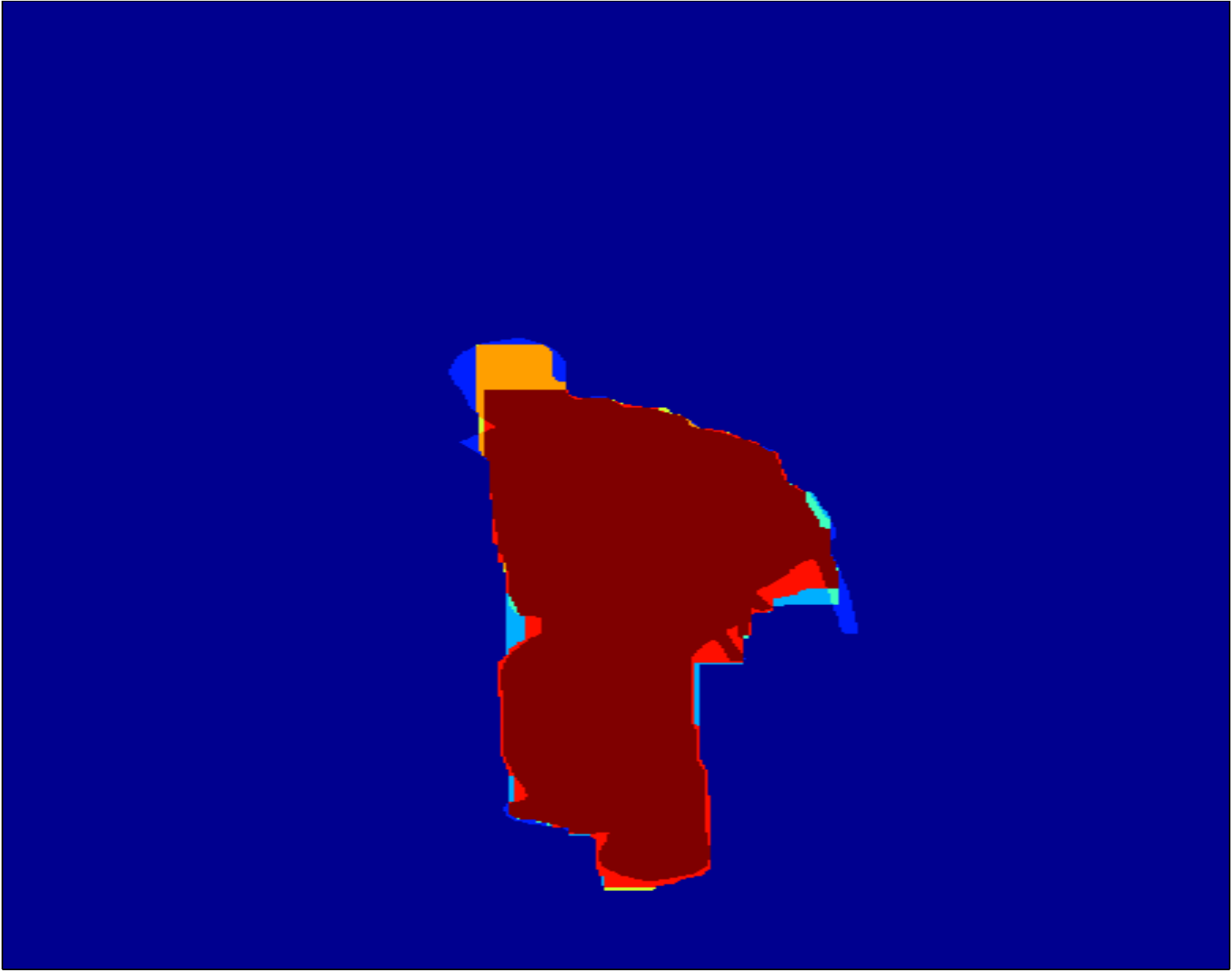}}
\subfigure[]{\includegraphics[width=0.225\linewidth]{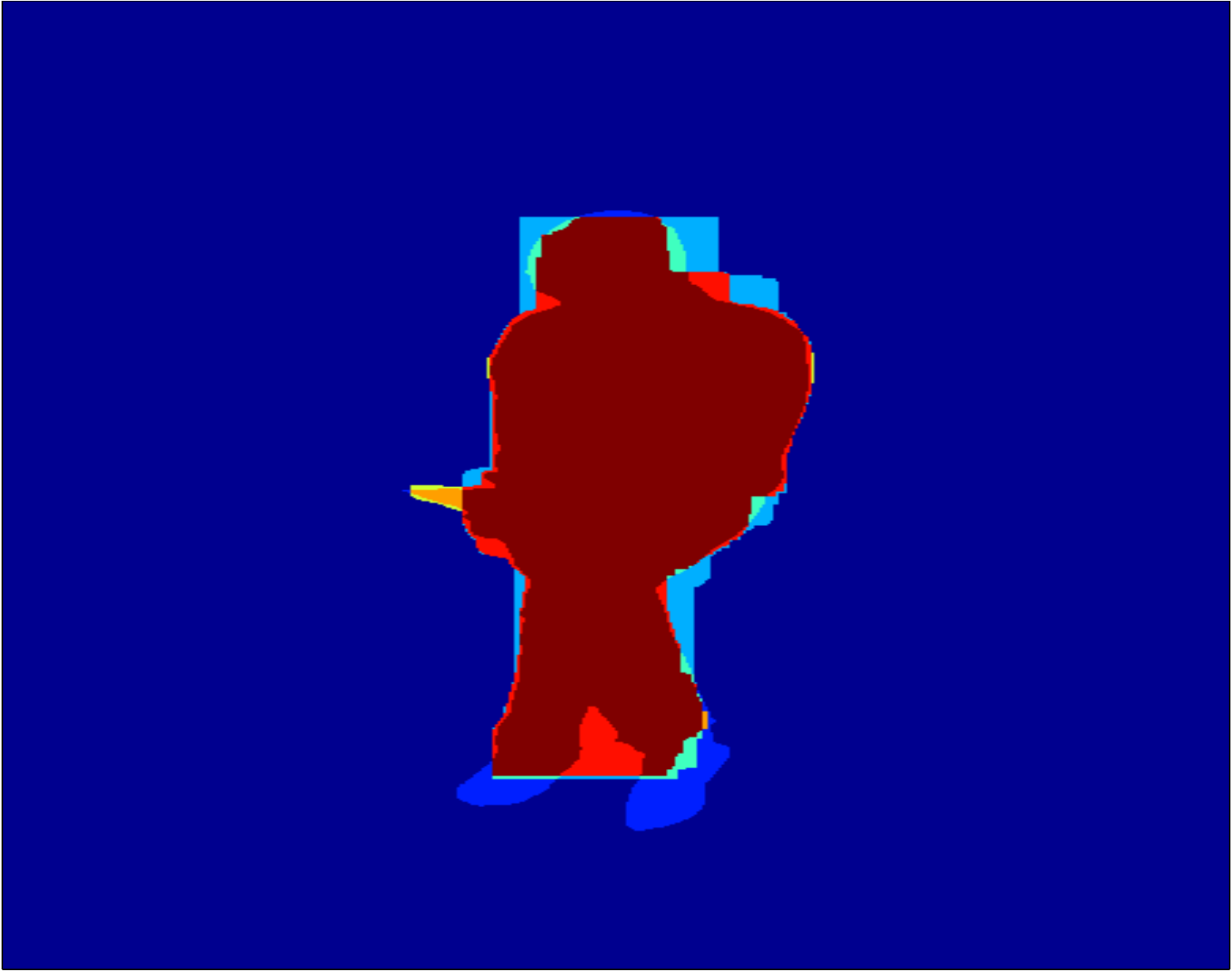}}
\subfigure[]{\includegraphics[width=0.225\linewidth]{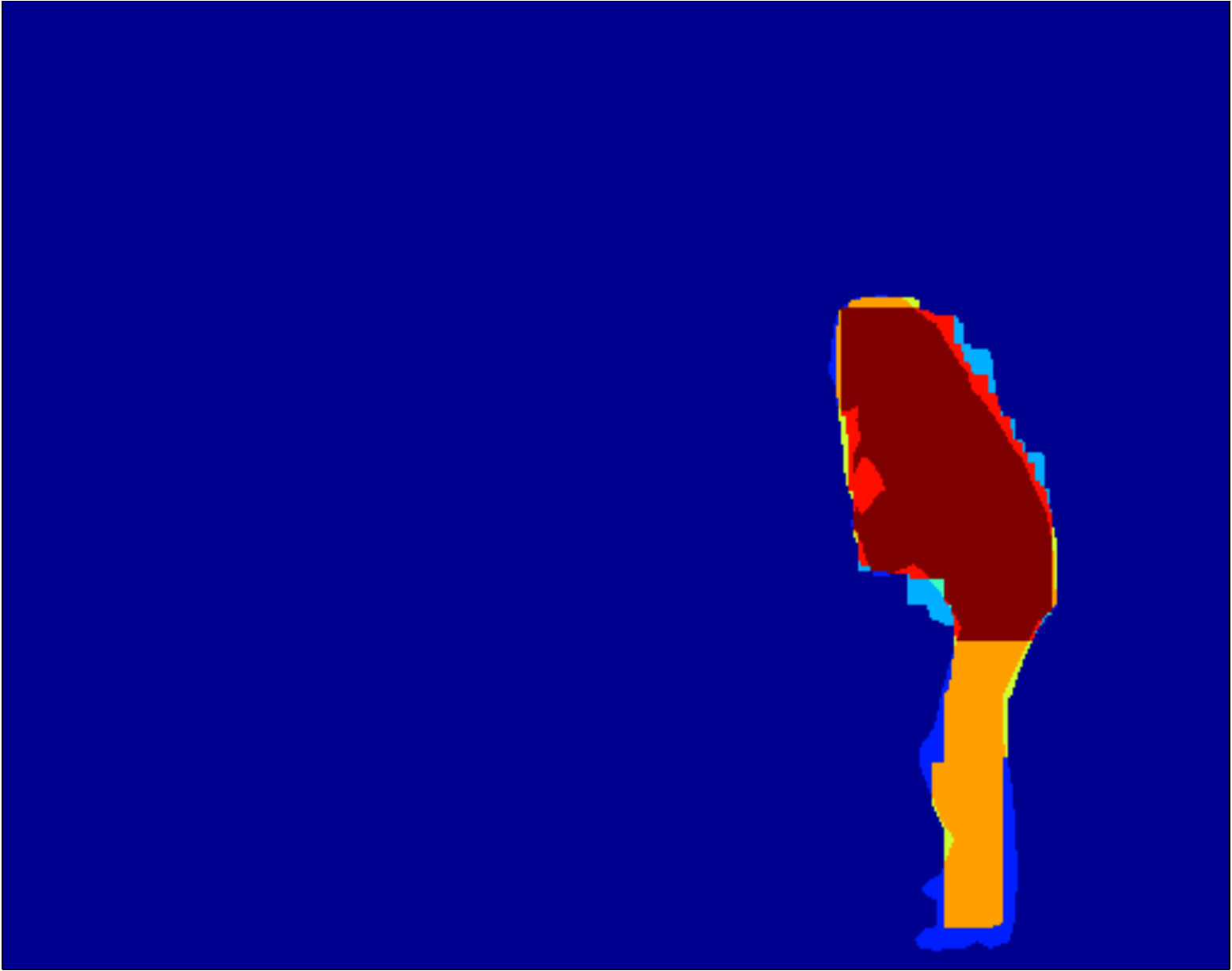}}
\subfigure[]{\includegraphics[width=0.225\linewidth]{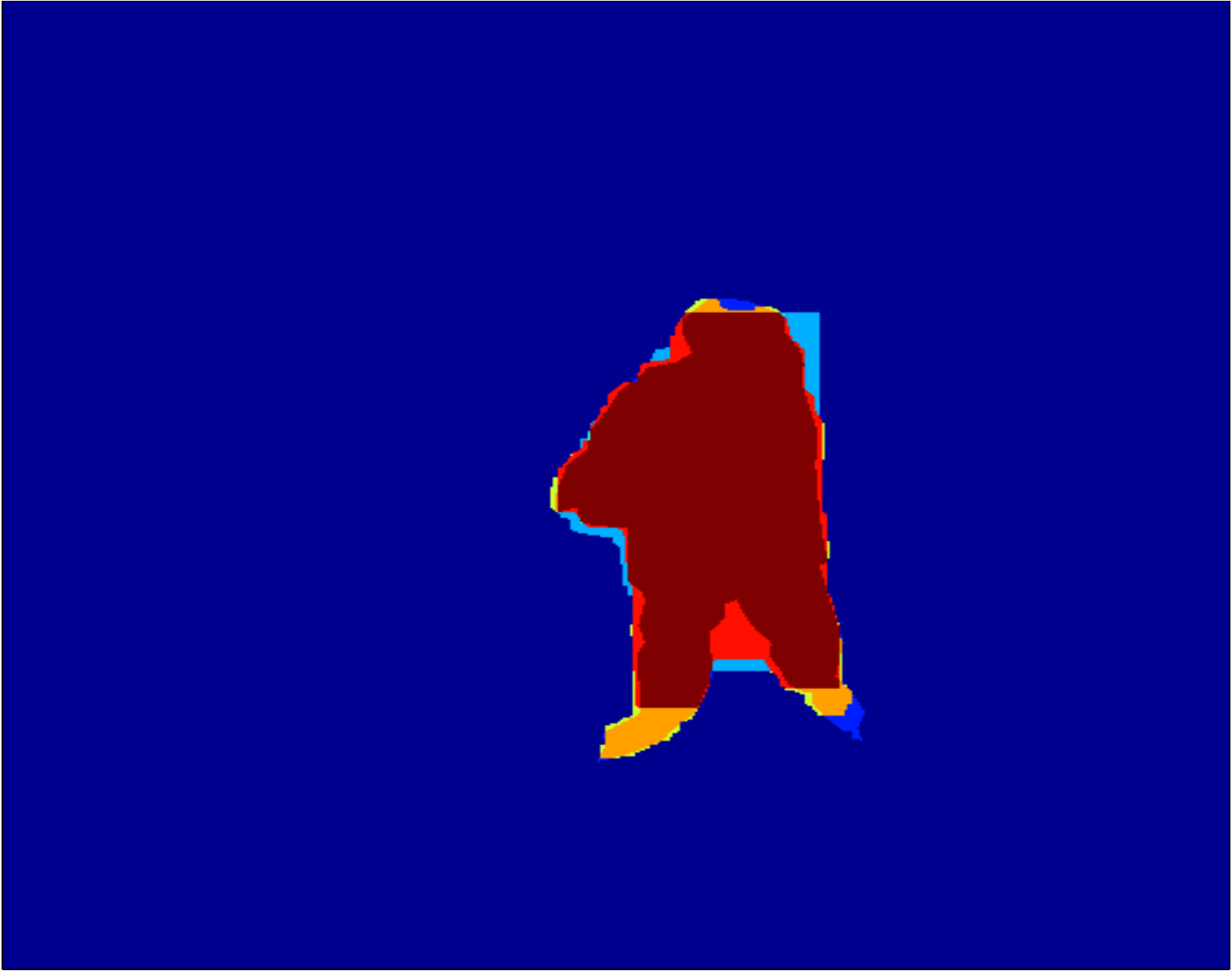}}
\subfigure[]{\includegraphics[width=0.225\linewidth]{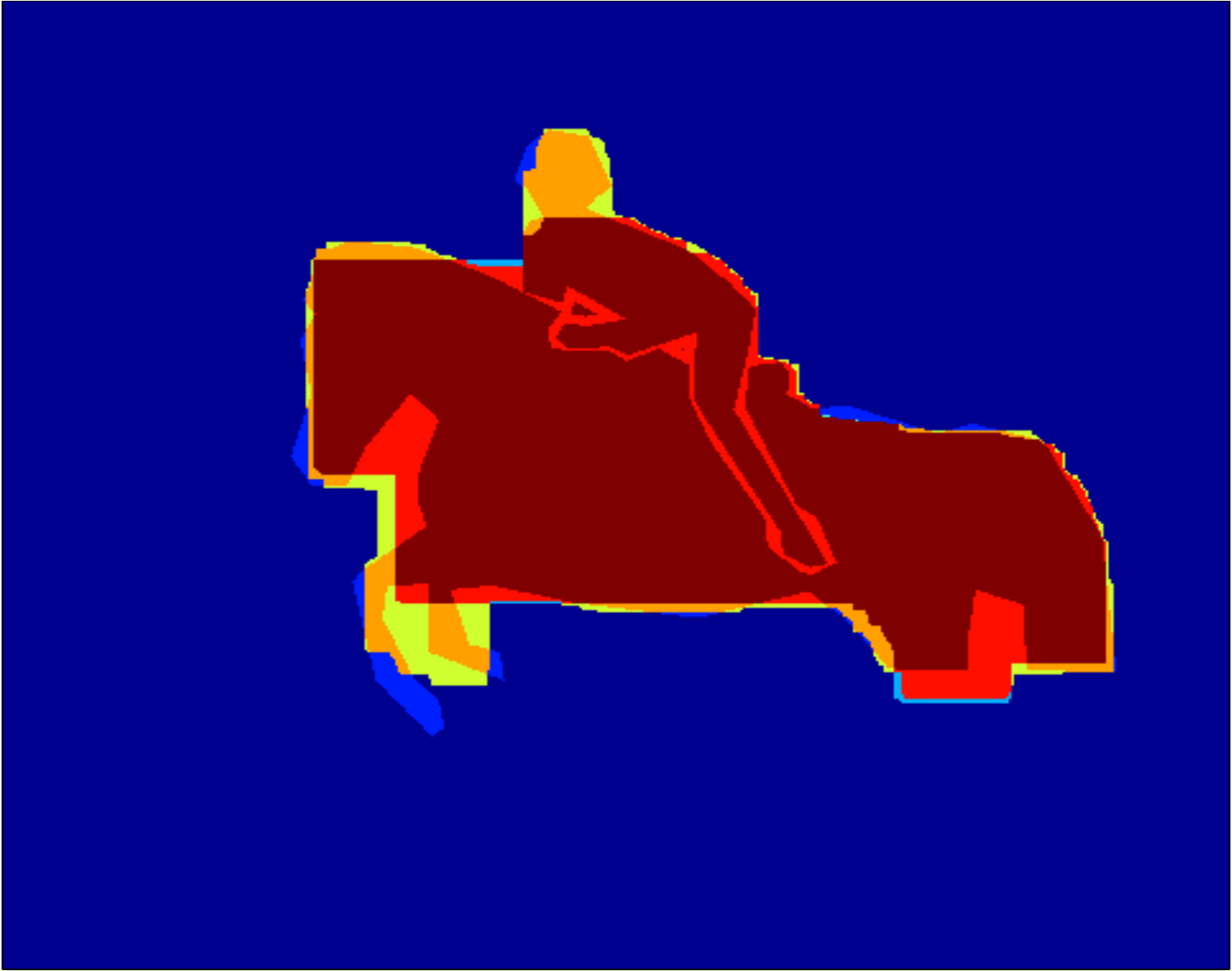}}
\subfigure[]{\includegraphics[width=0.225\linewidth]{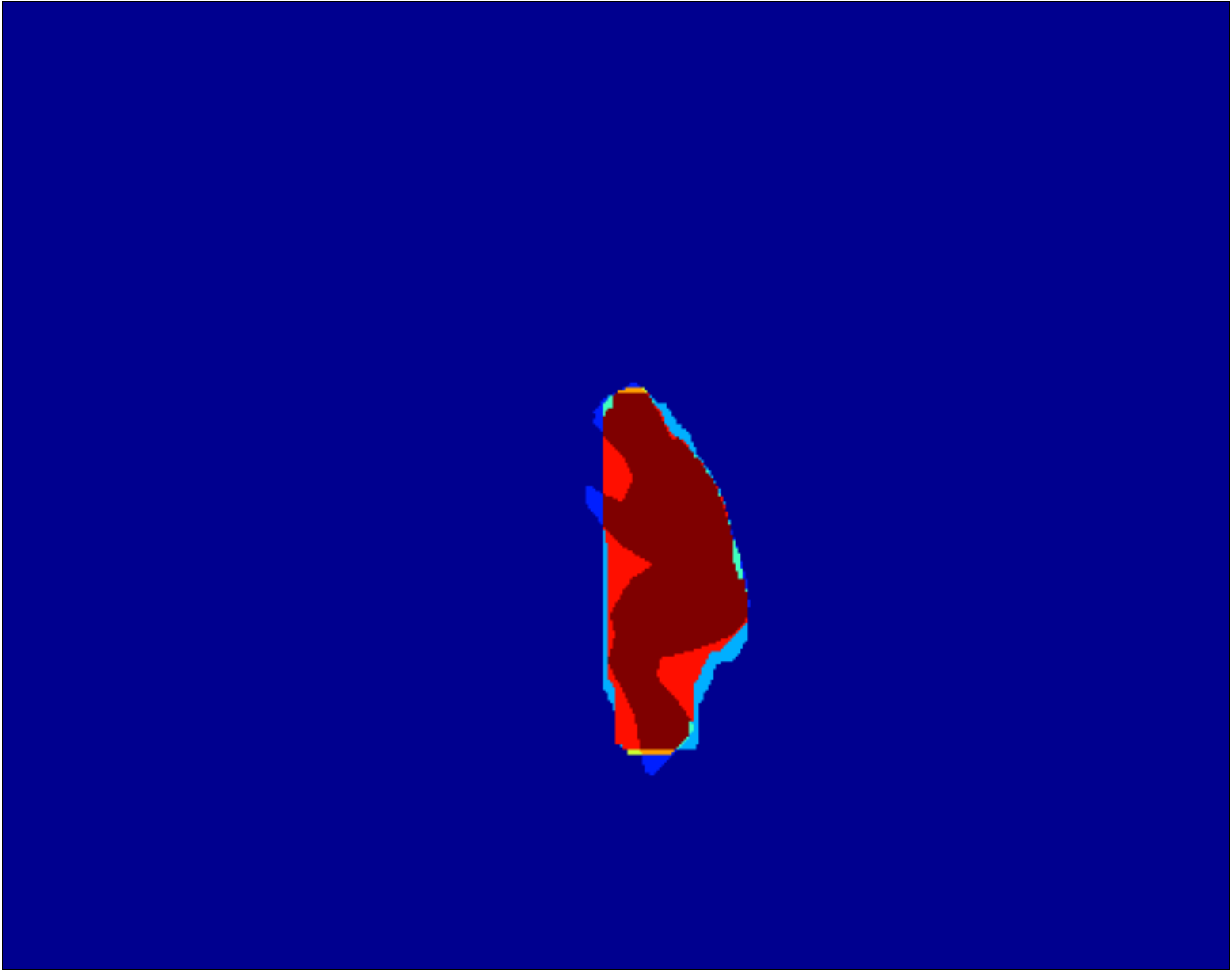}}
{\tiny
\begin{tabular}{p{0.05\linewidth}p{0.05\linewidth}p{0.05\linewidth}p{0.05\linewidth}p{0.05\linewidth}p{0.05\linewidth}p{0.05\linewidth}p{0.05\linewidth}}
\includegraphics[width=1\linewidth]{SegmentationComparisonLegend0-crop.pdf} &
\includegraphics[width=1\linewidth]{SegmentationComparisonLegend1-crop.pdf} &
\includegraphics[width=1\linewidth]{SegmentationComparisonLegend2-crop.pdf} &
\includegraphics[width=1\linewidth]{SegmentationComparisonLegend3-crop.pdf} &
\includegraphics[width=1\linewidth]{SegmentationComparisonLegend4-crop.pdf} &
\includegraphics[width=1\linewidth]{SegmentationComparisonLegend5-crop.pdf} &
\includegraphics[width=1\linewidth]{SegmentationComparisonLegend6-crop.pdf} &
\includegraphics[width=1\linewidth]{SegmentationComparisonLegend7-crop} \\
\raggedright
$g=-1$ $h=-1$ $s=-1$ & 
\raggedright
$g=+1$ $h=-1$ $s=-1$& 
\raggedright
$g=-1$ $h=+1$ $s=-1$& 
\raggedright
$g=+1$ $h=+1$ $s=-1$& 
\raggedright
$g=-1$ $h=-1$ $s=+1$& 
\raggedright
$g=+1$ $h=-1$ $s=+1$& 
\raggedright
$g=-1$ $h=+1$ $s=+1$& 
\raggedright
$g=+1$ $h=+1$ $s=+1$\\
\end{tabular}
}
\caption{\label{fig:segDiffBiconvex} A pixelwise comparison of the ground truth (denoted $g$ in the legend), the prediction from training with Hamming loss (denoted $h$), and the prediction when training with the biconvex loss in Equation~\eqref{eq:biconvexloss} (denoted $s$). The {\color{orange} orange} regions ($g=+1$, $h=-1$, and $s=+1$) show where the supermodular loss learns to correctly predict the foreground when Hamming loss fails; the {\color{cyan} cyan} regions ($g=-1$, $h=+1$, and $s=-1$) show where the supermodular loss learns to correctly predict the background when Hamming loss fails.
 } 
\end{figure*}

\section{Discussion and Conclusion}

A somewhat surprising result in Table~\ref{tab:loss} is that training with the supermodular loss results in better performance \emph{as measured by Hamming loss}.  This has been previously observed with a different loss function by \cite{pletscher2012learning,osokin2014perceptually}, and indicates that in the finite sample regime a supermodular likelihood can result in better generalization performance.  This holds, although the model space and regularizer were identical in both training settings.  We have observed the same effect with the other two supermodular loss functions, $\Delta_S$ and $\Delta_C$, indicating that this may be a broader property of supermodular loss functions.  

Our results in terms of computation time give clear evidence for the superiority of ADMM inference when a specialized optimization procedure is available for the loss function.  As shown in Figure~\ref{fig:timeBox}, the Fujishige-Wolfe minimum norm point algorithm does not scale to typical consumer images (i.e.\ several megapixels), which indicates that loss functions for which a specialized optimization procedure is not available are likely infeasible for pixel level image segmentation without unprecedented improvements in general submodular minimization.  Figure~\ref{fig:timeBox} shows that the log-log slope of the runtime for the min-norm point algorithm is higher than for ADMM, suggesting a worse computational compexity. 
One may wish to employ the result that early termination of the min-norm point algorithm gives a guaranteed approximation of the exact result, but even this is infeasible for images of the size considered here. 
In addition, Table~\ref{tab:ADMMvsLP} suggests that ADMM provides a more efficient strategy without lost of performance compared to using an LP-relaxation.
Joint graph-cuts optimization for loss augmented inference results in non-submodular pairwise potentials and graph-cuts fails to correctly minimize the joint energy.  As a result, a cutting plane optimization of the structured output SVM objective fails catastrophically, and the resulting accuracy is on par with a random weight vector.  Consequently, the ADMM technique yielded the \emph{only} feasible training strategy.


In this work, we propose three novel supermodular loss functions. We have shown that using supermodular loss functions achieves improved performance both in qualitative and quantitative terms on a binary segmentation task. 
We observe that a key advantage of the proposed supermodular losses over modular losses, e.g.\ Hamming loss, is an improved ability to find elongated regions such as heads and legs, or thin articulated structures in medical images.

Previous to our work, specialized inference procedures had to be developed for every model/loss pair, a time consuming process.  By contrast, we have proposed a Lagrangian splitting technique based on ADMM to perform general loss augmented inference.
We demonstrate the feasibility of the ADMM algorithm for loss augmented inference on an interactive foreground/background segmentation task, for which alternate strategies such as the Fujishige-Wolfe minimum norm point algorithm are infeasible.
Our proposed ADMM algorithm provides a strategy to solve the loss augmented inference as two separate subproblems. This provides an alternate API for the structured output SVM framework to that of SVMstruct \cite{tsochantaridis2005large}. We envision that this can be of use in a wide range of application settings, and an open source general purpose toolbox for this efficient segmentation framework with supermodular losses is available for download from \url{https://github.com/yjq8812/efficientSegmentation}.



\bibliographystyle{plain}      
\bibliography{iccvbib}   

\end{document}